\tikzstyle{block} = [ rectangle, draw, fill=white, text width=5em, text centered, rounded corners, minimum height=4em ]
\tikzstyle{line} = [ draw, -latex' ]
\newcommand*{\addFileDependency}[1]{
  \typeout{(#1)}
  \@addtofilelist{#1}
  \IfFileExists{#1}{}{\typeout{No file #1.}}
}
\def\R{\mathbb{R}}
\def\bc{\mathbf{c}}
\DeclareMathOperator*{\argmin}{argmin}
\newtheorem{lemma}{Lemma}
\newtheorem{theorem}{Theorem}
\newtheorem{definition}{Definition}
\newtheorem{assumption}{Assumption}
\newtheorem{remark}{Remark}
\begin{document}

%

%

\onecolumn

\title{Compressing Large Sample Data for Discriminant Analysis}
\author{Alexander F. Lapanowski
\and
Irina Gaynanova
}
\date{Texas A\&M University\\
\texttt{\{alapanow, irinag\}@stat.tamu.edu}}
\maketitle

\begin{abstract}
Large-sample data became prevalent as data acquisition became cheaper and easier. While a large sample size has theoretical advantages for many statistical methods, it presents computational challenges. Sketching, or compression, is a well-studied approach to address these issues in regression settings, but considerably less is known about its performance in classification settings. Here we consider the computational issues due to large sample size within the discriminant analysis framework. We propose a new compression approach for reducing the number of training samples for linear and quadratic discriminant analysis, in contrast to existing compression methods which focus on reducing the number of features. We support our approach with a theoretical bound on the misclassification error rate compared to the Bayes classifier. Empirical studies confirm the significant computational gains of the proposed method and its superior predictive ability compared to random sub-sampling.

\end{abstract}

\section{Introduction}

Linear Discriminant Analysis (LDA) \citep{mardia79} is a linear classification rule which separates the classes by maximizing between-class variability compared to within-class variability. Applying LDA requires constructing the within-class covariance matrix, which has complexity $O(n\,p^{2})$ in the number of training samples $n$ and number of features $p$. As large-sample data acquisition became prevalent, it became computationally expensive to apply LDA to such data even for moderately-sized $p$.

Compression \citep{boutsidis2009random, pilanci2015randomized,pilanci2016iterative, vempala2005random, mahoney2011randomized}, or sketching, is a popular approach for scaling algorithms to large data. Given the training data $X\in \mathbb{R}^{n\times p}$, compression uses a random matrix $Q$ to either reduce the number of rows (samples) or columns (features) in $X$. The corresponding reduced-size $QX$ or $XQ$ is called a sketch of the original $X$. The sketch is used in place of $X$ to approximate the solution of the full algorithm. For example, compression is used in least-squares regression 
\citep{drineas2011faster, mahoney2011randomized}; 
non-negative least-squares regression \citep{boutsidis2009random}; ridge regression \citep{wang2017sketched, homrighausen2019compressed} and $\ell^1$-penalized regression \citep{zhou2008compressed}. Compression for a broader class of convex minimization problems  is considered in \citet{pilanci2016iterative}.

Despite the widespread use of compression in regression contexts, and considerable progress in theoretical understanding of its performance in regression, compression for sample reduction has not been widely used in discriminant analysis.  Additionally, existing results on compression due to large $n$ in the regression literature \citep{wang2017sketched, homrighausen2019compressed} can not be applied to discriminant analysis. In regression, the training data $X\in \mathbb{R}^{n \times p}$ is treated as fixed, with continuous response $Y\in \mathbb{R}^{n}$ modeled conditionally on $X$. In contrast, in discriminant analysis the observations in $X\in \mathbb{R}^{n \times p}$ are treated as random, and are modeled conditionally on the discrete class membership $Y\in \{1,2\}^n$. Thus, the theoretical analysis of sample compression in LDA requires different techniques than for regression.

There is an extensive body of work considering feature compression in LDA, such as \cite{li2019one,tu2014making, chowdhury2018randomized, durrant2010compressed, durrant2012tight}.  However, these works focus on reducing the number of features $p$ while keeping the number of samples $n$ fixed. They do not consider the case where the computational bottleneck is due to the large number of samples $n$.

In this work, we address these challenges and bridge the existing gap between compression with large $n$ in regression and compression with large $n$ in discriminant analysis. Our work makes the following contributions:
\begin{itemize}
    \item We develop a new method, Compressed LDA, for large sample data that is based on \textit{separate} compression within each class in contrast to joint compression of existing approaches \citep{ye2017fast};
    \item We derive a finite-sample bound on misclassification error rate of Compressed LDA compared to the optimal error rate of the Bayes classifier;
    \item We extend Compressed LDA to the setting with unequal class covariance matrices leading to Compressed Quadratic Discriminant Analysis (QDA) \citep{hastie_elements_2009}, to our knowledge this it the first method that considers compression within the QDA context;
    \item We demonstrate significant computational advantages of our methods compared to discriminant analysis on the full data and their superior classification performance compared to methods based on random sub-sampling or joint compression \cite{ye2017fast}.
\end{itemize}

\subsection{Related Works}\label{sec:RelatedWorks}

Existing works on compression in LDA \citep{li2019one,tu2014making} focus on reducing the number of features $p$, and thus do not consider the case where the computational bottleneck is due to the large number of samples $n$. To our knowledge, the only exception is the Fast Random Fisher Discriminant Analysis (FRF) of \citet{ye2017fast}.

In \citet{ye2017fast}, the authors use joint compression of classes to form a sketch $QX\in \mathbb{R}^{m \times p}$, $m \ll n$, via a random matrix $Q\in \mathbb{R}^{m \times n}$, and then use the sketch within the generalized eigenvalue formulation of LDA to form the approximate discriminant vector $\beta_{\bc}\in \mathbb{R}^p$. The discriminant vector is applied to form the projected training data $\beta^{\top}_{\bc}\mathbf{x}_i\in \mathbb{R}$, which is used to train LDA instead of original $\mathbf{x}_i\in \mathbb{R}^p$. The $m$ compressed samples in $QX\in \mathbb{R}^{m \times p}$ are thus only used to form $\beta_c$. This is because these $m$ samples can not be assigned class labels, as multiplication by $Q$ allows mixing of both classes. Furthermore, due to this mixing, it is not possible to form class-specific covariance matrices based on compressed samples in $QX$, and thus the method of \citet{ye2017fast} cannot be extended to QDA.  In contrast, our method applies separate class compression, not only allowing an extension to QDA, but also leading to significantly better empirical performance (in terms of both lower error rate and lower variance).

Another difference between our work and the work of \citet{ye2017fast} is the corresponding theoretical analysis.  In \citet{ye2017fast}, the authors compare the compressed discriminant vector $\beta_{\mathbf{c}}$ to the discriminant vector $\widehat{\beta}$ based on the full data by deriving the bound on the difference of projection values
$
|(\mathbf{x}-\overline{X})^\top (\beta_{\mathbf{c}}-\widehat{\beta}) |,
$
where $\overline{X}= n^{-1}\sum_{i=1}^{n} \mathbf{x}_{i}$ is the training sample mean and  $\mathbf{x}\in \mathbb{R}^{p}$ is a random test sample. It is unclear, however, whether this bound directly translates into a similar difference in misclassficiation error rates, which is a more natural loss within a classification context. Furthermore, since the bound is provided with respect to $\widehat {\beta}$ rather than the true population $\beta^*$, it is unclear how the performance of the method of \citet{ye2017fast} compares to the performance of the Bayes classifier. In contrast, we directly analyze the misclassification error rate of the proposed Compressed LDA method, and we derive a finite-sample bound on its rate compared to the Bayes classifier.

In the regression literature on compression, the quality of the compressed solution $\beta_c$ is typically evaluated either by bounding mean-squared error compared to the underlying true parameter vector $\beta^*$ \citep{homrighausen2019compressed}, or by considering the $\varepsilon$-optimality. Let $f$ be the objective function that is minimized within the given algorithm (e.g. standard least-squares, $\ell_1$-penalized least-squares, etc.) over some subset $S$ of $\mathbb{R}^{p}$, where the function $f$ is based on the full training data. 
The compressed solution $\beta_{\mathbf{c}}$ is said to be $\varepsilon$-optimal \citep{sarlos2006improved, mahoney2011randomized} if
$$
\min_{\beta\in S}f(\beta)\leq f(\beta_{\mathbf{c}})\leq (1+\varepsilon)^{2}\,\min_{\beta\in S}f(\beta).
$$
While $\varepsilon$-optimality is natural in a regression context, where the loss in the objective function represents the sample average of targeted population loss, LDA solves a generalized eigenvalue problem rather than directly minimizing the misclassification error rate. Thus, bounding the misclassification error rate of Compressed LDA directly in terms of the Bayes error rate provides a more direct answer regarding its theoretical performance, and it is consistent with results in the LDA literature without compression \citep{mclachlan2004discriminant,shao2011sparse, bickel2004some}.



Another sample size reduction method outside of compression is squashing \citep{dumouchel1999squashing,madigan2002likelihood, pavlov2000towards}, 
which partitions the $n$ training samples into $d$ distinct segments, calculates a fixed number of moments $k$ for each segment, and then generates a smaller number of new samples within each segment preserving the corresponding original moments. Each new sample comes with a weight that accounts for a possible discrepancy between the distribution of samples across segments in the original data and the distribution of samples across segments in the new data. Because of the weights, one can not simply apply LDA to the new "squashed" data, as the weights will need to be included to modify the estimation algorithm. Furthermore, while squashing reduces the number of training samples, its computational complexity depends on the number of partitions $d$, number of calculated moments $k$, and the number of newly-generated samples. Since partitioning the data may lead to an exponential number  of segments $d$ in the number of features $p$, applying squashing in LDA context may be more computationally expensive than training LDA on the full data, and thus we do not pursue this approach here. 

\subsection{Notation}
For a vector $v\in \mathbb{R}^{p}$, we let $\|v\|_{2}$ be the Euclidean norm $\sqrt{\sum_{i=1}^{p}|v_{i}|^{2}}$. For a matrix $M\in \mathbb{R}^{k \times p}$, we let $M_{i,j}$ be its $(i,j)$-th element,  $\|M\|_{\text{op}}= \sup_{\|v\|_{2}\leq 1} \|Mv\|_{2}$ be its operator norm, and  $\|M\|_{F}= \sqrt{\sum_{i,j} |M_{i,j}|^{2}}$ be the Frobenius norm. For a random variable $Z$, we let $\|Z\|_{\Psi_2}= \inf\{t>0: \mathbb{E}\exp(Z^{2}/t^{2}) \leq 2\}$ be its sub-Gaussian norm and $\|Z\|_{\Psi_1} = \inf\{t>0: \mathbb{E}\exp(|Z|/t)\leq 2\}$ its sub-Exponential norm. We use $\Phi(\cdot)$ and $\phi(\cdot)$ to denote the cdf and the pdf of the standard normal distribution, respectively.

\section{Compressed Linear Discriminant Analysis}

\subsection{Review of LDA}

Let $\{(\mathbf{x}_i, y_i)\}_{i=1}^{n}$ be independent pairs of feature vectors $\mathbf{x}_i\in \mathbb{R}^{p}$ and labels $y_i \in \{1,2\}$. Let $X=
\begin{pmatrix}
{X^{1}}^{\top}&
{X^{2}}^{\top}
\end{pmatrix}^{\top}$ be the corresponding $n\times p$ matrix of training samples, where $X^{g}\in \mathbb{R}^{n_{g} \times p}$ is the sub-matrix consisting of $n_g$ samples $\mathbf{x}_i^{g}$ belonging to class $g=1,2$.
Let $Y= (\{1\}^{n_1}, \{2\}^{n_2})^{\top}$ be the corresponding vector of class labels.  We let $\overline{X}:= n^{-1}\sum_{i=1}^{n} \mathbf{x}_{i}$ be the overall training sample mean, and let $\overline{X}_{g}$ be the $g$th class sample mean $n_{g}^{-1}\sum_{i=1}^{n_g} \mathbf{x}_{i}^{g}.$ We use the following standard assumption \citep{mardia79}. 
\begin{assumption}\label{assump:Normality}
Conditional on class membership $g=1,2$, the samples $\mathbf{x}_{i}^{g} \in \mathbb{R}^{p}$ are i.i.d. $N(\mu_g\,,\, \Sigma_{w})$. 
\end{assumption}

Linear Discriminant Analysis \citep[Chapter 11]{mardia79} seeks a vector $\beta\in \mathbb{R}^{p}$ such that the values $ \beta^\top \mathbf{x}_{i}^{g}$ are well-separated between classes. Given the within-class covariance matrix
\begin{equation}\label{eq:sigmaw}
\widehat{\Sigma}_{w}:= \frac{1}{n}\sum_{g=1}^{2}\sum_{i=1}^{n_g}(\mathbf{x}_{i}^{g}-\overline{X}_{g})(\mathbf{x}_{i}^{g}-\overline{X}_{g})^{\top},
\end{equation}
and vector of the class mean differences 
\begin{equation}\label{eq:d}
d:= \frac{\sqrt{n_1 n_2}}{n}\{\overline{X}_{1}-\overline{X}_{2}\},
\end{equation}
LDA estimates $\beta$ as
$
\widehat{\beta}:= \widehat{\Sigma}_{w}^{-1}d
$ \citep[Theorem 11.5.1]{mardia79}.

Given the estimated discriminant vector $\widehat \beta \in \mathbb{R}^{p}$, the LDA classification rule labels a new $\mathbf{x}\in \mathbb{R}^{p}$ by minimizing

\begin{align}
\begin{split}\label{eq:ClassificationRule}
\argmin_{g=1,2}\,\big\{(\mathbf{x}-\overline{X}_{g})^\top \widehat \beta\,(\widehat\beta^{\top}\widehat{\Sigma}_{w}
\widehat \beta)^{-1} \widehat\beta^\top (\mathbf{x}-\overline{X}_{g})-2\log(n_{g}/n)\big\}.
\end{split}
\end{align}

\begin{remark}
Decision rule \eqref{eq:ClassificationRule} is the Fisher's Discriminant Analysis rule \cite[Section 11.5]{mardia79}. For the two-group case and normally distributed data, it is equivalent to the likelihood decision rule \cite[Section 4.3]{hastie_elements_2009} as discussed in \citet[Section 11.5]{mardia79}. 
\end{remark}

Constructing $\widehat{\Sigma}_{w}$ in~\eqref{eq:sigmaw} has complexity $O(n\,p^2)$, which makes LDA computationally expensive on large-sample data.


\subsection{Compressed LDA}\label{sec:compLDA}
 Our goal is to reduce the computational complexity of LDA while maintaining its classification performance. To achieve this, we propose to separately compress each class of training data $X^g\in \mathbb{R}^{n_g\times p}$ via a sparse rademacher matrix $Q^{g}\in \mathbb{R}^{m_g\times n_g}$ as defined below.

\begin{definition}\label{def:CompressionMatrix}
A matrix $Q^{g}\in \mathbb{R}^{m_g\times n_g}$ is a \emph{sparse rademacher} matrix with parameter $s\in (0,1)$ if the elements $Q^{g}_{j, k}$  are i.i.d. with distribution
$$
\mathbb{P}(Q^{g}_{j,k}=1) = \mathbb{P}(Q^{g}_{j,k}=-1) = \frac{s}{2}, \,\mathbb{P}(Q^{g}_{j,k}=0)=1-s. 
$$
\end{definition}

\begin{definition}\label{def:CompressedData}
The $j$-th \emph{compressed data sample} in class $g$ is 
\begin{equation}\label{eq:CompressedSample}
    \mathbf{x}_{j, \mathbf{c}}^{g} = \frac{1}{\sqrt{n_g\,s}}\sum_{i=1}^{n_g}Q^{g}_{j,i}( \mathbf{x}_{i}^{g}-\overline{X}_{g})+\overline{X}_{g},
\end{equation}
where $Q_{j,i}^{g}$ are entries of the sparse rademacher matrix  $Q^{g}\in \mathbb{R}^{m_g \times n_g}$ of Definition~\ref{def:CompressionMatrix}.
\end{definition}

The compressed samples \eqref{eq:CompressedSample} are efficiently computed due to the sparse matrix structure for $Q^g$: only the non-zero entries of each row and corresponding samples $\mathbf{x}_{i}^{g}$ appear in the summation.

\begin{definition}\label{def:MeanAndCov}
The compressed within-class sample covariance matrix $\widehat{\Sigma}_{w, \mathbf{c}}\in \mathbb{R}^{p\times p}$ is defined as the within-class sample covariance matrix of the compressed $\mathbf{x}_{j,\mathbf{c}}^g$
\begin{equation}\label{eq:Sigmawc}
\widehat{\Sigma}_{w, \mathbf{c}}:= \frac{1}{m}\sum_{g=1}^{2}\sum_{j=1}^{m_g}(\mathbf{x}_{j, \mathbf{c}}^{g}-\overline{X}_{g})(\mathbf{x}_{j, \mathbf{c}}^{g}-\overline{X}_{g})^{\top}.
\end{equation}
The \emph{compressed discriminant vector} is $\beta_{\mathbf{c}}:=\widehat{\Sigma}_{w, \mathbf{c}}^{-1}d$, where $d$ is defined as in~\eqref{eq:d}.
\end{definition}

\begin{algorithm}[!t]
 \SetAlgoLined
 \caption{Compressed LDA}\label{a:algorithm}
  \SetKwInOut{Input}{Input}
  \SetKwInOut{Output}{Output}
\DontPrintSemicolon 
  
\Input{$X\in \R^{n\times p}$, $Y\in \R^{n}$,  $s\in (0,1)$, $m \ll n$}
\Output{$\beta_{\mathbf{c}}\in \mathbb{R}^{p}$, $\widehat{\Sigma}_{w, \mathbf{c}}\in \mathbb{R}^{p\times p}$}

Compute $\overline{X}_{g}$, $g=1,2$, and $d$ as in~\eqref{eq:d}\;
Set $m_g = \lfloor n_gm/n \rfloor$, $g=1,2$.\;
Form compressed samples $\mathbf{x}_{j,\bc}^g$\;
Form $\widehat{\Sigma}_{w, \mathbf{c}}\in \mathbb{R}^{p\times p}$ as in~\eqref{eq:Sigmawc}\;
Set $\beta_{\mathbf{c}} = \widehat{\Sigma}_{w, \mathbf{c}}^{-1} d$\;
Use $\beta_{\mathbf{c}}$, $\widehat{\Sigma}_{w,\mathbf{c}}$ in rule \eqref{eq:ClassificationRule} instead of $\widehat \beta$, $\widehat \Sigma_w$\;
\Return{$\beta_{\mathbf{c}}= \widehat{\Sigma}_{w, \mathbf{c}}^{-1}d$}
\end{algorithm}

The proposed Compressed LDA classifies a new $\mathbf{x}\in \mathbb{R}^p$ as in~\eqref{eq:ClassificationRule}, with $\widehat \beta$ and $\widehat{\Sigma}_{w}$ replaced by $\beta_{\mathbf{c}}$, and $\widehat{\Sigma}_{w, \mathbf{c}}$. Algorithm~\ref{a:algorithm} summarizes the full workflow for Compressed LDA.

Our proposed compression scheme is analogous to partial compression within the compressed regression literature, see e.g. Section 2.1 of \citet{homrighausen2019compressed}. 
Given the matrix of covariates $X\in \mathbb{R}^{n\times p}$ and response $Y\in \mathbb{R}^{n}$, partial compression calculates the inner-product $X^{\top}Y$ on the full data and only uses compression to approximate 
$X^{\top}X$. The rationale is that calculating $X^{\top}Y$ only has complexity $O(n\,p)$ compared to complexity $O(n\,p^2)$ for calculating $X^{\top}X$. Similarly in discriminant analysis, calculating $d$ on the full data only has complexity $O(n\,p)$, whereas calculating $\widehat \Sigma_w$ has complexity $O(n\,p^2)$, and thus we only use compression to approximate the latter term.

The proposed compression scheme has several advantages. First, by compressing the classes individually, we are able to unambiguously assign labels to the compressed samples, thus allowing us to form the compressed within-class covariance matrix. This is not possible with the method of \citet{ye2017fast}, which allows mixing samples from both classes in one compressed sample. Secondly, using sparse compression matrices leads to both memory and computational advantages compared to e.g. random Gaussian compression matrices. Due to sparsity, the average complexity of data compression~\eqref{eq:CompressedSample} is $O(nmps)$ rather than $O(nmp)$ for dense matrices. Thus, the overall average complexity of data compression and construction of $\widehat{\Sigma}_{w,\mathbf{c}}$ is $O(nmps+mp^2)$ compared to the complexity $O(np^2)$ of LDA on the full data. Choosing $m$ and $s$ so that $ms \ll p$ ensures that Compressed LDA is faster than full LDA. The computational costs of compression~\eqref{eq:CompressedSample} can be further reduced by parallelizing the construction of $Q^{g}X^{g}$.

\section{Error bound of Compressed LDA}
In this section we derive a bound on the misclassification error rate of Compressed LDA compared to the optimal rate of the Bayes classifier. To our knowledge, this is the first such result for a sample compression method within the discriminant analysis framework.

We next define the Bayes classifier, which gives the optimal (minimal) error rate under Assumption~\ref{assump:Normality}.
\begin{definition}
Under Assumption~\ref{assump:Normality}, and for equal prior class probabilities $\pi_1 = \pi_2$,
the \emph{Bayes decision rule} classifies $\mathbf{x}\in\mathbb{R}^{p}$ to class $1$ if and only if
$
\delta^{\top} \Sigma_{w}^{-1}(\mathbf{x}-\mu)\geq 0,
$
where $\delta = (\mu_1-\mu_2)/2$, and $\mu = (\mu_1 + \mu_2)/2$.
\end{definition}
The corresponding optimal misclassification error rate is given by \citep[Chapter 11.6]{mardia79}
\begin{equation}\label{eq:Ropt}
R_{\text{opt}}:=\Phi(-\sqrt{\delta^\top \Sigma_{w}^{-1}\delta}).
\end{equation}
We consider the case of equal prior class probabilities for clarity of technical derivations, which focus on the effects of compression. For the same reason, we assume equality of class sizes and their corresponding compression dimensions.
\begin{assumption}\label{assump:EqualClass}
$n_1 = n_2 = n/2$ and $m_1 = m_2 = m/2$.
\end{assumption}
These assumptions can be relaxed at the expense of more technical proofs without affecting the resulting rates, e.g. Hoeffding inequality bounds $n_g/n$ in terms of $\pi_g$ with rate $O(n^{-1/2})$. 
Appendix~\ref{appen:Extension} contains further details regarding this extension.

We next bound the misclassification error rate of the proposed Compressed LDA in Section~\ref{sec:compLDA} in terms of the optimal rate $R_{\text{opt}}$ in~\eqref{eq:Ropt}. Under Assumption~\ref{assump:EqualClass}, the Compressed LDA rule assigns new $\mathbf{x}$ to class 1 if and only if $d^\top\widehat{\Sigma}_{w,\mathbf{c}}^{-1}(\mathbf{x}-\overline{X})\geq 0.$ Under Assumptions~\ref{assump:Normality}-\ref{assump:EqualClass}, by \citet[Section~2]{shao2011sparse}, the corresponding error rate of Compressed LDA is given by 
\begin{equation}\label{eq:COSerror}
R_{\mathbf{c}}=\frac{1}{2}\sum_{g=1}^{2} 
\Phi \bigg( \frac{d^{\top}\widehat{\Sigma}_{w,\mathbf{c}}^{-1}\{(-1)^{g}(\mu_{g}-\overline{X}_{g})- d\}
}{\sqrt{d^{\top}\widehat{\Sigma}_{w,\mathbf{c}}^{-1}\, \Sigma_{w}\,\widehat{\Sigma}_{w,\mathbf{c}}^{-1} d}}\bigg).
\end{equation}
 
We now state our main result.

\begin{theorem}\label{thm:MisclassificationError}
Under Assumptions~\ref{assump:Normality} and \ref{assump:EqualClass}, and for  $\pi_1 = \pi_2$, there exists an absolute constant $C>0$ such that with probability at least $1-\eta,$
$$
    |R_{\mathbf{c}} - R_{\text{opt}}|  \leq
   C \,P\, K_{s}^{2}\,\sqrt{\frac{\log(\eta^{-1})+p}{m}},
$$
where $P=\phi(\sqrt{\delta^\top \Sigma_{w}^{-1} \delta})\,(\sqrt{\delta^\top \Sigma_{w}^{-1}\delta}+1)$, and
 $K_{s}^{2}=[s \log\big(1+s^{-1}\big)]^{-1}$. 
\end{theorem}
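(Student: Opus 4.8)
The strategy is to compare $R_{\mathbf{c}}$ and $R_{\text{opt}}$ by a first-order (mean-value) expansion of $\Phi$ around its argument at the population quantities, isolating the fluctuation of each estimated ingredient. Write the argument of the $g$-th term in \eqref{eq:COSerror} as a ratio $N_g/D$ with $N_g = d^{\top}\widehat{\Sigma}_{w,\mathbf{c}}^{-1}\{(-1)^g(\mu_g-\overline{X}_g) - d\}$ and $D = \sqrt{d^{\top}\widehat{\Sigma}_{w,\mathbf{c}}^{-1}\Sigma_w\widehat{\Sigma}_{w,\mathbf{c}}^{-1}d}$. In the population limit — $d\to\delta$, $\overline{X}_g\to\mu_g$, $\widehat{\Sigma}_{w,\mathbf{c}}\to\Sigma_w$ — this ratio collapses to $-\sqrt{\delta^{\top}\Sigma_w^{-1}\delta}$, giving $R_{\text{opt}}$. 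Since $\Phi$ is $1$-Lipschitz and $\phi$ is bounded, $|R_{\mathbf{c}}-R_{\text{opt}}|$ is controlled by $\phi(\sqrt{\delta^{\top}\Sigma_w^{-1}\delta})$ times the perturbation in the ratio argument; the extra factor $(\sqrt{\delta^{\top}\Sigma_w^{-1}\delta}+1)$ in $P$ will emerge from the two sources of error — the deviation of $d,\overline{X}_g$ from $\delta,\mu_g$ (contributing the ``$+1$'' scale) and the deviation of $\widehat{\Sigma}_{w,\mathbf{c}}^{-1}$ from $\Sigma_w^{-1}$ (which rescales the existing $\sqrt{\delta^{\top}\Sigma_w^{-1}\delta}$ magnitude).

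\medskip

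\textbf{Key steps.} First, I would establish a high-probability operator-norm bound $\|\widehat{\Sigma}_{w,\mathbf{c}} - \Sigma_w\|_{\text{op}} \lesssim K_s^2\sqrt{(\log(\eta^{-1})+p)/m}$. This is the heart of the argument: the compressed samples $\mathbf{x}_{j,\mathbf{c}}^g$ in \eqref{eq:CompressedSample} are, conditionally on the class data, linear combinations $\frac{1}{\sqrt{n_g s}}\sum_i Q^g_{j,i}(\mathbf{x}_i^g-\overline{X}_g)$ of centered samples with i.i.d. sparse rademacher weights, so $\mathbf{x}_{j,\mathbf{c}}^g-\overline{X}_g$ is a sub-Gaussian vector whose sub-Gaussian norm scales with $K_s = [s\log(1+s^{-1})]^{-1/2}$ (the sparse rademacher distribution has sub-Gaussian parameter of this order — this is where the $K_s^2$ in the bound comes from), and $\widehat{\Sigma}_{w,\mathbf{c}}$ is a sample covariance of $m$ such vectors. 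A standard covariance concentration inequality for sub-Gaussian vectors (e.g. Vershynin-type) then yields the $\sqrt{(p+\log\eta^{-1})/m}$ rate. One must also condition on (or separately control, at rate $O(\sqrt{p/n})$ which is dominated) the event that the underlying $X^g$ behave well, so that $\mathbb{E}[\widehat{\Sigma}_{w,\mathbf{c}}\mid X] = \widehat{\Sigma}_w \approx \Sigma_w$. Second, on this event I would invoke the resolvent identity $\widehat{\Sigma}_{w,\mathbf{c}}^{-1} - \Sigma_w^{-1} = -\widehat{\Sigma}_{w,\mathbf{c}}^{-1}(\widehat{\Sigma}_{w,\mathbf{c}} - \Sigma_w)\Sigma_w^{-1}$, together with $\|d - \delta\|_2 \lesssim \sqrt{p/n}$ and $\|\overline{X}_g - \mu_g\|_2 \lesssim \sqrt{p/n}$, to bound $|N_g - (-1)^{2g}(\cdots)|$ and $|D - \sqrt{\delta^{\top}\Sigma_w^{-1}\delta}|$ in terms of the operator-norm bound from Step~1. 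Third, I would assemble these into a bound on $|N_g/D - (-\sqrt{\delta^{\top}\Sigma_w^{-1}\delta})|$ via the elementary inequality $|a/b - a_0/b_0| \le |a-a_0|/|b_0| + |a_0||b-b_0|/(|b_0||b|)$, and finally apply $|\Phi(u)-\Phi(u_0)| \le \phi(\xi)|u-u_0|$ for $\xi$ between $u$ and $u_0$, absorbing the boundedness of $\phi$ near $u_0 = -\sqrt{\delta^{\top}\Sigma_w^{-1}\delta}$ into the constant $P$.

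\medskip

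\textbf{Main obstacle.} The crux is Step~1: obtaining the correct $K_s^2$ dependence in the covariance concentration bound. The sparse rademacher entries have variance $s$ but heavier normalized tails for small $s$, so one must carefully compute their sub-Gaussian (and sub-exponential, for the off-diagonal product terms) norms — this is precisely $\|Q^g_{j,k}/\sqrt{s}\|_{\Psi_2}^2 \asymp K_s^2$ — and then track how this parameter propagates through the covariance concentration argument, which typically costs a power of the sub-Gaussian norm \emph{squared} (hence $K_s^2$, not $K_s$). A secondary subtlety is the two-layer randomness (the Gaussian data $X$ and the compression $Q$): one must argue that conditioning on a good event for $X$ does not disturb the conditional sub-Gaussianity used for $Q$, and that the $O(\sqrt{p/n})$ errors from $d$, $\overline{X}_g$, and $\widehat{\Sigma}_w$ versus $\Sigma_w$ are dominated by the $O(\sqrt{p/m})$ compression error since $m \ll n$. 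Everything downstream (Steps~2--4) is routine perturbation analysis once Step~1 is in hand.
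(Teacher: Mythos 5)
Your overall architecture matches the paper's: both treat $R_{\mathbf{c}}$ in \eqref{eq:COSerror} as a smooth function of the deviations of $d$, $\overline{X}_g$, and $\widehat{\Sigma}_{w,\mathbf{c}}^{-1}$ from their population counterparts, expand to first order around the population values (the paper via a Taylor expansion of $f(\varepsilon_1^1,\varepsilon_1^2,\varepsilon_2)$ at $0$, you via the mean-value theorem for $\Phi$ and a ratio-perturbation inequality), and reduce everything to an operator-norm concentration of the compressed covariance at rate $K_s^2\sqrt{(\log(\eta^{-1})+p)/m}$ plus $O(\sqrt{p\log(\eta^{-1})/n})$ mean-estimation errors that are dominated since $m\ll n$. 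Where you genuinely diverge is in how that central concentration is established. You condition on the data $X^g$, observe that the $m$ compressed samples are then independent across $j$ and sub-Gaussian in every direction $v$ with conditional norm $\lesssim K_s\sqrt{v^\top S_g v}$ (where $S_g$ is the class-$g$ sample covariance), and invoke an off-the-shelf non-isotropic sub-Gaussian covariance bound, then uncondition by controlling $\|S_g-\Sigma_w\|_{\text{op}}$ at the dominated rate $\sqrt{p/n}$. The paper conditions the other way: it fixes the compression matrix, writes $v^\top A_1 v$ as a Gaussian quadratic form ${Z^g}^\top({Q^g}^\top Q^g/s)Z^g$ and applies Hanson--Wright, handling the cross terms in $\mu_g-\overline{X}_g$ through an explicit four-term decomposition ($A_1$--$A_4$) before an $\varepsilon$-net argument. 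Your route is more modular and avoids the four-term bookkeeping; the paper's route makes the role of $\overline{X}_g$-induced dependence across compressed samples explicit and delivers the whitened bound $\|\Sigma_w^{-1/2}\widehat{\Sigma}_{w,\mathbf{c}}\Sigma_w^{-1/2}-I_p\|_{\text{op}}$ directly. On that last point, one small correction to your Step~1: you should state the concentration in the whitened (relative) form rather than as $\|\widehat{\Sigma}_{w,\mathbf{c}}-\Sigma_w\|_{\text{op}}\lesssim K_s^2\sqrt{(\log(\eta^{-1})+p)/m}$, since the un-whitened bound carries a factor of $\|\Sigma_w\|_{\text{op}}$ and would leak condition-number dependence into the final constant $P$; your conditional sub-Gaussian argument yields the whitened version with no extra work if you whiten before applying the covariance inequality. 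With that adjustment your plan goes through and recovers the stated rate and the factor $P=\phi(\sqrt{\delta^\top\Sigma_w^{-1}\delta})(\sqrt{\delta^\top\Sigma_w^{-1}\delta}+1)$.
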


The upper bound depends on the sparsity level $s$ through $K_{s}$, which appears in the proofs as the sub-Gaussian norm of the elements of $Q^g/\sqrt{s}$ (see Lemma~5 in the Supplement). As $s\to 0$, fewer training samples are used when forming each compressed sample, and the upper bound of Theorem~\ref{thm:MisclassificationError} increases. As $s\to 1$, more training samples are included, and the upper bound decreases. However, as $s$ increases so does the run time for Compressed LDA. Thus, there is a trade-off between accuracy and speed determined by $s$. 

Existing results in the LDA literature (i.e. \citet{shao2011sparse}) have error rates $\mathcal{O}_{p}(n^{-1/2})$. Since Compressed LDA reduces the sample size to $m$, the rate $\mathcal{O}_{p}(m^{-1/2})$ in Theorem~\ref{thm:MisclassificationError} is expected. While the decay rate is typical, our theoretical approach is not. The main difficulty in analyzing Compressed LDA is dependency across $m$ compressed samples as (i)~they share the sample class mean $\overline{X}_{g}$, and (ii)~different rows of the compression matrix $Q^{g}$ can share the location of non-zero entries, and thus the same $\mathbf{x}_{i}^{g}$ may appear in \eqref{eq:CompressedSample} for different values of $j$. To overcome these difficulties, we use independence between the compression matrices $Q^g$ and original data matrices $X^g$ when bounding the difference between $\widehat{\Sigma}_{w, \mathbf{c}}$ and $\Sigma_{w}$. 
The detailed proof of Theorem 1, as well as supplementary Theorems and Lemmas, are presented in the Supplementary Materials.

Finally, while the scaling $\mathcal{O}_{p}(m^{-1/2})$ in Theorem~\ref{thm:MisclassificationError} is the same as what would be expected under sub-sampling (randomly selecting $m/2$ samples from each class and discarding the rest), we found that empirically compression offers two advantages: (i) it has the smaller misclassification error rate variance (see e.g. Figures~\ref{fig:Zip}-\ref{fig:MNIST}), which is likely due to using multiple $\mathbf{x}_{i}^{g}$ in forming each compressed sample; (ii)~it is more robust to violations of normality assumption in the original data as the summation within~\eqref{eq:CompressedSample} induces normality of compressed samples (see Figure~\ref{fig:SkinSegmentationPlots}).

\section{Extensions}
\subsection{Projected LDA}\label{sec:ProjectedLDA}

The Compressed LDA proposed in Section~\ref{sec:compLDA} proceeds by (i) forming a discriminant vector $\beta_{\mathbf{c}}$ based on compressed samples in~\eqref{eq:CompressedSample}; (ii) using $\beta_{\bc}$ and compressed within-class sample covariance matrix $\widehat \Sigma_{w, \mathbf{c}}$ in classification rule~\eqref{eq:ClassificationRule}. An alternative approach is to use step (i) only, project the original training data using $\beta_{\mathbf{c}}$ to form $\mathbf{z}_{i}^g = \beta_{\mathbf{c}}^{\top}\,\mathbf{x}_{i}^{g}\in \mathbb{R}$, and then apply LDA on the pairs $\{\mathbf{z}_i, y_i\}$, where now the samples $\mathbf{z}_i$ are one-dimensional scalars rather than $p$-dimensional vectors. Thus, the within-class variance of the projected data $\beta_{\mathbf{c}}^{\top}\widehat{\Sigma}_{w}\beta_{\mathbf{c}}$ is used in decision rule \eqref{eq:ClassificationRule} rather than $\beta_{\bc}^{\top}\widehat{\Sigma}_{w, \mathbf{c}}\beta_{\bc}$. We call this alternative approach Projected LDA. 
If the two classes have equal sample sizes, that is Assumption~\eqref{assump:EqualClass} holds, Compressed LDA and Projected LDA rules coincide as both will classify a new $\mathbf{x}$ according to
\begin{equation*}\label{eq:ClassificationEqual}
\argmin_{g=1,2}\{(\mathbf{x}-\overline{X}_{g})^\top \beta_{\bc}\}^2.
\end{equation*}
However, if $n_1\neq n_2$, the two methods will in general differ due to discrepancy between $\beta_{\mathbf{c}}^{\top}\widehat{\Sigma}_{w}\beta_{\mathbf{c}}$ and $\beta_{\bc}^{\top}\widehat{\Sigma}_{w, \mathbf{c}}\beta_{\bc}$.

The Projected LDA is analogous to the Fast Random Fisher Discriminant Analysis proposed in \citet{ye2017fast}: both use compression to form the discriminant vector $ \beta_{\bc}$, and then apply LDA on the projected values. The key difference between the two approaches is the compression scheme: \citet{ye2017fast} jointly compress both classes when forming $\beta_{\bc}$, whereas we propose separate class compression. We found that the latter is preferable, and Section~\ref{sec:Simulations} shows that Projected LDA has consistently better classification performance than the method of \citet{ye2017fast}. 

In terms of computational efficiency, Projected LDA described here and Compressed LDA of Section~\ref{sec:compLDA} are comparable - the main computational bottleneck of both is calculation of compressed $\widehat{\Sigma}_{w, \mathbf{c}}$. In terms of theoretical guarantees, since the methods coincide under Assumption~\ref{assump:EqualClass}, the results of Theorem~\ref{thm:MisclassificationError} apply to Projected LDA as well. In practice, the sample sizes are often not exactly equal, and thus in Section~\ref{sec:Simulations} we observe some difference in the empirical performance of Compressed LDA and Projected LDA. We found, however, that neither method has uniformly better classification performance over the other.

%





\subsection{Compressed QDA}\label{sec:compQDA}
The proposed compression scheme~\eqref{eq:CompressedSample} is applied separately to each class, and thus allowing us to assign classes to the compressed samples. This, in turn, allows us to compute class-specific compressed covariance matrices, which motivates us to consider an extension of Compressed LDA to the case of unequal class covariance structures.

Quadratic Discriminant Analaysis (QDA) \citep{hastie_elements_2009} is a generalization of LDA to the case of unequal class covariance matrices, which weakens Assumption~\ref{assump:Normality}.
\begin{assumption}\label{assump:QDANormality}
Conditional on class membership $g=1,2$, the samples $\mathbf{x}_{i}^{g} \in \mathbb{R}^{p}$ are i.i.d. $N(\mu_g\,,\, \Sigma_{w}^{g})$. 
\end{assumption}
Under Assumption~\ref{assump:QDANormality}, the Bayes decision rule classifies a new sample $\mathbf{x}\in \mathbb{R}^{p}$ by minimizing

\begin{equation}\label{eq:BayesQDA}
   \argmin_{g=1,2}\big\{ (\mathbf{x}-\mu_{g})^{\top}({\Sigma_{w}^{g}})^{-1}(\mathbf{x}-\mu_{g})+\log |\Sigma_{w}^{g}|-2\log(\pi_{g})\big\},
\end{equation}
where $|\Sigma_{w}^{g}|$ is the determinant of $\Sigma_{w}^{g}$. The QDA classification rule is the sample plug-in rule, where the population parameters $\mu_{g}$, $\Sigma_{w}^{g}$, and $\pi_{g}$ are replaced by their sample estimates $\overline{X}_{g},$ $\widehat{\Sigma}_{w}^{g}$, and $n_{g}/n$.

As our compression scheme proposed in~\eqref{eq:CompressedSample} is applied separately to each class, it can be used to form class-specific compressed covariance matrices.
\begin{definition}
The compressed sample covariance matrix for class $g=1,2$ is defined as
$$
\widehat{\Sigma}_{w, \mathbf{c}}^{g}:= \frac{1}{m_g}\sum_{j=1}^{m_g}(\mathbf{x}_{j, \mathbf{c}}^{g}-\overline{X}_{g})(\mathbf{x}_{j, \mathbf{c}}^{g}-\overline{X}_{g})^{\top}.
$$
\end{definition}
We define the Compressed QDA decision rule by substituting $\widehat{\Sigma}_{w, \mathbf{c}}^{g}$
instead of $\Sigma_{w}^{g}$ in~\eqref{eq:BayesQDA}, and $\overline{X}_{g}$, $n_{g}/n$ instead of $\mu_g$, $\pi_g$, respectively. 







\section{Simulation Studies}\label{sec:Simulations}
In this section we empirically evaluate the performance of the proposed compression methods on three publicly available datasets: Zip Code \citep{hastie_elements_2009}, MNIST \citep{lecun1998gradient} and Skin Segmentation \citep{bhatt2010skin}. For each dataset, we compare five linear classifiers: (L1)~Compressed LDA of Section~\ref{sec:compLDA}; (L2) Projected LDA of Section~\ref{sec:ProjectedLDA}; (L3) Fast Random Fisher Discriminant Analysis (FRF) of \citet{ye2017fast}; (L4) LDA trained on sub-sampled data drawn uniformly from both classes; and (L5) LDA trained on the full data (Full LDA). We also separately compare three quadratic classifiers: (Q1) Compressed QDA of Section~\ref{sec:compQDA}; (Q2) QDA trained on sub-sampled data drawn uniformly from both classes; and (Q3) QDA trained on the full data (Full QDA). 

For each method, we evaluate the out-of-sample misclassification error rate as a function of reduced number of training samples $m = m_1 + m_2$ (with $m=n$ for full methods L5 and Q3). To assess variability due to compression or sub-sampling, we use 100 replications for each value of $m$. Within each classifier, a small multiple of the identity matrix  $\gamma I_{p}$ is added to the corresponding estimate of the within-class covariance matrix $\Sigma_w$ for numerical stability. We use $\gamma = 10^{-4}$ for Zip Code and Skin Segmentation data, and $\gamma = 10^{-3}$ for the MNIST data as it has a much larger number of features $p$ compared to other datasets, and thus requires stronger regularization. We use $s=0.01$ for Zip Code and MNIST datasets, and $s=10^{-3}$ for the Skin Segmentation dataset as the latter has considerably larger sample size $n$; thus for all datasets $s=O(n^{-1/2})$.

We also compare the execution times of forming the compressed within-class covariance matrix $\widehat \Sigma_{w,\bc}$ and full within-class covariance matrix $\widehat \Sigma_w$. For compression, we consider the time required to both compress the data via $Q^g$ and to form $\widehat \Sigma_{w, \bc}$. The timing results are reported using a Linux Machine with Intel Xeon E5-2690 with 2.90 GHz. 

\subsection{ZIP Code Data}\label{sec:ZipCodeData}
The Zip Code Data \citep{hastie_elements_2009} has $n=7,291$ training samples with $p=256$ features. The samples are images of handwritten digits for zip codes, and each feature corresponds to a normalized gray-scale pixel of an image. The original data has ten classes, each corresponding to a digit from 0 to 9, which we merge into two classes of even and odd digits. The classes are well-balanced, with $48\%$ to $52\%$ split between the class $1$ odd digits and class $2$ even digits. The corresponding test data has $n=2,007$ samples. 

The top of Figure~\ref{fig:Zip} displays the misclassification error rates of (L1)-(L5) across $100$ independent trials for each value of $m$. As expected, the performance of all methods improves with the increase in compression dimension $m$. Both Compressed LDA and Projected LDA have better classification performance compared to FRF and sub-sampled LDA. For example, when $m=500$, Compressed LDA has a mean misclassification error rate of $12.60\%$ (se $0.08\%$), and Projected LDA has mean error rate $12.73\%$ (se $0.08\%$). In contrast, FRF has a mean rate of $13.84\%$ (se $0.08\%)$, and sub-sampling has mean rate $15.31\%$ (se $0.13\%$).  Compressed and Projected LDA have similar error rates due to the balanced class sizes in this dataset, see Section~\ref{sec:ProjectedLDA}.

\begin{figure}[!t]
\centering
\includegraphics[width=8cm]{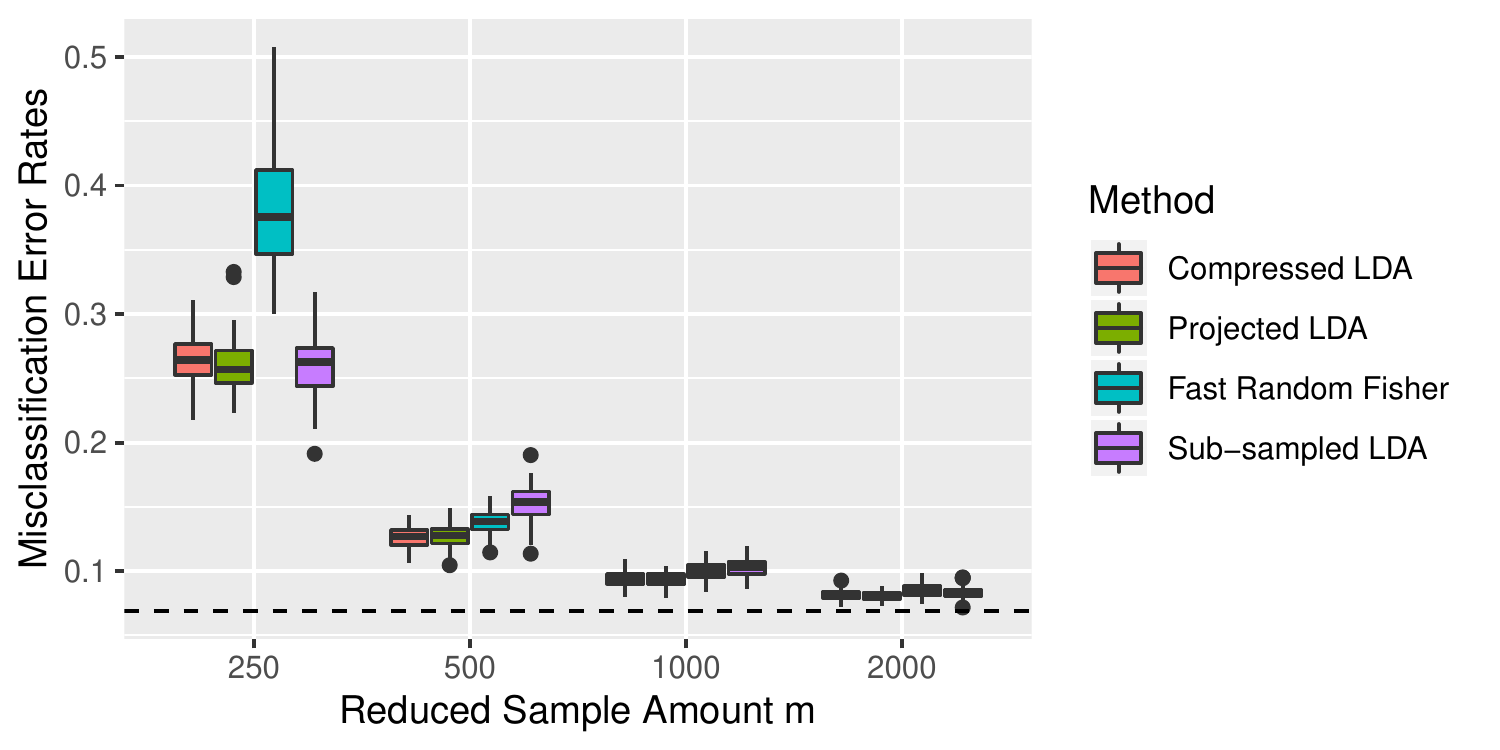}
\includegraphics[width=8cm]{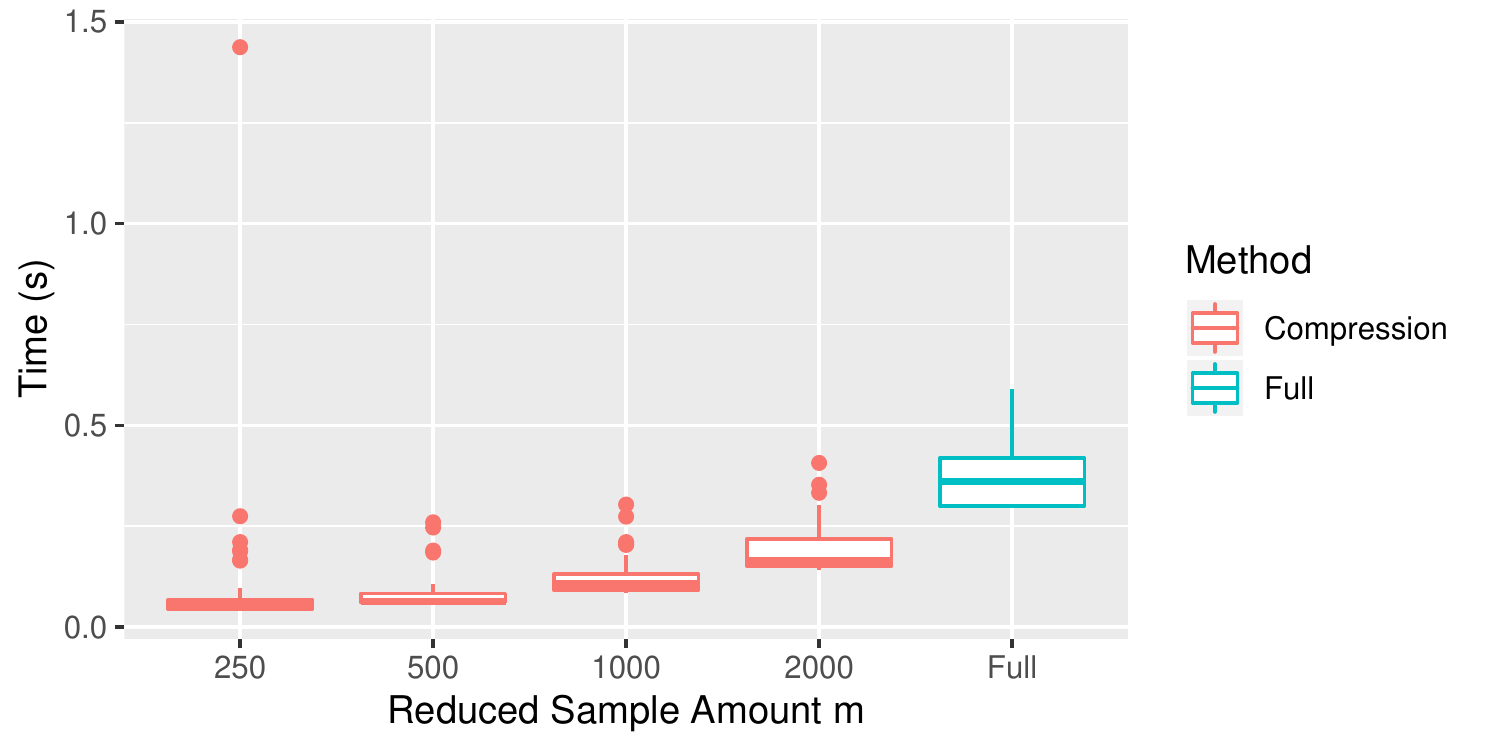}
\caption{Zip Code Data. \textbf{Left:} Misclassification error rates across 100 replications for each value of $m$ with $s=0.01$ and $\gamma=10^{-4}$.  The dashed line represents the $6.88\%$ error rate of Full LDA.
\textbf{Right:} The execution times for 100 independent compressed and full covariance formations. }
\label{fig:Zip}
\end{figure} 

Compressed and Projected LDA have the lowest mean error rates and standard errors across all values of $m$. Sub-sampling has the highest mean error rates for $m \geq 500$, which is likely because pixel values for images of handwritten digits are not normally distributed. Unexpected to us, FRF has the highest error rates for $m=250$ despite using compression. We suspect this is due to its joint compression of both classes (rather than separate class compression used by our methods), which likely leads to higher variance in the estimated discriminant vector when $m$ is relatively small. When $m\geq 500$, the error rates of FRF are better than sub-sampling, but still worse than the proposed approaches.

The bottom of Figure~\ref{fig:Zip} compares the execution times of forming compressed and full within-class covariance matrices, where the execution time for compression includes both formation of compressed samples in~\eqref{eq:CompressedSample} and calculation of $\widehat \Sigma_{w, \bc}$. As expected, compression is significantly faster. For instance, when $m=2,000$, the compression takes on average $0.19$ seconds (se $0.01$ s), while the construction of full covariance matrixtakes on average $0.36$ seconds (se $0.01$ s). 

Figure~\ref{fig:QDAZip} displays the misclassification error rates of (Q1)-(Q3). Compressed QDA has uniformly lower mean error rates and lower variance than QDA on sub-sampled data for the same values of $m$. For instance, when $m = 500$, Compressed QDA has a mean error rate of $12.22\%$ (se $0.08\%$) while sub-sampled QDA has the mean error rate of $19.27\%$ (se $0.14\%$). For $m\geq 2,000$, the misclassification error rate of Compressed QDA matches that of Full QDA.

\begin{figure}[!t] 
\centering 
\includegraphics[width=10cm]{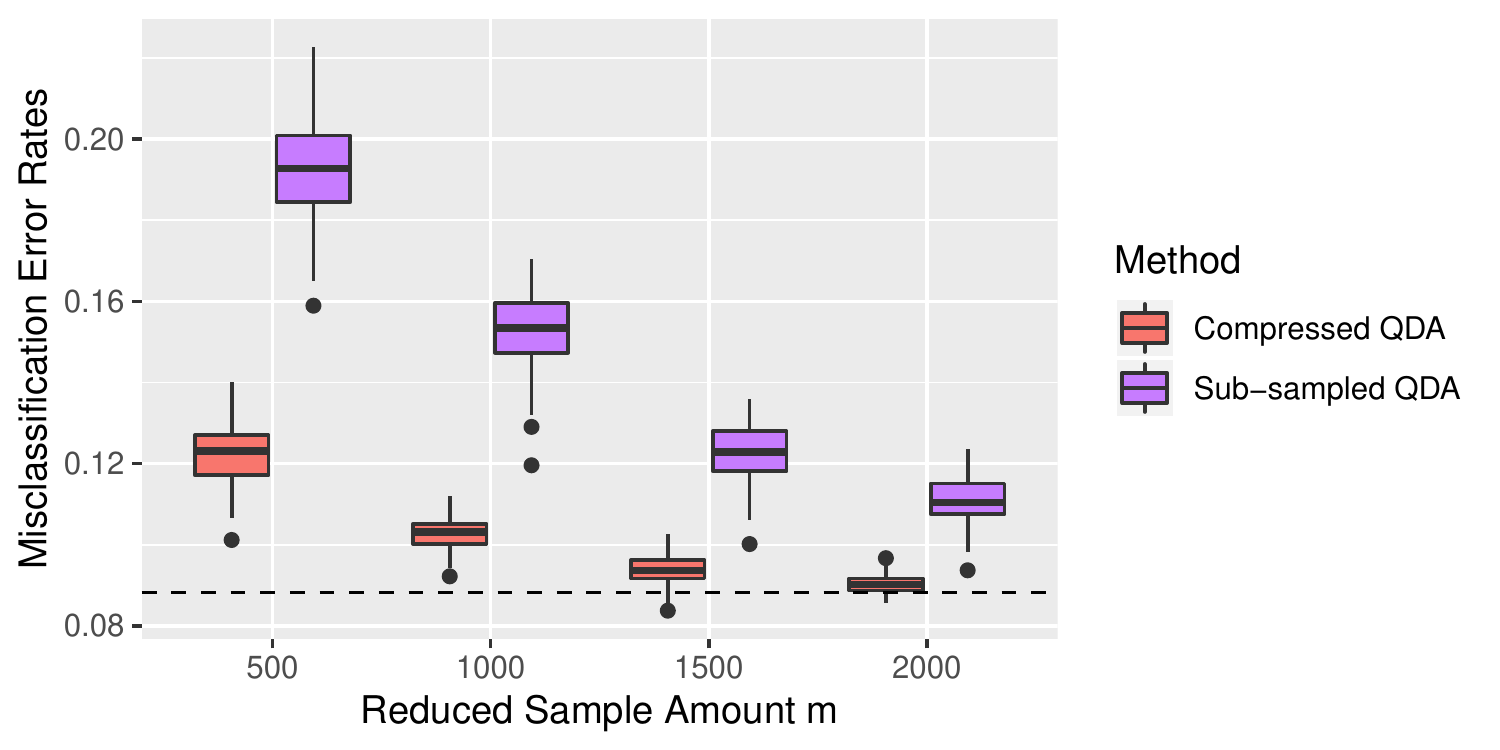}
\caption{Zip Code Data. Misclassification error rates of compressed and sub-sampled QDA across 100 replications for each value of $m$ with $s=0.01$ and $\gamma =10^{-3}$. The dashed line represents the $8.82\%$ error rate of Full QDA.} 
\label{fig:QDAZip} 
\end{figure}

\subsection{MNIST Data}

The MNIST Data \citep{hastie_elements_2009} has $n=60,000$ training samples with $p=784$ features. The samples are pictures of handwritten digits, and  each feature corresponds to a normalized grayscale pixel for an image. The original data has ten classes, each corresponding to a digit from 0 to 9, which we merge into two classes of even and odd digits. The classes are well-balanced with a $51\%$ to $49\%$ split between the class $1$ odd digits and class $2$ even digits. The test data has $n=10,000$ samples. 

The top of Figure~\ref{fig:MNIST} shows the misclassification error rates of the linear methods across $100$ independent trials for each value of $m$. As with the Zip Code data, both Compressed LDA and Projected LDA have the lowest misclassification error rates compared to FRF and sub-sampled LDA. For instance, when $m=2,000$, the mean error rate for Compressed LDA is $13.93\%$ (se $0.04\%$), and the mean error rate for Projected LDA is $13.98$ (se $0.04\%$). In contrast, FRF has mean rate $15.71\%$ (se $0.05\%)$, and sub-sampled LDA has mean rate $16.05\%$ (se $0.05\%$). As with the Zip Code data, Compressed and Projected LDA have similar rates due to the balanced class sizes, see Section~\ref{sec:ProjectedLDA}. Unlike the Zip Code data, FRF performs comparable to sub-sampling even for larger values of $m$. This suggests that joint class compression leads to sub-optimal classification performance compared to proposed separate class compression, and the difference is particularly striking when the number of features $p$ is large.

The bottom of Figure~\ref{fig:MNIST} compares the execution times of forming compressed and full within-class covariance matrices. As expected, compression is considerably faster. Even when $m=10,000$, the mean time for compression ($9.31$ seconds, se $1.29$) is significantly smaller than the time of forming $\widehat \Sigma_w$ on the full data ($23.53$ seconds, se $ 2.29$). 

\begin{figure}[!t]
  \centering
   \includegraphics[width=8cm]{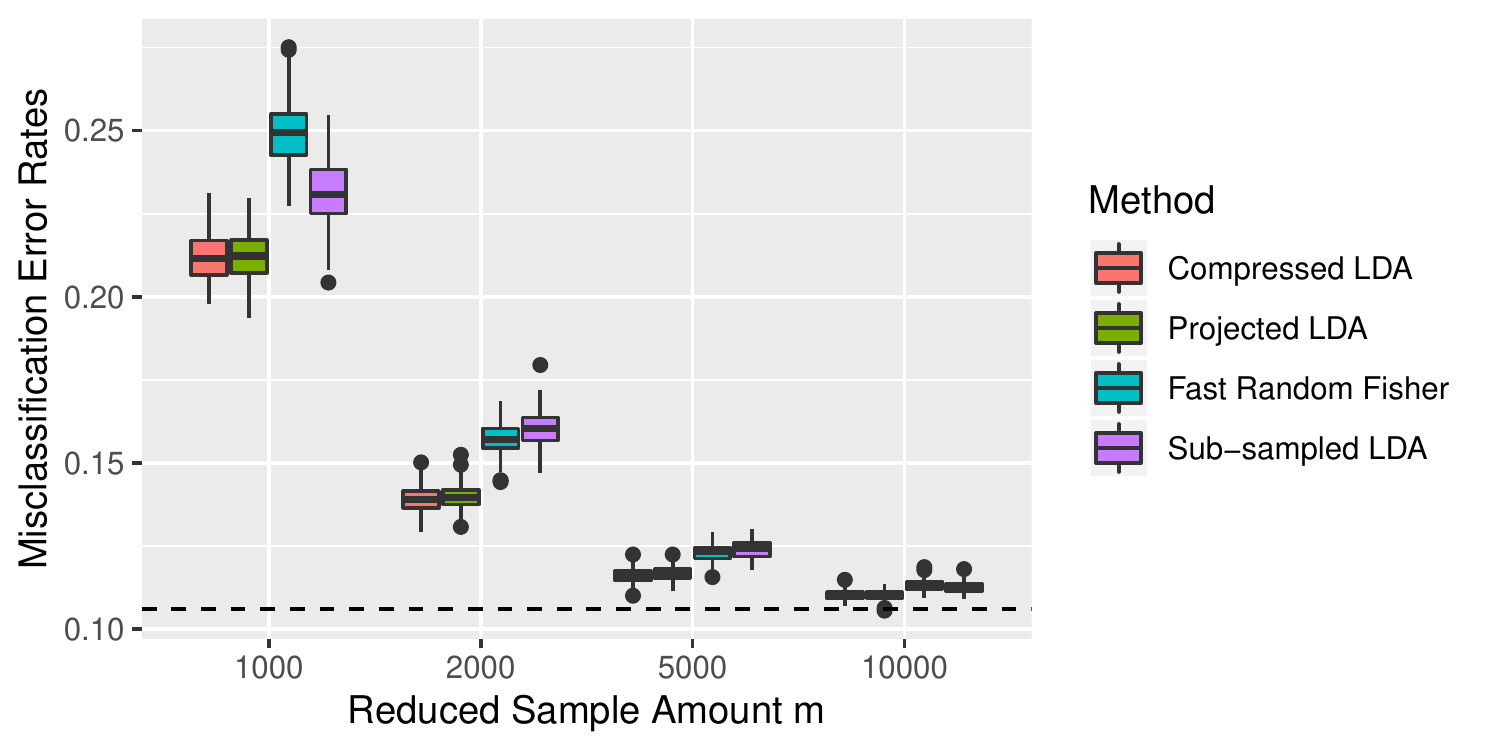}
   \includegraphics[width=8cm]{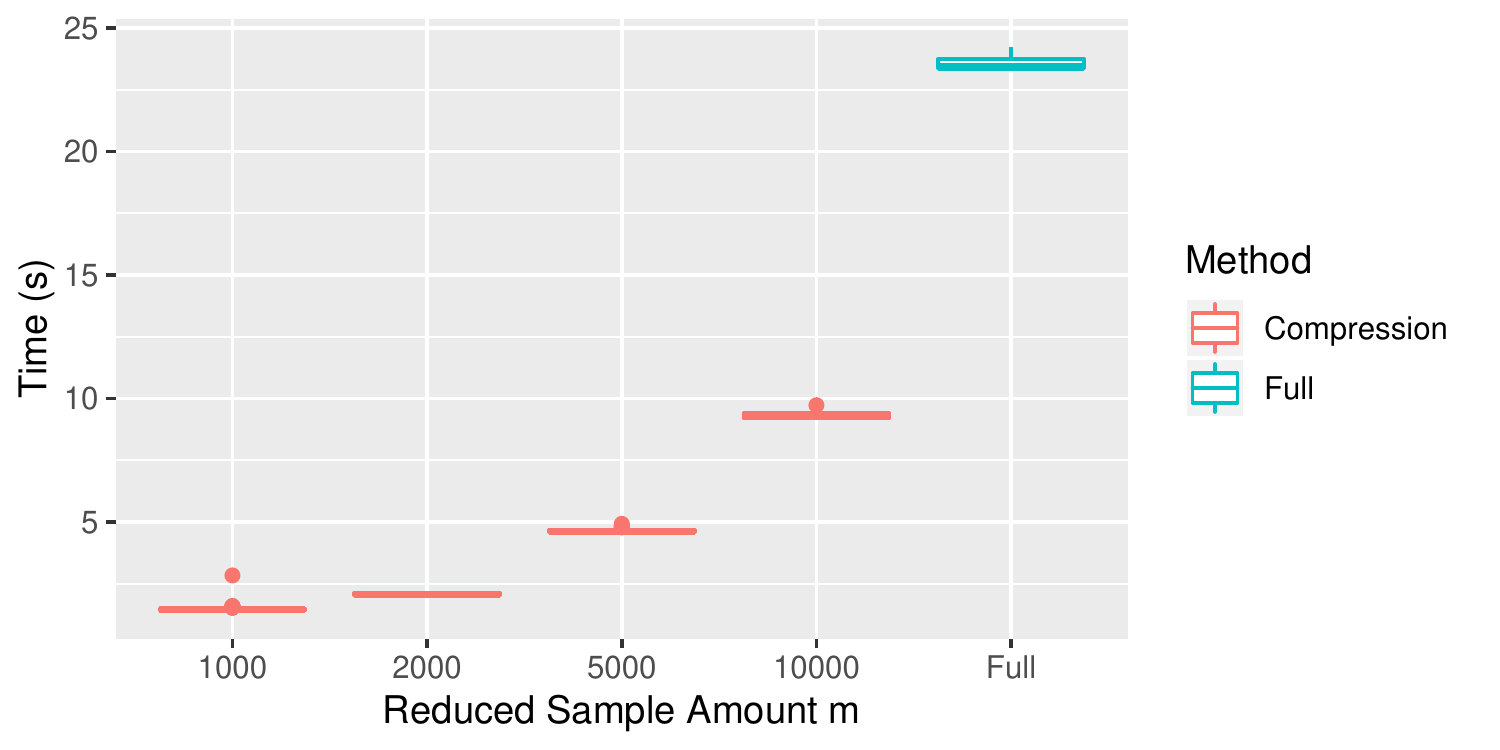}
  \caption{MNIST Data. \textbf{Left:} Misclassification error rates across 100 replications for each value of $m$ with $s=0.01$ and $\gamma =10^{-3}$. The dashed line represents the $10.60\%$ misclassification error rate of Full LDA.
 \textbf{Right:} The execution times for 100 independent compressed and full covariance formations.}
\label{fig:MNIST}
\end{figure}

\begin{figure}[!t]
\centering 
\includegraphics[width=10cm]{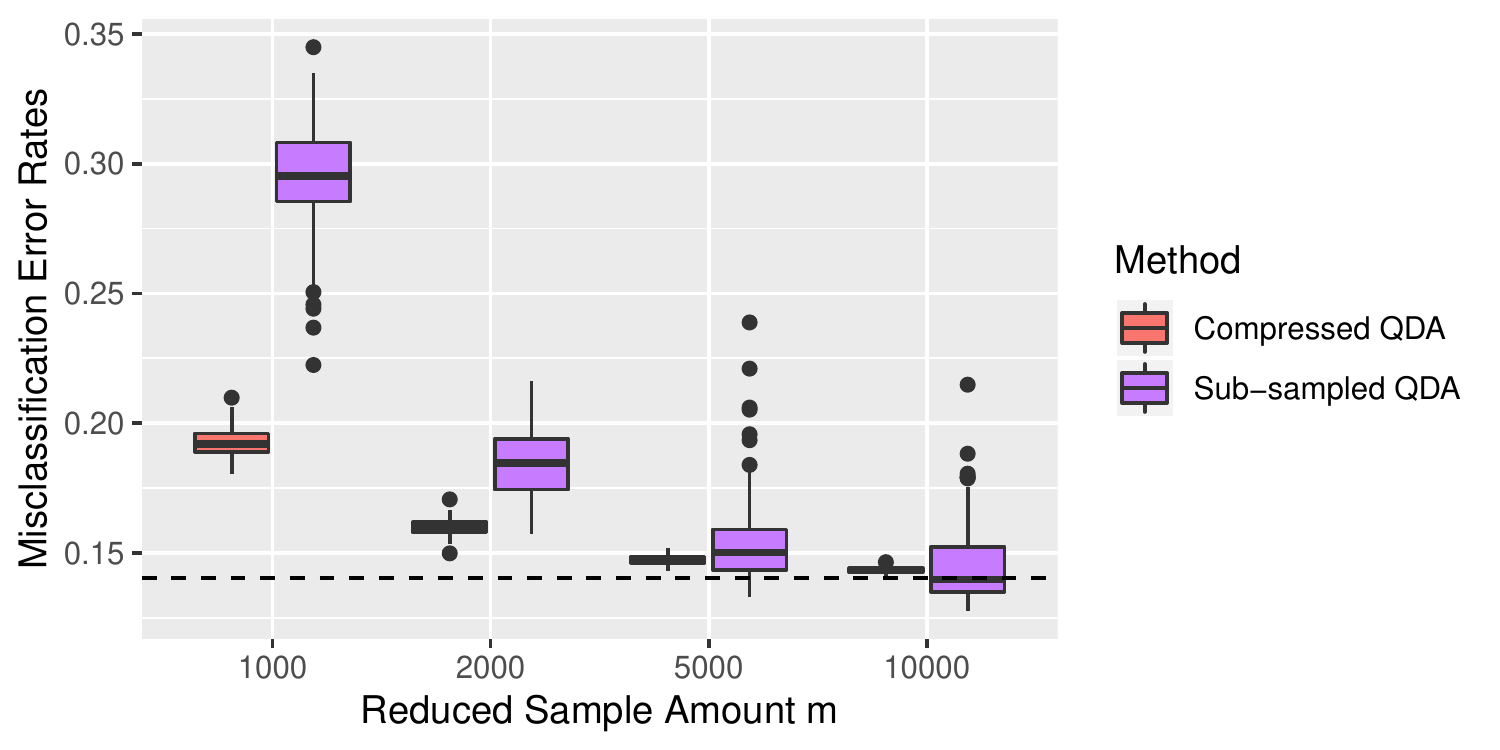}
\caption{MNIST Data. Misclassification error rates of compressed and sub-sampled QDA across 100 replications for each value of $m$ with $s=0.01$ and $\gamma =10^{-3}$. The dashed line represents the $14.04\%$ error rate of Full QDA.} 
\label{fig:QDA_MNIST} 
\end{figure}

Figure~\ref{fig:QDA_MNIST} shows the misclassification error rates of the quadratic methods. Compressed QDA has uniformly better performance than sub-sampling, it has both lower mean error rates and lower variances. For example, when $m=1,000$, Compressed QDA has mean error rate $19.24\%$ (se $0.06\%$) while sub-sampled QDA has mean error $29.42\%$ (se $0.21\%$).

\subsection{Skin Segmentation Data}\label{sec:SkinSegmentation}
The Skin Segmentation Data \citep{bhatt2010skin}
has $n=245,057$ samples with $p=3$ features. The features are Red, Blue, and Green pixel values for randomly sampled image pixels. The goal is to learn which colors represent skin, and subsequently classify those pixels as corresponding to skin or not. Unlike the Zip Code and MNIST datasets, here the classes are unbalanced, with $21\%$ (skin) to $79\%$ (not skin) split. We select $90\%$ of the data from each class for training, and use the remaining 10\% for testing.

The top of Figure~\ref{fig:SkinErrorsAndTimes} displays the misclassification error rates of the linear methods across $100$ independent trials for each value of $m$. Compressed LDA, Projected LDA, and FRF all have superior classification performance over sub-sampled LDA, especially in terms of variance for the same value of $m$. For instance, when $m=25$, Compressed LDA has an average error rate of $7.42\%$ (se $0.09\%$), with $7.57\%$ (se $0.09\%$) for Projected LDA, and $7.38\%$ (se $0.09\%$) for FRF. In contrast, sub-sampled LDA has error $8.78\%$ (se $0.40\%$). Unlike the Zip Code and MNIST datasets, FRF performs comparably to the proposed approaches, which supports our previous conjecture that the difference between joint compression and separate class compression is more pronounced for larger values of $p$. 
The bottom of Figure~\ref{fig:SkinErrorsAndTimes} displays the corresponding error rates for the quadratic methods. While the mean error rates between Compressed QDA and sub-sampled QDA are similar, Compressed QDA has much smaller variance, which is consistent with results we observed for other datasets. 

\begin{figure}[!t]
\centering 
\includegraphics[width=8cm]{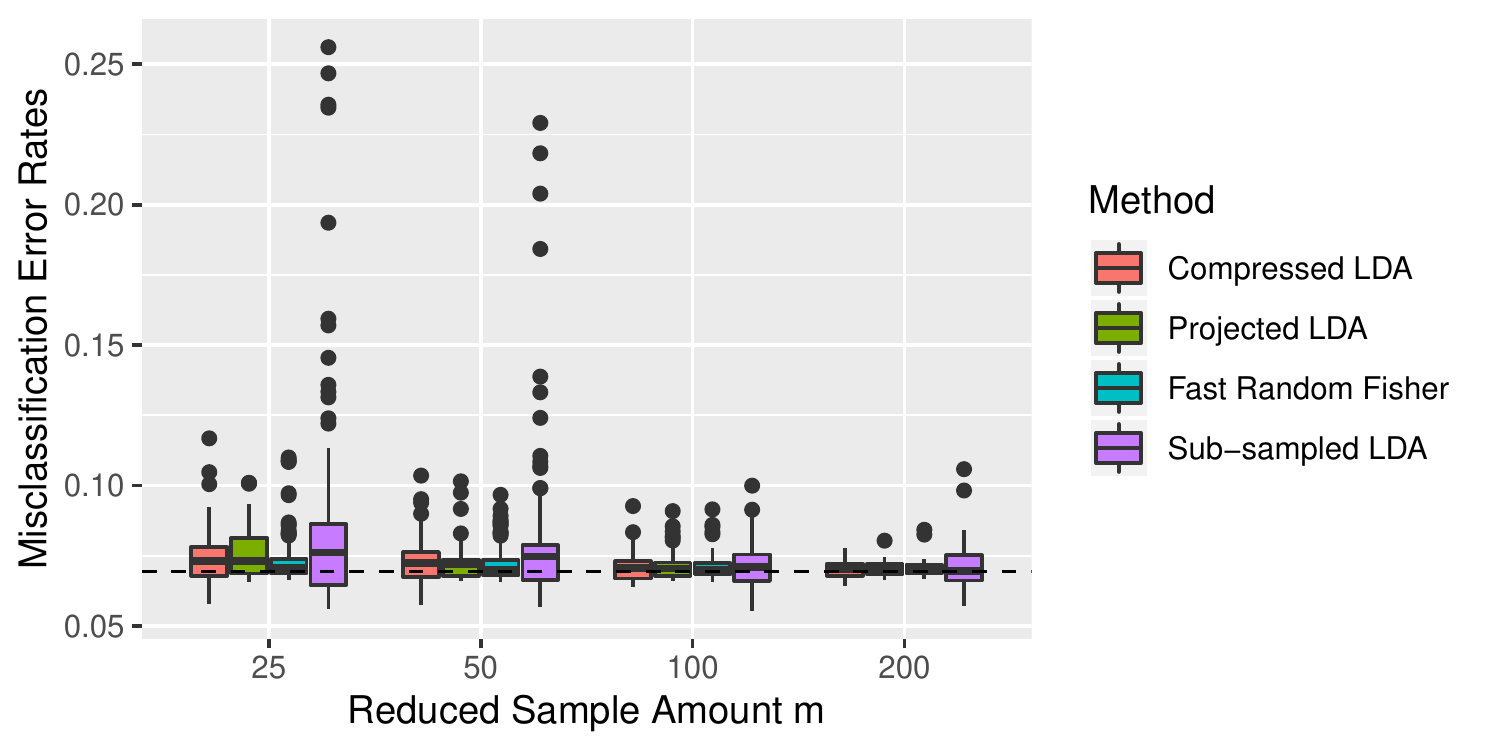}
\includegraphics[width=8cm]{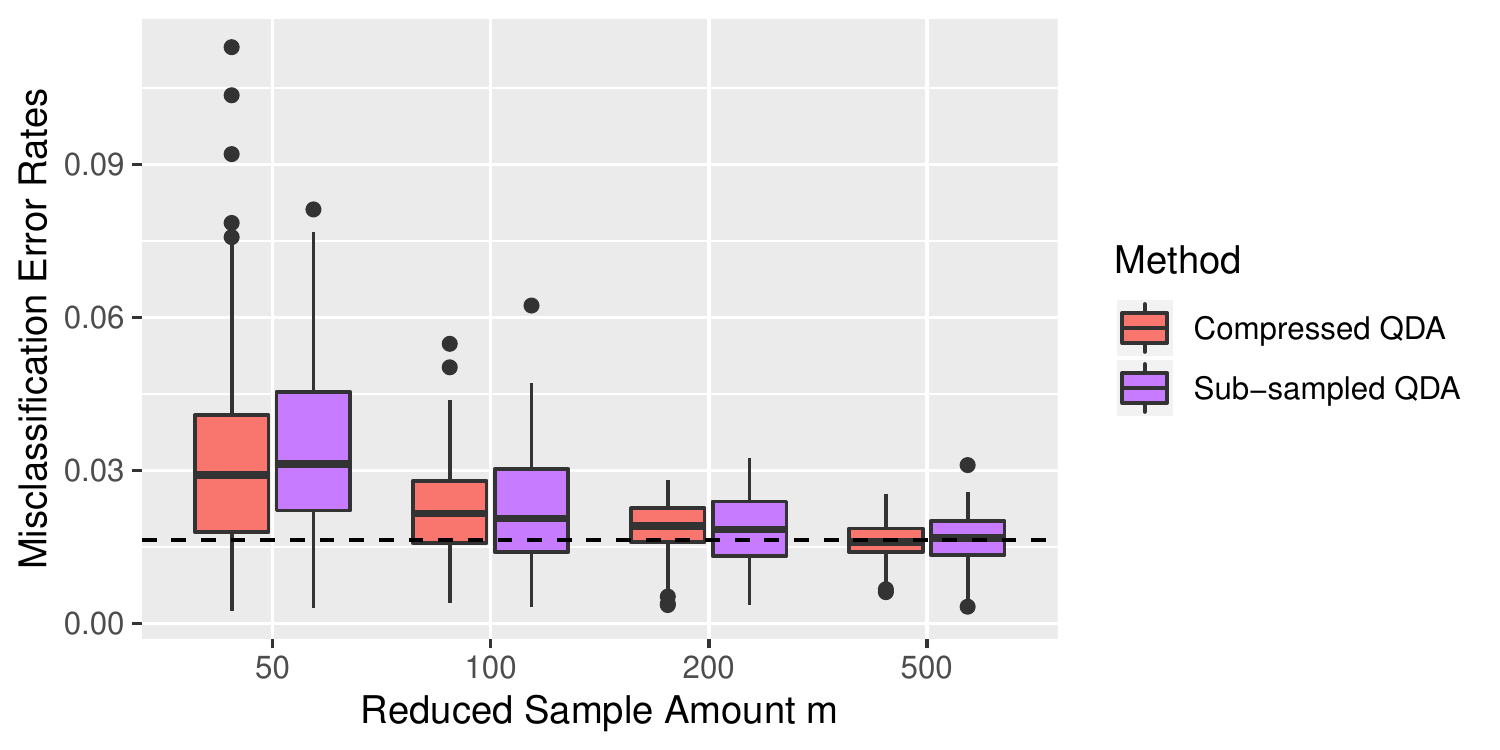}
\caption{Skin Segmentation Data, misclassification error rates across 100 replications for each vale of $m$. \textbf{Left:} Linear classification methods with $s=10^{-3}$ and $\gamma =10^{-4}$. The dashed line represents the $6.93\%$ error rate of Full LDA. 
\textbf{Right:}~Qadratic classification methods with $s=10^{-3}$ and $\gamma =10^{-4}$. The dashed line represents the $1.64\%$ error rate of Full QDA. 
}
\label{fig:SkinErrorsAndTimes} 
\end{figure}

The Skin Segmentation Data only has $p=3$ features, and thus one may ask whether the compression is really necessary since it doesn't offer significant computational advantages for small values of $p$. We found, however, that compression still allows to use much smaller number of samples to obtain good predictive accuracy, as Compressed LDA reaches the Full LDA error rate of $6.93\%$ at only $m=100$. Furthermore, our main reason for including this dataset as an example is to illustrate how compression can induce normality in the compressed samples when the normality for original samples does not hold. The top of Figure~\ref{fig:SkinSegmentationPlots} shows the first two principal components of $5,000$ original training samples, whereas the bottom of Figure~\ref{fig:SkinSegmentationPlots} shows the first two principal components of $5,000$ compressed samples. The original training samples clearly are not normally distributed as the main directions of variation display non-linear class separation. In contrast, each class of compressed data has an elliptical shape suggesting the normal distribution and a linear classification boundary. Thus, Compressed LDA is more robust to the assumption of normality than sub-sampling. For the Skin Segmentation Data, this leads to Compressed LDA having slightly lower mean misclassification error rates compared to sub-sampling, and significantly smaller error variances across the replications.

\begin{figure}[!t] 
\centering 
\includegraphics[width=10cm]{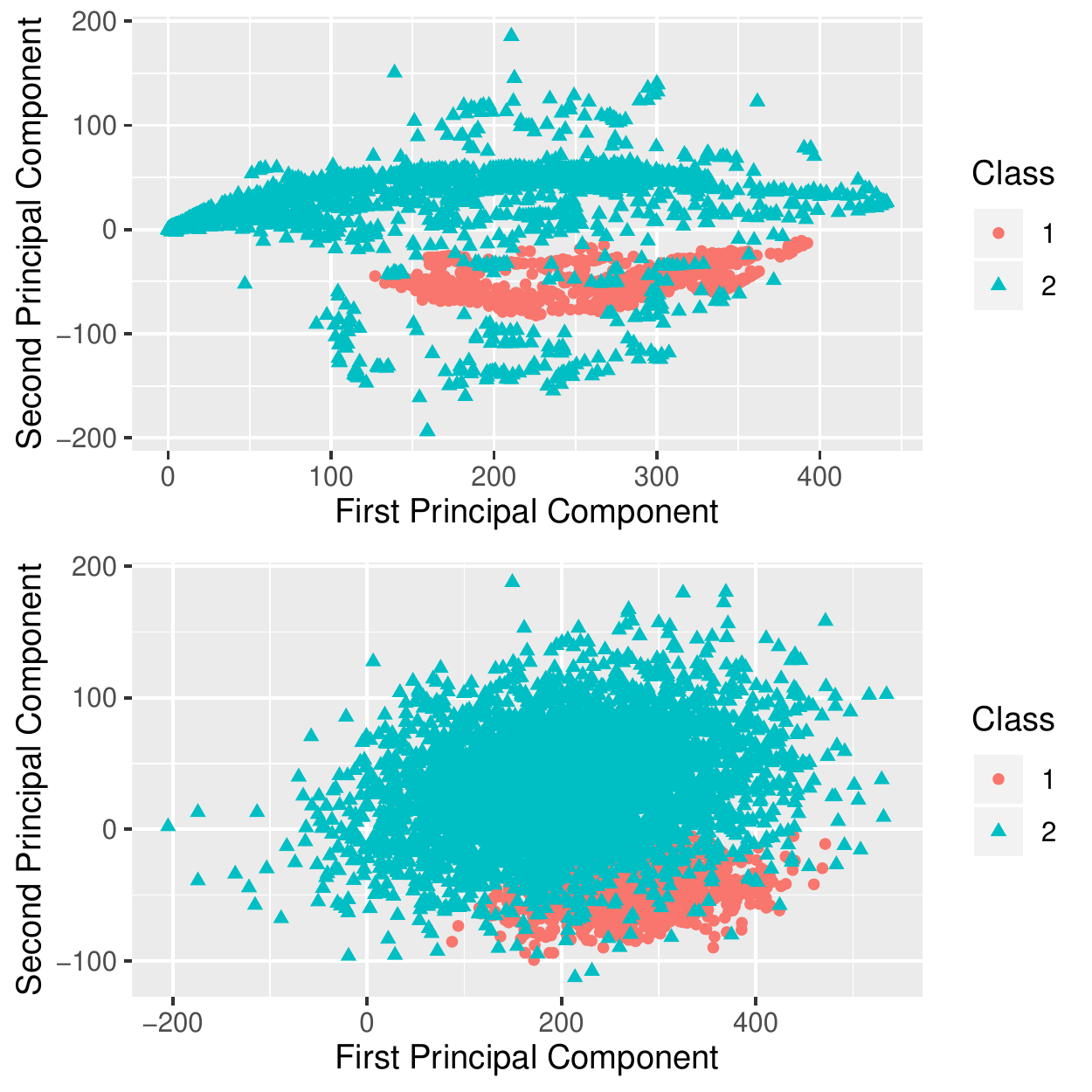}
\caption{
Skin Segmentation Data, the two classes are separated by both shape and color. \textbf{Top:} First two principal components based on $5,000$ training samples. \textbf{Bottom:} First two principal components based on $5,000$ compressed samples with $s=0.001$.
} 
\label{fig:SkinSegmentationPlots} 
\end{figure}

\begin{figure}[!t]
\centering
\includegraphics[width=4in]{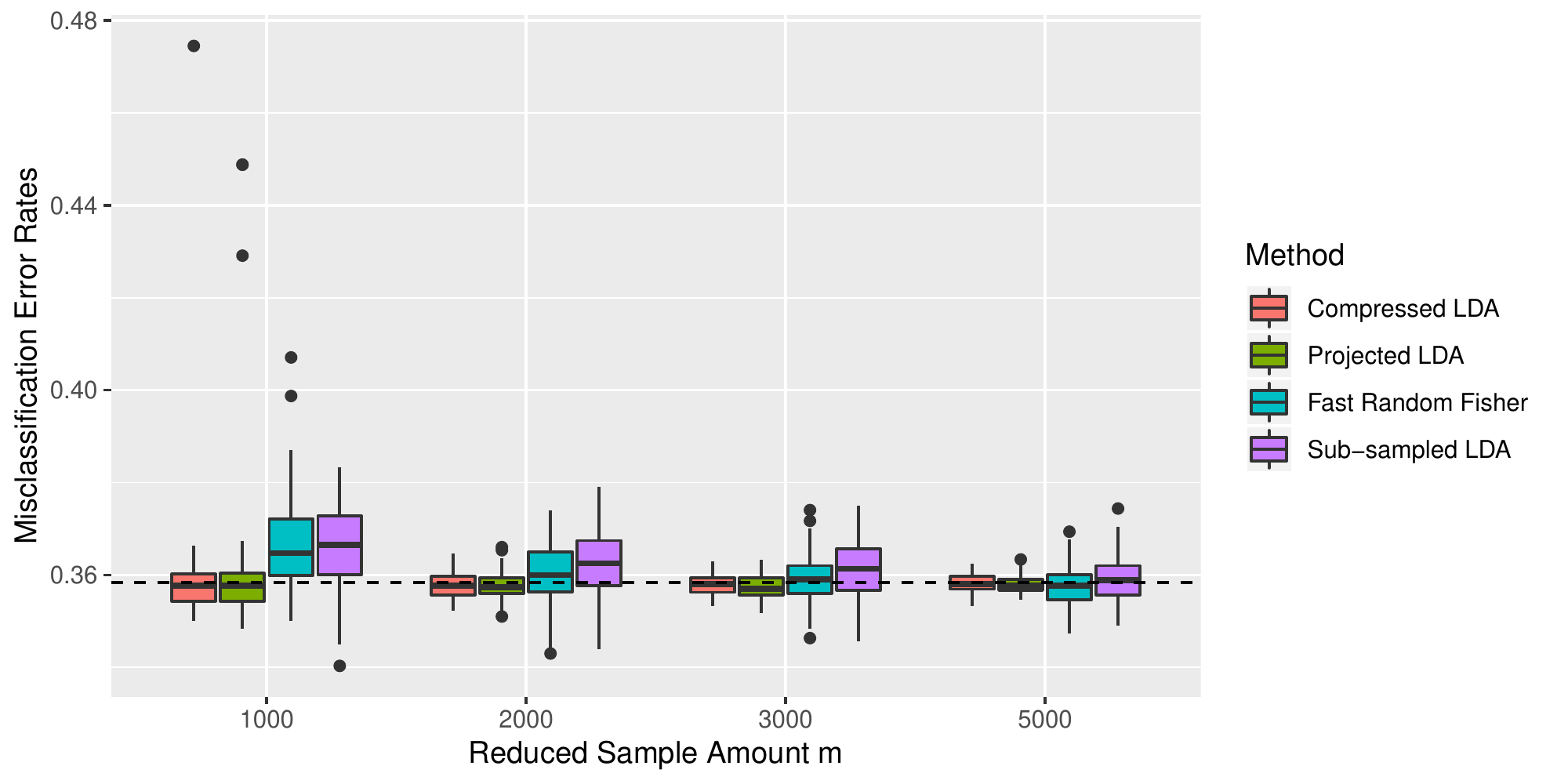} 
\caption{Eye State Data.  Misclassification error rates across 100 replications of compressed LDA, projected LDA, FRF, and sub-sampled LDA across different sample amounts $m$ for $s=0.01$ and $\gamma=10^{-3}$. The dashed line represents the $35.84\%$ error rate of Full LDA. }
\label{fig:EyeState}
\end{figure} 

\subsection{Eye State Data}
We consider the Eye State data \citep{rosler2013first}. This data set has $n = 14,980$ samples with $p = 14$ features corresponding to electroencephalography (EEG) measurements. The goal is to predict whether the eye state is open or closed during the time of the EEG measurements. The data set is almost balanced, with a $44.88\%$ (open) to $55.12\%$ (closed) split. Figure~\ref{fig:EyeState} displays the misclassification error rates across $100$ independent iterations of the linear methods. 

Both compressed and sub-sampled LDA have uniformly lower error rates compared to FRF and sub-sampling. For example, at $m=1,000$, compressed LDA has mean error rate $35.88\%$ (se $0.12\%$), and projected LDA has mean error rate $35.97\%$ (se $0.15\%$). This is compared with FRF which has a mean error rate of $36.63\%$ (se $ 0.10\%$), and sub-sampled LDA which has a mean error rate of $36.61\%$ (se $0.09\%$).

\section{Discussion}
We propose a sample reduction scheme for discriminant analysis through compression. The advantage of compression over sub-sampling is illustrated in Section~\ref{sec:Simulations}, where the proposed Compressed LDA consistently has better classification performance than LDA trained on sub-sampled data. 
The compression scheme is further extended to Projected LDA and Compressed QDA, which again show superior predictive accuracy compared to the same classifiers trained on sub-sampled data.

There are several directions of future research that could be pursued.
First, while we only considered binary classification, our approach can be extended to the multi-class setting by applying compression~\eqref{eq:CompressedSample} to all $G$ classes. Secondly, given our results on compressing in the number of samples, and existing results on compressing in the number of features \citep{li2019one,tu2014making}, it would be of interest to simultaneously consider both compression schemes within discriminant analysis. Finally, here we focused on linear and quadratic classification rules which may be too restrictive. Exploring compression within the kernel discriminant analysis framework \citep{mika1999fisher} will allow for more flexible non-linear classification boundaries.

\section*{Acknowledgments}
This work was supported in part by NSF-DMS 1712943.

\begin{appendix}


\section{Proofs of Theoretical Results}\label{sec:Proofs}

This section contains a proof of Theorem \ref{thm:MisclassificationError} along with supplemental Theorems and Lemmas. In the following $C$ denotes an absolute constant which may change from line to line. If multiple constants appear in the same expression, $C_1$, $C_2$, etc. will be used to differentiate them.

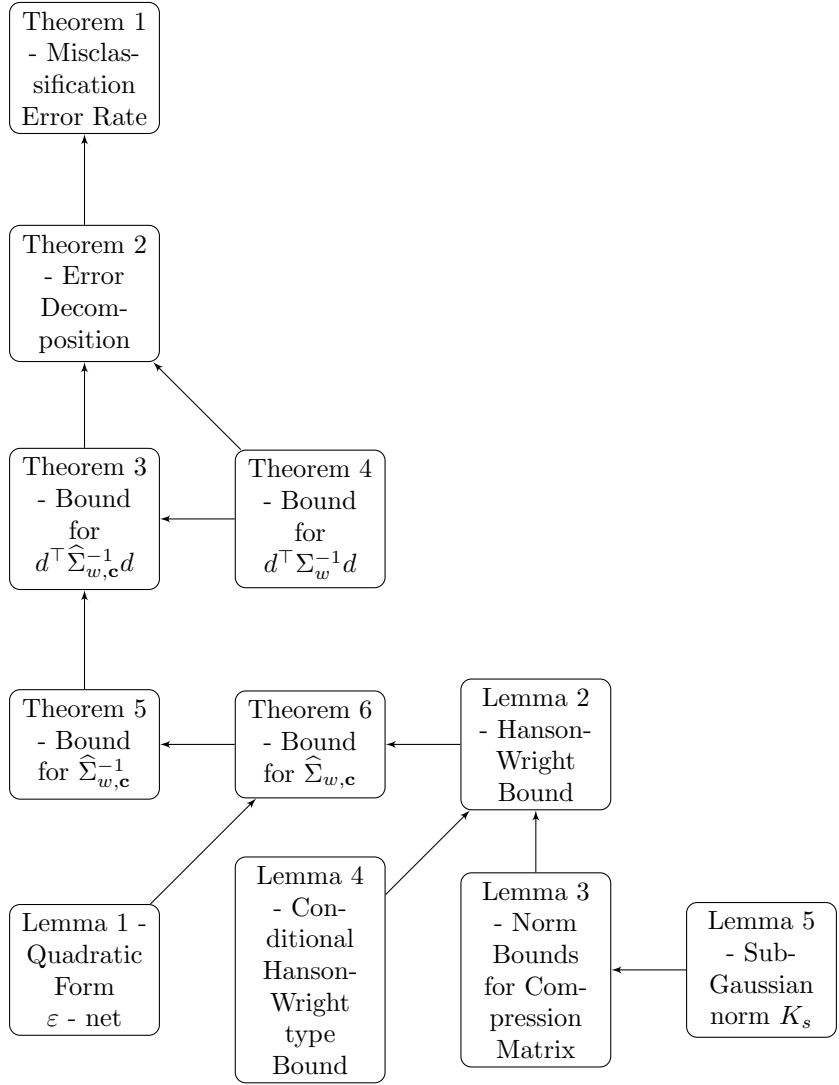
\begin{figure}[ht]
\begin{center}
\begin{tikzpicture}[node distance=3cm, auto]
 
\node [block] (init) {Theorem \ref{thm:MisclassificationError} - Misclassification Error Rate} ;
\node [block, below of=init](block-2) {Theorem~\ref{thm:ErrorDecomp} - Error Decomposition} ;
 \node [block, below of=block-2] (block-3) {Theorem~\ref{thm:CovAndMean} - Bound for $d^{\top}\widehat{\Sigma}_{w, \mathbf{c}}^{-1}d$} ;
 \node [block, right of = block-3] (block-4) {Theorem~\ref{thm:MeanInvCovariance} - Bound for $d^\top \Sigma_{w}^{-1}d$} ;
 \node [block, below of = block-3] (block-5){Theorem~\ref{thm:InvCov} - Bound for $\widehat{\Sigma}_{w, \mathbf{c}}^{-1}$};
 \node [block, right of = block-5] (block-6){Theorem~\ref{thm:SigmaBound} - Bound for $\widehat{\Sigma}_{w, \mathbf{c}}$};
 \node [block, below of = block-5] (block-7) {Lemma~\ref{lem:NetQuadForm} - Quadratic Form $\varepsilon$ - net};
 \node [block, right of = block-6](block-8){Lemma~\ref{lem:UnconditionalHW} -  Hanson-Wright Bound};
  \node [block, below of = block-8](block-10){Lemma~\ref{lem:NormBounds} - Norm Bounds for Compression Matrix};
   \node [block, left of = block-10](block-9){Lemma~\ref{lem:ConditionalHW} - Conditional Hanson-Wright type Bound};
   \node [block, right of = block-10](block-11){Lemma~\ref{lem:sub-GaussianNorm} - Sub-Gaussian norm $K_{s}$};

\path [line] (block-2) -- (init) ;
\path [line] (block-3) -- (block-2) ;
\path [line] (block-4) -- (block-2) ;
\path [line] (block-4) -- (block-3) ;
\path [line] (block-5) -- (block-3) ;
\path [line] (block-6) -- (block-5) ;
\path [line] (block-7) -- (block-6);
\path [line] (block-8) -- (block-6);
\path [line] (block-9) -- (block-8);
\path [line] (block-10) -- (block-8);
\path [line] (block-11) -- (block-10);
\end{tikzpicture}
\caption{Proof chart for Theorem \ref{thm:MisclassificationError}. }
\end{center}
\end{figure}

We make the following assumption which is useful for simplifying expressions in the theory. 
\begin{assumption}\label{assump:Sizeofm}
The number of compressed samples $m$ is large enough so that $\log(\eta^{-1})/m\leq 1$. Additionally, the number of original training samples $n$ is large enough so that $\log(\eta^{-1})\leq \sqrt{n}.$
\end{assumption}
\begin{remark}
Assumption~\ref{assump:Sizeofm} is mild. For instance, if $\eta= 10^{-10}$, then $m$ must be at least $24$, and $n$ must be at least $531$. If $\eta=10^{-2},$ then $m$ muust be at least $5$, and $n$ must be at least $22$
\end{remark}

\newpage

\begin{proof}[Proof of Theorem~\ref{thm:MisclassificationError}]
By Theorem~\ref{thm:ErrorDecomp}, the compressed LDA misclassification error rate $R_{\mathbf{c}}$ has the form
$$
R_{\textbf{c}}=f(\varepsilon_1^1,\varepsilon^2_1,\varepsilon_2) =\frac{1}{2}\sum_{g=1}^{2} 
\Phi \bigg( \frac{\varepsilon_{1}^{g}-\delta^\top\Sigma_{w}^{-1}\delta
}{\sqrt{
\varepsilon_{2}+
\delta^{\top}\Sigma_{w}^{-1}\delta 
}}\bigg),
$$
where $\varepsilon_1^g$ and $\varepsilon_2$ are defined in Theorem~\ref{thm:ErrorDecomp}. Let $\varepsilon = (\varepsilon_{1}^{1}, \varepsilon_{1}^{2}, \varepsilon_{2})$. Taking the first-order Taylor expansion of $f$ centered at $0$ gives
$$
R_{\textbf{c}} = f(\varepsilon)= \Phi(-\sqrt{\delta^\top \Sigma_{w}^{-1}\delta})+\nabla f(0)^\top \varepsilon+o_{p}(\|\varepsilon\|_{2})=R_{\text{opt}}+\nabla f(0)^\top \varepsilon+o_{p}(\|\varepsilon\|_{2}).
$$
Plugging this expansion into $|R_{\mathbf{c}}-R_{\text{opt}}|$ gives
\begin{align*}
    | R_{\mathbf{c}} - R_{\text{opt}}|&=
    \bigg|
   \Phi(-\sqrt{\delta^\top \Sigma_{w}^{-1}\delta})+
   \nabla f(0)^\top \varepsilon\,+o_{p}(\|\varepsilon\|_{2})-R_{\text{opt}}
\bigg|\\
  &\leq\bigg|
   R_{\text{opt}}+\nabla f(0)^\top \varepsilon-R_{\text{opt}}\bigg|+o_{p}(\|\varepsilon\|_{2})\\
   &=\bigg| \nabla f(0)^\top \varepsilon
\bigg|+o_{p}(\|\varepsilon\|_{2})\\
&\leq   C \|\nabla f(0)\|_{2}\,\|\varepsilon\|_{2},
\end{align*}
where we absorbed the lower-order $o_{p}(\|\varepsilon\|_{2})$ into the absolute constant $C>0$. 

We now compute $\|\nabla f(0)\|_{2}.$ The partial derivatives are
$$
\frac{\partial f}{\partial \varepsilon_{1}^{g}}(0)
=\frac{1}{2}\phi\bigg( 
\frac{-\delta^\top \Sigma_{w}^{-1}\delta}{\sqrt{\delta^\top \Sigma_{w}^{-1}\delta}}
\bigg)\bigg[\frac{1}{\sqrt{\delta^\top \Sigma_{w}^{-1}\delta}} \bigg]= \frac{\phi(\sqrt{\delta^\top \Sigma_{w}^{-1}\delta})}{2 \sqrt{\delta^\top \Sigma_w^{-1} \delta}}
$$
and
$$
\frac{\partial f}{\partial \varepsilon_{2}}
(0)
=-\frac{1}{4}
\phi\bigg( 
\frac{-\delta^\top \Sigma_{w}^{-1}\delta}{\sqrt{\delta^\top \Sigma_{w}^{-1}\delta}}
\bigg)
\bigg[ 
\frac{-\delta^\top \Sigma_{w}^{-1}\delta}{(\delta^\top\Sigma_{w}^{-1}\delta)^{3/2}}
\bigg]=\frac{\phi(\sqrt{\delta^\top \Sigma_{w}^{-1} \delta})}{4 \sqrt{\delta^\top \Sigma_{w}^{-1} \delta}},
$$
where $\phi$ denotes the standard normal density. It follows that
\begin{align*}
    \|\nabla f(0)\|_{2}=\frac{\phi(-\sqrt{\delta^\top \Sigma_{w}^{-1} \delta})}{2 \sqrt{\delta^\top \Sigma_{w}^{-1} \delta}}\|\begin{pmatrix} 1&1&1/2\end{pmatrix}\|_{2}=
    \frac{3\,\phi(\sqrt{\delta^\top \Sigma_{w}^{-1} \delta})}{4 \sqrt{\delta^\top \Sigma_{w}^{-1} \delta}}.
\end{align*}

We now focus on bounding the error term $\|\varepsilon\|_{2}$. We have
\begin{align*}
    \|\varepsilon\|_{2}\leq \|\varepsilon\|_{1}&=|\varepsilon_{1}^{1}|+|\varepsilon_1^2|+|\varepsilon_2|.
\end{align*}
Applying Theorem~\ref{thm:ErrorDecomp} proves that with probability at least $1-\eta:$
\begin{align*}
|\varepsilon_{1}^{g}|&\leq C\, K_{s}^{2}\,(\|\Sigma_{w}^{-1/2}\delta\|_{2}^{2}+\|\Sigma_{w}^{-1/2}\delta\|_{2})\sqrt{\frac{\log(\eta^{-1})+p}{m}}\\
|\varepsilon_{2}|&\leq C\, K_{s}^{2}\,(\|\Sigma_{w}^{-1/2}\delta\|_{2}^{2}+\|\Sigma_{w}^{-1/2}\delta\|_{2})\sqrt{\frac{\log(\eta^{-1})+p}{m}}.
\end{align*}
It follows that with probability at least $1-\eta:$
\begin{align*}
   |R_{\textbf{c}} -R_{\text{opt}}|&\leq  C \frac{\,\phi(\sqrt{\delta^\top \Sigma_{w}^{-1} \delta})}{ \sqrt{\delta^\top \Sigma_{w}^{-1} \delta}}\, K_{s}^{2}\,(\|\Sigma_{w}^{-1/2}\delta\|_{2}^{2}+\|\Sigma_{w}^{-1/2}\delta\|_{2})\sqrt{\frac{\log(\eta^{-1})+p}{m}}\\
   &\leq 
    C \,\phi(\sqrt{\delta^\top \Sigma_{w}^{-1} \delta})\, K_{s}^{2}\,(\sqrt{\delta^\top \Sigma_{w}^{-1}\delta}+1)\sqrt{\frac{\log(\eta^{-1})+p}{m}}.
\end{align*}
This proves the Theorem.\qedhere
\end{proof}

\vspace{.2in}

\begin{theorem}\label{thm:ErrorDecomp}
 Let $R_{\mathbf{c}}$ be the misclassification error rate \eqref{eq:COSerror} of the compressed LDA decision rule.
Then $R_\mathbf{c}$ has the form
\begin{align*}
R_{\mathbf{c}}=
\frac{1}{2}\sum_{g=1}^{2} 
\Phi \bigg( \frac{\varepsilon_{1}^{g}-\delta^\top\Sigma_{w}^{-1}\delta
}{\sqrt{
\varepsilon_{2}+
\delta^{\top}\Sigma_{w}^{-1}\delta 
}}\bigg),\\
\end{align*}
where
\begin{align*}
\varepsilon_{1}^{g}&= (-1)^{g}d^{\top}\widehat{\Sigma}_{w,\mathbf{c}}^{-1}(\mu_{g}-\overline{X}_{g})- 
d^{\top}\widehat{\Sigma}_{w,\mathbf{c}}^{-1}d + \delta^\top \Sigma_{w}^{-1}\delta\\
\varepsilon_{2} &= d^{\top}\widehat{\Sigma}_{w,\mathbf{c}}^{-1}\, \Sigma_{w}\,\widehat{\Sigma}_{w,\mathbf{c}}^{-1}d-\delta^\top \Sigma^{-1}_{w}\delta.
\end{align*}
Then the error terms $\varepsilon_1$ and $\varepsilon_2$ have the following upper bounds with probability at least $1-\eta:$
\begin{align*}
    &
  |\varepsilon_{1}^{g}|\leq 
C\,K_{s}^{2}(\|\Sigma_{w}^{-1/2}\delta\|_{2}
 +\|\Sigma_{w}^{-1/2}\delta\|_{2}^{2})\sqrt{\frac{\log(\eta^{-1})+p}{m}},
\end{align*}
and 
\begin{align*}
    |\varepsilon_{2}|&\leq  C\,K_{s}^{2}(\|\Sigma_{w}^{-1/2}\delta\|_{2}
 +\|\Sigma_{w}^{-1/2}\delta\|_{2}^{2})\sqrt{\frac{\log(\eta^{-1})+p}{m}}.
\end{align*}
Here, $C>0$ is an absolute constant, and $K_s = \{s\log(1+s^{-1})\}^{-1/2}$ is the sub-Gaussian norm of $Q_{i,j}^{g}/\sqrt{s}$- the entries of the compression matrices. 
\end{theorem}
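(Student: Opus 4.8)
The plan is to prove Theorem~\ref{thm:ErrorDecomp} in two stages: first establishing the exact functional form of $R_{\mathbf c}$, then bounding $\varepsilon_1^g$ and $\varepsilon_2$.

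For the functional form, I start from the expression \eqref{eq:COSerror} for $R_{\mathbf c}$. Under Assumption~\ref{assump:EqualClass} we have $n_1 = n_2$, so $d = \tfrac{1}{2}(\overline X_1 - \overline X_2)$, and $d$ plays the role of the compressed analogue of $\delta$. I would algebraically massage the numerator of each summand in \eqref{eq:COSerror}: writing $d^\top\widehat\Sigma_{w,\mathbf c}^{-1}\{(-1)^g(\mu_g - \overline X_g) - d\}$ and adding and subtracting $\delta^\top\Sigma_w^{-1}\delta$, one sees that the numerator equals $\varepsilon_1^g - \delta^\top\Sigma_w^{-1}\delta$ by the very definition of $\varepsilon_1^g$ in the statement. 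Similarly, the denominator $\sqrt{d^\top\widehat\Sigma_{w,\mathbf c}^{-1}\Sigma_w\widehat\Sigma_{w,\mathbf c}^{-1}d}$ equals $\sqrt{\varepsilon_2 + \delta^\top\Sigma_w^{-1}\delta}$ by the definition of $\varepsilon_2$. So this stage is essentially a definitional bookkeeping step and carries no real difficulty.

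The substantive stage is the high-probability bound on $|\varepsilon_1^g|$ and $|\varepsilon_2|$, and this is where I would lean on the downstream results already charted in the proof diagram. Both $\varepsilon_1^g$ and $\varepsilon_2$ are built from three ingredients: the quadratic form $d^\top\widehat\Sigma_{w,\mathbf c}^{-1}d$, the cross term $d^\top\widehat\Sigma_{w,\mathbf c}^{-1}(\mu_g - \overline X_g)$, and the sandwiched term $d^\top\widehat\Sigma_{w,\mathbf c}^{-1}\Sigma_w\widehat\Sigma_{w,\mathbf c}^{-1}d$. The strategy is to control each by comparison to its population analogue $\delta^\top\Sigma_w^{-1}\delta$. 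For $\varepsilon_2$, I would write $\widehat\Sigma_{w,\mathbf c}^{-1}\Sigma_w\widehat\Sigma_{w,\mathbf c}^{-1} = \widehat\Sigma_{w,\mathbf c}^{-1}(\Sigma_w - \widehat\Sigma_{w,\mathbf c})\widehat\Sigma_{w,\mathbf c}^{-1} + \widehat\Sigma_{w,\mathbf c}^{-1}$, reducing the sandwiched term to a perturbation governed by $\|\widehat\Sigma_{w,\mathbf c} - \Sigma_w\|_{\mathrm{op}}$ (Theorem~\ref{thm:SigmaBound}), the operator-norm control of $\widehat\Sigma_{w,\mathbf c}^{-1}$ (Theorem~\ref{thm:InvCov}), plus the difference $d^\top\widehat\Sigma_{w,\mathbf c}^{-1}d - \delta^\top\Sigma_w^{-1}\delta$ (Theorem~\ref{thm:CovAndMean} combined with Theorem~\ref{thm:MeanInvCovariance}). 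For $\varepsilon_1^g$, the cross term $d^\top\widehat\Sigma_{w,\mathbf c}^{-1}(\mu_g - \overline X_g)$ additionally needs that $\overline X_g - \mu_g$ is $O_p(n^{-1/2})$ in the $\widehat\Sigma_{w,\mathbf c}^{-1}$-weighted norm — a standard Gaussian concentration of the sample mean, absorbed using Assumption~\ref{assump:Sizeofm} so that the $n^{-1/2}$ term is dominated by $m^{-1/2}$. Collecting the pieces, invoking a union bound over the $O(1)$ events and renaming the constant, I obtain the claimed bound with the factor $K_s^2$ entering through the sub-Gaussian norm of the compression-matrix entries and the factor $(\|\Sigma_w^{-1/2}\delta\|_2 + \|\Sigma_w^{-1/2}\delta\|_2^2)$ collecting the linear and quadratic dependence on the signal strength.

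The main obstacle I anticipate is not in this theorem per se but in the pieces it cites — specifically the bound on $\|\widehat\Sigma_{w,\mathbf c} - \Sigma_w\|_{\mathrm{op}}$ (Theorem~\ref{thm:SigmaBound}), where the dependence structure across the $m$ compressed samples (shared class mean $\overline X_g$, and repeated original samples across rows of $Q^g$) prevents a direct application of standard covariance concentration. Within the present theorem, the only real care needed is to ensure that every perturbation term genuinely scales like $\sqrt{(\log(\eta^{-1}) + p)/m}$ rather than, say, that quantity squared: cross terms between two $O(\sqrt{(\log(\eta^{-1})+p)/m})$ factors must be shown to be lower order, which is where Assumption~\ref{assump:Sizeofm} (guaranteeing $\log(\eta^{-1})/m \le 1$) does the work of keeping the leading-order term linear in the deviation.
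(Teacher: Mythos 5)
Your proposal is correct and follows essentially the same route as the paper: the functional form is indeed pure bookkeeping from \eqref{eq:COSerror}, and the bounds on $\varepsilon_1^g$ and $\varepsilon_2$ reduce, exactly as you outline, to the whitened operator-norm controls of Theorems~\ref{thm:SigmaBound}--\ref{thm:InvCov}, the quadratic-form comparisons of Theorems~\ref{thm:CovAndMean}--\ref{thm:MeanInvCovariance}, Gaussian concentration for $\overline{X}_g-\mu_g$ and $d-\delta$, and Assumption~\ref{assump:Sizeofm} to absorb the $n^{-1/2}$-order terms. The only cosmetic difference is in the sandwiched term of $\varepsilon_2$: you use the resolvent identity $\widehat\Sigma_{w,\mathbf c}^{-1}\Sigma_w\widehat\Sigma_{w,\mathbf c}^{-1}-\widehat\Sigma_{w,\mathbf c}^{-1}=\widehat\Sigma_{w,\mathbf c}^{-1}(\Sigma_w-\widehat\Sigma_{w,\mathbf c})\widehat\Sigma_{w,\mathbf c}^{-1}$ anchored at $d^\top\widehat\Sigma_{w,\mathbf c}^{-1}d$, whereas the paper anchors at $d^\top\Sigma_w^{-1}d$ and bounds $\|(\Sigma_w^{1/2}\widehat\Sigma_{w,\mathbf c}^{-1}\Sigma_w^{1/2})^2-I_p\|_{\text{op}}\le(2+\|I_p-A\|_{\text{op}})\|I_p-A\|_{\text{op}}$; both yield the same rate from the same lemmas.
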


\begin{proof}[Proof of Theorem~\ref{thm:ErrorDecomp}]
We have
\begin{align*}
    |\varepsilon_{1}^{g}|&\leq \underbrace{|d^{\top}\widehat{\Sigma}_{w,\mathbf{c}}^{-1}(\mu_{g}-\overline{X}_{g})|}_{(I)}+
     \underbrace{|d^{\top}\widehat{\Sigma}_{w,\mathbf{c}}^{-1}d-\delta^\top \Sigma^{-1}_{w}\delta|}_{(II)}\\
\end{align*}

We first bound $(I)$. Consider
\begin{align*}
    |(I)|=|d^{\top}\widehat{\Sigma}_{w,\mathbf{c}}^{-1}(\mu_{g}-\overline{X}_{g})|&= 
    |d^{\top}\Sigma_{w}^{-1/2}(\Sigma_{w}^{1/2}\widehat{\Sigma}_{w,\mathbf{c}}^{-1}\Sigma_{w}^{1/2})\Sigma_{w}^{-1/2}(\mu_{g}-\overline{X}_{g})|\\
    &\leq \underbrace{\|d^{\top}\Sigma_{w}^{-1/2}\|_{2}}_{A_1}\,\underbrace{\|\Sigma_{w}^{1/2}\widehat{\Sigma}_{w,\mathbf{c}}^{-1}\Sigma_{w}^{1/2}\|_{\text{op}}\,}_{A_2} \underbrace{\|\Sigma_{w}^{-1/2}(\mu_{g}-\overline{X}_{g})\|_{2}}_{A_3}.
\end{align*}
We bound $A_1 - A_3$ separately.  

For $A_1$, by Assumptions~\ref{assump:Normality} and \ref{assump:EqualClass}, 
$\Sigma_{w}^{-1/2}d \sim N(\Sigma_{w}^{-1/2}\delta, n^{-1}I_{p})$. By the triangle inequality and Proposition 1.1 of \citet{hsu2012tail}, the following holds with probability at least $1-\eta$ for any $\eta\in (0, e^{-1}):$ 
\begin{align*}
    \|\Sigma_{w}^{-1/2}d\|_{2} &\leq \|\Sigma_{w}^{-1/2}\delta\|_{2} +\|\Sigma_{w}^{-1/2}(d-\delta)\|_{2}\\
    &\leq \|\Sigma_{w}^{-1/2}\delta\|_{2}+
    \bigg(\frac{p}{n}+\frac{2\sqrt{p \log(\eta^{-1})}}{n}+\frac{2\log(\eta^{-1})}{n} \bigg)^{1/2}\\
&\leq \|\Sigma_{w}^{-1/2}\delta\|_{2} +\bigg(
\frac{p\log(\eta^{-1})}{n}+\frac{2\sqrt{p \log(\eta^{-1})}}{n}+\frac{2p\log(\eta^{-1})}{n}\bigg)^{1/2}\\
&\leq \|\Sigma_{w}^{-1/2}\delta\|_{2}+C\sqrt{\frac{p\log(\eta^{-1})}{n}}
\end{align*}

We now bound $A_2$. By Theorem 5, the following inequality holds with probability at least $1-\eta/3:$
$$
\|\Sigma_{w}^{1/2}\widehat{\Sigma}_{w,\mathbf{c}}^{-1}\Sigma_{w}^{1/2}\|_{\text{op}}
\leq
\|I_{p}\|_{\text{op}}+\|\Sigma_{w}^{1/2}\widehat{\Sigma}_{w,\mathbf{c}}^{-1}\Sigma_{w}^{1/2}- I_{p}\|_{\text{op}}
\leq
1+ C_2 \, K_{s}^{2}\, \sqrt{\frac{\log(\eta^{-1})+p}{m}}. 
$$

We now bound $A_3$. By Assumptions~\ref{assump:Normality} and \ref{assump:EqualClass}, $\Sigma_{w}^{-1/2}(\mu_g - \overline{X}_g)\sim N(0, n_{g}^{-1}I_{p})$. By Proposition 1.1 of \citet{hsu2012tail}, the following holds with probability at least $1-\eta:$
$$
\|\Sigma_{w}^{-1/2}(\mu_{g}-\overline{X}_{g})\|_{2}\leq C\sqrt{\frac{p\log(\eta^{-1})}{n}}.
$$
Combining the bounds for $A_1$-$A_3$, with probability at least $1-\eta:$
\begin{equation}\label{eq:Epsilon1Part2}
\begin{split}
     |d^{\top}\widehat{\Sigma}_{w,\mathbf{c}}^{-1}(\mu_{g}-\overline{X}_{g})|&\leq 
     C\bigg( \|\Sigma_{w}^{-1/2}\delta\|_{2}+C\sqrt{\frac{p\log(\eta^{-1})}{n}}\bigg)
     \bigg(1+ C_2 \, K_{s}^{2}\,\sqrt{\frac{\log(\eta^{-1})+p}{m}}\bigg)
     \sqrt{\frac{ p \log(\eta^{-1})}{n}}\\
     &\leq C K_{s}^{2}\|\Sigma_{w}^{-1/2} \delta\|_{2} \sqrt{ \frac{p\log(\eta^{-1})}{n}},
\end{split}
\end{equation}
where the last inequality came from absorbing lower-order terms into the absolute constant $C$. 

We now bound $(II)$. By the triangle inequality and Theorems~\ref{thm:CovAndMean}--\ref{thm:MeanInvCovariance}, with probability at least $1-\eta$:
\begin{align*}
 |d^{\top}\widehat{\Sigma}_{w,\mathbf{c}}^{-1}d-\delta^\top \Sigma^{-1}_{w}\delta|&\leq 
 | d^{\top}\widehat{\Sigma}_{w,\mathbf{c}}^{-1}d - d^\top \Sigma_{w}^{-1}d|+|d^\top \Sigma_{w}^{-1}d - \delta^\top \Sigma_{w}^{-1}\delta|\\
 &\leq 
C_{1}\,K_{s}^{2}\,\|\Sigma_{w}^{-1/2}\delta\|_{2}^{2}\,  \sqrt{\frac{\log(\eta^{-1})+p}{m}}+
C_{2} \|\Sigma_{w}^{-1/2}\delta\|_{2}\sqrt{\frac{p \log(\eta^{-1})}{n}}\\
&\leq C\, (K_{s}^{2}\,\|\Sigma_{w}^{-1/2}\delta\|_{2}^{2}+\|\Sigma_{w}^{-1/2}\delta\|_{2})\sqrt{\frac{\log(\eta^{-1})+p}{m}}.
\end{align*}
For $s\leq 0.8$, we have $K_{s}\geq 1$. Thus,
\begin{equation}\label{eq:Epsilon1Part1}
\begin{split}
 |d^{\top}\widehat{\Sigma}_{w,\mathbf{c}}^{-1}d-\delta^\top \Sigma^{-1}_{w}\delta| &\leq
 C\, (K_{s}^{2}\,\|\Sigma_{w}^{-1/2}\delta\|_{2}^{2}+\|\Sigma_{w}^{-1/2}\delta\|_{2})\sqrt{\frac{\log(\eta^{-1})+p}{m}}\\
 &\leq 
  C\,K_{s}^{2}\,(\|\Sigma_{w}^{-1/2}\delta\|_{2}^{2}+\|\Sigma_{w}^{-1/2}\delta\|_{2})\sqrt{\frac{\log(\eta^{-1})+p}{m}}.
 \end{split}
\end{equation}

Combining \eqref{eq:Epsilon1Part2} and \eqref{eq:Epsilon1Part1} gives with probability at least $1-\eta:$
\begin{align*}
|\varepsilon_{1}^{g}|&\leq 
C_1\, K_{s}^{2}\,(\|\Sigma_{w}^{-1/2}\delta\|_{2}^{2}+\|\Sigma_{w}^{-1/2}\delta\|_{2})\sqrt{\frac{\log(\eta^{-1})+p}{m}}
+ C_2 K_{s}^{2}\|\Sigma_{w}^{-1/2} \delta\|_{2} \sqrt{ \frac{p\log(\eta^{-1})}{n}}\\
&\leq C\, K_{s}^{2}\,(\|\Sigma_{w}^{-1/2}\delta\|_{2}^{2}+\|\Sigma_{w}^{-1/2}\delta\|_{2})\sqrt{\frac{\log(\eta^{-1})+p}{m}},
\end{align*}
where the lower-order term has been absorbed into the absolute constant $C_1$.

We now focus on bounding $\varepsilon_2$. The triangle inequality gives
\begin{align*}
    |\varepsilon_{2}| = |d^{\top}\widehat{\Sigma}_{w,\mathbf{c}}^{-1}\, \Sigma_{w}\,\widehat{\Sigma}_{w,\mathbf{c}}^{-1}d-\delta^\top \Sigma_{w}^{-1}\delta|\leq \underbrace{|d^{\top}\widehat{\Sigma}_{w,\mathbf{c}}^{-1}\, \Sigma_{w}\,\widehat{\Sigma}_{w,\mathbf{c}}^{-1}d
    -
    d^{\top} \Sigma_{w}^{-1}d|}_{A_1}
    +
    \underbrace{|d^{\top} \Sigma_{w}^{-1}d- \delta^\top \Sigma_{w}^{-1}\delta|}_{A_2}.
\end{align*}
We bound $A_1$-$A_2$ separately. 

First consider $A_1$. Using identity $I_{p}= \Sigma_{w}^{-1/2}\Sigma_{w}^{1/2}$ gives
\begin{align*}
    |A_1|=|d^\top \widehat{\Sigma}_{w, \mathbf{c}}^{-1}\Sigma_{w}\widehat{\Sigma}_{w, \mathbf{c}}^{-1} d - d^\top \Sigma_{w}d|&
    =
    |d^\top \Sigma_{w}^{-1/2}(\Sigma_{w}^{1/2} \widehat{\Sigma}_{w,\mathbf{c}}^{-1}\Sigma_{w}^{1/2})
   (\Sigma_{w}^{1/2} \widehat{\Sigma}_{w,\mathbf{c}}^{-1}\Sigma_{w}^{1/2})
    \Sigma_{w}^{-1/2}d- d^\top \Sigma_{w}^{-1}d|\\
    &\leq \|\Sigma_{w}^{-1/2} d\|_{2}^{2}\,\|(\Sigma_{w}^{1/2}\widehat{\Sigma}_{w,\mathbf{c}}^{-1}\Sigma_{w}^{1/2})^{2}-I_{p}\|_{\text{op}}.
\end{align*}
Let $A= \Sigma_{w}^{1/2}\widehat{\Sigma}_{w,\mathbf{c}}^{-1}\Sigma_{w}^{1/2}$. Then $\|(\Sigma_{w}^{1/2}\widehat{\Sigma}_{w,\mathbf{c}}^{-1}\Sigma_{w}^{1/2})^{2}-I_{p}\|_{\text{op}}$ is bounded above by
\begin{align*}
    \|I_{p}-A^{2}\|_{\text{op}}&=\|(I_{p}+A)(I_{p}-A)\|_{\text{op}}\\
    &\leq\|2 I_{p}+(A-I_{p})\|_{\text{op}}\|I_{p}-A\|_{\text{op}}\\
    &\leq [2+ \|I_{p}-A\|_{\text{op}}]\,\|I_{p}-A\|_{\text{op}}.
\end{align*}
Using the assumption that $\|I_{p}-A\|_{\text{op}} < 1$ and Theorem~\ref{thm:InvCov}, we have with probability at least $1-\eta:$
\begin{equation}\label{eq:Epsilon2Part1}
 \|I_{p}-A^{2}\|_{\text{op}}< 3 \|I_{p}-A\|_{\text{op}}\leq C\,K_{s}^{2}\,\sqrt{\frac{\log(\eta^{-1})+p}{m}}
\end{equation}
for some absolute constant $C>0.$

By Theorem \ref{thm:MeanInvCovariance}, the following holds with probability at least $1-\eta/2:$
\begin{equation}\label{eq:Epsilon2Part2}
\|\Sigma_{w}^{-1/2} d\|_{2}^{2}\leq \|\Sigma_{w}^{-1/2}\delta\|_{2}^{2}+|d^\top \Sigma_{w}^{-1}d - \delta^\top \Sigma_{w}^{-1}\delta| \leq \|\Sigma_{w}^{-1/2}\delta\|_{2}^{2}+ C \|\Sigma_{w}^{-1/2}\delta\|_{2} \sqrt{\frac{p\,\log(\eta^{-1})}{n}}.
\end{equation}
Combining \eqref{eq:Epsilon2Part1} and \eqref{eq:Epsilon2Part2} proves that the following bound on $A_1$ holds with probability at least $1-\eta:$
\begin{equation}\label{eq:Epsilon2Bound}
\begin{split}
 |d^\top \widehat{\Sigma}_{w, \mathbf{c}}^{-1}\Sigma_{w}\widehat{\Sigma}_{w, \mathbf{c}}^{-1} d - d^\top \Sigma_{w}d| &\leq \bigg(
 \|\Sigma_{w}^{-1/2}\delta\|_{2}^{2}+ C \|\Sigma_{w}^{-1/2}\delta\|_{2} \sqrt{\frac{p\,\log(\eta^{-1})}{n}}
 \bigg)\,C\,K_{s}^{2}\sqrt{\frac{\log(\eta^{-1})+p}{m}}\\
 &\leq C\,K_{s}^{2}(\|\Sigma_{w}^{-1/2}\delta\|_{2}
 +\|\Sigma_{w}^{-1/2}\delta\|_{2}^{2})\sqrt{\frac{\log(\eta^{-1})+p}{m}}.
 \end{split}
\end{equation}

To bound $A_2$, Theorem~\ref{thm:MeanInvCovariance} proves that with probability at least $1-\eta/2,$
\begin{align*}
   |A_2|= |d^{\top} \Sigma_{w}^{-1}d- \delta^\top \Sigma_{w}^{-1}\delta|\leq C \|\Sigma_{w}^{-1}\delta\|_{2}\sqrt{\frac{p \log(\eta^{-1})}{n}}.
\end{align*}

Since $A_2$ is a smaller-order term compared to \eqref{eq:Epsilon2Bound}, we absorb it into the absolute constant $C$. Thus, with probability at least $1-\eta:$
$$
|\varepsilon_{2}|\leq C\,K_{s}^{2}(\|\Sigma_{w}^{-1/2}\delta\|_{2}
 +\|\Sigma_{w}^{-1/2}\delta\|_{2}^{2})\sqrt{\frac{\log(\eta^{-1})+p}{m}}.
$$
This completes the proof.\qedhere
\end{proof}

\vspace{.2in}

\begin{theorem}\label{thm:CovAndMean}
Let the samples $X\in \mathbb{R}^{n\times p}$ be distributed according to Assumption~\ref{assump:Normality}.
Let $d$ and $\delta \in \mathbb{R}^{p}$ be as in Definition~\ref{def:MeanAndCov}, and let $\widehat{\Sigma}_{w,\mathbf{c}}$ be the compressed within-group covariance matrix. Then with probability at least $1-\eta,$
\begin{align*}
  | d^{\top}\widehat{\Sigma}_{w, \mathbf{c}}^{-1}d
-
d^{\top}\Sigma_{w}^{-1}d|\leq
C\,K_{s}^{2}\,\|\Sigma_{w}^{-1/2}\delta\|_{2}^{2}\, \sqrt{\frac{\log(\eta^{-1})+p}{m}} ,
\end{align*}
where $C>0$ is an absolute constant, and $K_{s}=\{s\log(1+s^{-1})\}^{-1/2}$ is the sub-Gaussian norm of $Q_{i,j}^{g}/\sqrt{s}$.
\end{theorem}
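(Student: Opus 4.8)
The plan is to reduce the claim to two quantities already controlled by the preceding results: the relative spectral error of the compressed inverse covariance, and the Mahalanobis norm of the class mean difference. First I would insert the identity $I_p=\Sigma_w^{-1/2}\Sigma_w^{1/2}$ on both sides of $\widehat\Sigma_{w,\mathbf{c}}^{-1}$ and of $\Sigma_w^{-1}$, rewriting the difference as a single quadratic form,
\[
d^{\top}\widehat{\Sigma}_{w,\mathbf{c}}^{-1}d-d^{\top}\Sigma_w^{-1}d=(\Sigma_w^{-1/2}d)^{\top}\big(\Sigma_w^{1/2}\widehat{\Sigma}_{w,\mathbf{c}}^{-1}\Sigma_w^{1/2}-I_p\big)(\Sigma_w^{-1/2}d),
\]
so that, by the definition of the operator norm,
\[
| d^{\top}\widehat{\Sigma}_{w,\mathbf{c}}^{-1}d-d^{\top}\Sigma_w^{-1}d | \le \|\Sigma_w^{-1/2}d\|_2^{2}\,\big\|\Sigma_w^{1/2}\widehat{\Sigma}_{w,\mathbf{c}}^{-1}\Sigma_w^{1/2}-I_p\big\|_{\text{op}}.
\]
This isolates a random weight $\|\Sigma_w^{-1/2}d\|_2^2$ depending only on the original data and a spectral term depending (given the data) only on the compression matrices.

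Next I would bound the two factors on disjoint halves of the probability budget. For the spectral factor, Theorem~\ref{thm:InvCov} gives, on an event of probability at least $1-\eta/2$, $\|\Sigma_w^{1/2}\widehat{\Sigma}_{w,\mathbf{c}}^{-1}\Sigma_w^{1/2}-I_p\|_{\text{op}}\le C_1 K_s^{2}\sqrt{(\log(\eta^{-1})+p)/m}$; replacing $\eta$ by $\eta/2$ inflates the logarithmic term by only a bounded factor, which is absorbed into the constant. For the weight factor, I would write $\|\Sigma_w^{-1/2}d\|_2^{2}=d^{\top}\Sigma_w^{-1}d$ and apply Theorem~\ref{thm:MeanInvCovariance}, obtaining on an event of probability at least $1-\eta/2$ that $d^{\top}\Sigma_w^{-1}d\le \|\Sigma_w^{-1/2}\delta\|_2^{2}+C_2\|\Sigma_w^{-1/2}\delta\|_2\sqrt{p\log(\eta^{-1})/n}\le C_3\|\Sigma_w^{-1/2}\delta\|_2^{2}$, exactly as in~\eqref{eq:Epsilon2Part2}, where the $\sqrt{p/n}$ fluctuation of $d$ about $\delta$ is treated as a lower-order term. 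A union bound over the two events, followed by multiplying the two estimates, then yields $| d^{\top}\widehat{\Sigma}_{w,\mathbf{c}}^{-1}d-d^{\top}\Sigma_w^{-1}d |\le C\,K_s^{2}\,\|\Sigma_w^{-1/2}\delta\|_2^{2}\sqrt{(\log(\eta^{-1})+p)/m}$, which is the assertion.

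Given Theorems~\ref{thm:InvCov} and~\ref{thm:MeanInvCovariance}, the remaining work is essentially bookkeeping, so I do not anticipate a genuine obstacle inside this proof: the real difficulty lives one level down, in establishing Theorem~\ref{thm:InvCov} (and, through it, Theorem~\ref{thm:SigmaBound}), where one must handle the dependence among the $m$ compressed samples arising from the shared class mean $\overline{X}_g$ and from overlapping support of the rows of $Q^g$. The only points here that need a little care are: matching up the failure probabilities so the union bound still produces $\log(\eta^{-1})$ rather than something larger; noting that, unlike the $A^2$ expansion used for $\varepsilon_2$ in the proof of Theorem~\ref{thm:ErrorDecomp}, the bound $|x^\top M x|\le \|x\|_2^2\|M\|_{\text{op}}$ here requires no assumption $\|I_p-A\|_{\text{op}}<1$; and confirming, via Assumption~\ref{assump:Sizeofm} (in the regime where the Mahalanobis signal dominates), that the Gaussian fluctuation of $d$ about $\delta$ is genuinely lower order, so the final bound collapses to the stated form involving only $\|\Sigma_w^{-1/2}\delta\|_2^{2}$.
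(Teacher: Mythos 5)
Your proposal matches the paper's own proof essentially step for step: the same insertion of $I_p=\Sigma_w^{-1/2}\Sigma_w^{1/2}$ to reduce the difference to $\|\Sigma_w^{-1/2}d\|_2^2\,\|\Sigma_w^{1/2}\widehat{\Sigma}_{w,\mathbf{c}}^{-1}\Sigma_w^{1/2}-I_p\|_{\text{op}}$, the same appeal to Theorem~\ref{thm:InvCov} for the spectral factor and to Theorem~\ref{thm:MeanInvCovariance} for the weight $\|\Sigma_w^{-1/2}d\|_2^2$, and the same absorption of the $O(\sqrt{p\log(\eta^{-1})/n})$ fluctuation into the constant. The proof is correct and no further comparison is needed.
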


\begin{proof}[Proof of Theorem~\ref{thm:CovAndMean}]
 We have
\begin{align*}
    | d^{\top}\widehat{\Sigma}_{w, \mathbf{c}}^{-1}d
-
d^{\top}\Sigma_{w}^{-1}d|
& =
 | d^{\top}\Sigma_{w}^{-1/2}\Sigma_{w}(\widehat{\Sigma}_{w, \mathbf{c}}^{-1}
-
\Sigma_{w}^{-1})\Sigma_{w}^{1/2}\Sigma_{w}^{-1/2}d|\\
&\leq \|\Sigma_{w}^{-1/2}d\|_{2}^{2}\, \|\Sigma_{w}^{1/2}\widehat{\Sigma}_{w, \mathbf{c}}^{-1}\Sigma_{w}^{1/2}-I_{p}\|_{\text{op}}.
\end{align*}
By Theorem~\ref{thm:InvCov}, with probability at least $1-\eta/2,$
$$
\|\Sigma_{w}^{1/2}\widehat{\Sigma}_{w, \mathbf{c}}^{-1}\Sigma_{w}^{1/2}-I_{p}\|_{\text{op}}\leq C\,K_{s}^{2}\,\sqrt{\frac{\log(\eta^{-1})+p}{m}}.
$$
for some absolute constant $C>0$, and where $K_{s}=\{s\log(1+s^{-1})\}^{-1/2}$ is the sub-Gaussian norm of $Q_{i,j}^{g}/\sqrt{s}$ by Lemma~\ref{lem:sub-GaussianNorm}.

By the triangle inequality and Theorem~\ref{thm:MeanInvCovariance}, with probability at least $1-\eta:$
\begin{align*}
\|\Sigma_{w}^{-1/2} d\|_{2}^{2} &=
|d^\top \Sigma_{w}^{-1}d - \delta^\top \Sigma_{w}^{-1}\delta + \delta^\top \Sigma_{w}^{-1}\delta|\\
&\leq\|\Sigma_{w}^{-1/2}\delta\|_{2}^{2} + |d^\top \Sigma_{w}^{-1}d - \delta^\top \Sigma_{w}^{-1}\delta| 
\leq \|\Sigma_{w}^{-1/2}\delta\|_{2}^{2} + C \|\Sigma_{w}^{-1/2}\delta\|_{2}\,\sqrt{\frac{p \log(\eta^{-1})}{n}}.
\end{align*}

Combining the two displays above and absorbing the lower order term into the absolute constant $C$, we have that with probability at least $1-\eta$
\begin{align*}
 | d^{\top}\widehat{\Sigma}_{w, \mathbf{c}}^{-1}d
-
d^{\top}\Sigma_{w}^{-1}d|&\leq \bigg(\|\Sigma_{w}^{-1/2}\delta\|_{2}^{2} + C \|\Sigma_{w}^{-1/2}\delta\|_{2}\,\sqrt{\frac{p \log(\eta^{-1})}{n}}\bigg)C_{1}\,K_{s}^{2}\,\sqrt{\frac{\log(\eta^{-1})+p}{m}}\\
&\leq C\,K_{s}^{2}\|\Sigma_{w}^{-1/2}\delta\|_{2}^{2}\,\sqrt{\frac{\log(\eta^{-1})+p}{m}}.
\end{align*}
\qedhere
\end{proof}

\begin{theorem}\label{thm:MeanInvCovariance}
Let the samples in $X\in \mathbb{R}^{n\times p}$ be distributed according to Assumption~\ref{assump:Normality}, and let $d$ and $\delta$ be as in Definition~\ref{def:MeanAndCov}. Then for $\eta \in (0, e^{-1})$, the following upper bound holds with probability at least $1-\eta,$
\begin{align*}
    |d^{\top} \Sigma_{w}^{-1}d - \delta^\top \Sigma_{w}^{-1}\delta|
  \leq  
C \|\Sigma_{w}^{-1/2}\delta\|_{2}\sqrt{\frac{p \log(\eta^{-1})}{n}}
\end{align*}
for some absolute constant $C>0$.
\end{theorem}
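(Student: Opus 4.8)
The plan is to reduce the claim to the concentration of a single Gaussian quadratic form. First I would note that under Assumptions~\ref{assump:Normality} and~\ref{assump:EqualClass} we have $\Sigma_{w}^{-1/2}d \sim N(\Sigma_{w}^{-1/2}\delta,\, n^{-1}I_{p})$: since $\overline{X}_{1}-\overline{X}_{2}\sim N(\mu_{1}-\mu_{2},\,(n_{1}^{-1}+n_{2}^{-1})\Sigma_{w})$ and $n_{1}=n_{2}=n/2$, the prefactor $\sqrt{n_{1}n_{2}}/n=1/2$ turns the mean into $\delta$ and the covariance into $n^{-1}\Sigma_{w}$. Writing $\nu:=\Sigma_{w}^{-1/2}\delta$ and $\Sigma_{w}^{-1/2}d=\nu+z$ with $z\sim N(0,n^{-1}I_{p})$, the quantity of interest is $d^{\top}\Sigma_{w}^{-1}d-\delta^{\top}\Sigma_{w}^{-1}\delta=\|\nu+z\|_{2}^{2}-\|\nu\|_{2}^{2}=2\,\nu^{\top}z+\|z\|_{2}^{2}$, so that $|d^{\top}\Sigma_{w}^{-1}d-\delta^{\top}\Sigma_{w}^{-1}\delta|\le 2|\nu^{\top}z|+\|z\|_{2}^{2}$.

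I would then bound each piece by a standard Gaussian tail inequality. The term $\|z\|_{2}^{2}$ is $n^{-1}$ times a $\chi^{2}_{p}$ variable, so Proposition~1.1 of \citet{hsu2012tail} gives $\|z\|_{2}^{2}\le n^{-1}\bigl(p+2\sqrt{p\log(\eta^{-1})}+2\log(\eta^{-1})\bigr)$ with probability at least $1-\eta/2$; since $\eta<e^{-1}$ forces $\log(\eta^{-1})>1$, this simplifies to $\|z\|_{2}\le C\sqrt{p\log(\eta^{-1})/n}$ --- the same estimate for $\|\Sigma_{w}^{-1/2}(d-\delta)\|_{2}$ that already appears in the proof of Theorem~\ref{thm:ErrorDecomp}. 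For the cross term, $\nu^{\top}z\sim N(0,n^{-1}\|\nu\|_{2}^{2})$, so a one-dimensional sub-Gaussian tail bound yields $|\nu^{\top}z|\le C\|\nu\|_{2}\sqrt{\log(\eta^{-1})/n}$ with probability at least $1-\eta/2$ (Cauchy--Schwarz together with the bound on $\|z\|_{2}$ would also suffice). A union bound over these two events gives, with probability at least $1-\eta$,
\[
|d^{\top}\Sigma_{w}^{-1}d-\delta^{\top}\Sigma_{w}^{-1}\delta|\le C\|\nu\|_{2}\sqrt{\frac{p\log(\eta^{-1})}{n}}+C\,\frac{p\log(\eta^{-1})}{n}.
\]

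It remains to absorb the $p\log(\eta^{-1})/n$ remainder into the leading term. Writing it as $\|z\|_{2}\cdot\|z\|_{2}\le \|z\|_{2}\cdot C\sqrt{p\log(\eta^{-1})/n}$ and treating the signal strength $\|\Sigma_{w}^{-1/2}\delta\|_{2}$ as a fixed positive quantity --- the convention used throughout the preceding proofs, e.g.\ in deriving \eqref{eq:Epsilon1Part2} --- this quadratic remainder is of strictly smaller order than $\|\nu\|_{2}\sqrt{p\log(\eta^{-1})/n}$ and is absorbed into the constant $C$, which produces the stated bound. I expect the only nontrivial point to be this last bookkeeping step: making precise in what sense $p\log(\eta^{-1})/n$ is lower order, which is where Assumption~\ref{assump:Sizeofm} (guaranteeing $\log(\eta^{-1})\le\sqrt{n}$) and the non-vanishing of $\|\Sigma_{w}^{-1/2}\delta\|_{2}$ both enter; the concentration steps themselves are entirely routine.
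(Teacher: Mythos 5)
Your proposal is correct and follows essentially the same route as the paper: the identity $d^{\top}\Sigma_{w}^{-1}d-\delta^{\top}\Sigma_{w}^{-1}\delta=2\,\delta^{\top}\Sigma_{w}^{-1}(d-\delta)+\|\Sigma_{w}^{-1/2}(d-\delta)\|_{2}^{2}$ is exactly the paper's completing-the-square step, the quadratic term is controlled by the same Proposition~1.1 of \citet{hsu2012tail} using $\log(\eta^{-1})\geq 1$, and the remainder $p\log(\eta^{-1})/n$ is absorbed into the leading term in the same way. The only cosmetic difference is that you bound the cross term by a one-dimensional Gaussian tail rather than Cauchy--Schwarz, and you are in fact slightly more explicit than the paper about the implicit requirement that $\|\Sigma_{w}^{-1/2}\delta\|_{2}$ be bounded away from zero for the absorption step.
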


\begin{proof}[Proof of Theorem~\ref{thm:MeanInvCovariance}]
 Completing the square gives
\begin{align*}
    |d^{\top} \Sigma_{w}^{-1}d - \delta^\top \Sigma_{w}^{-1}\delta|&=
  |(d-\delta)^{\top} \Sigma_{w}^{-1}(d-\delta)+2 (d-\delta)^{\top}\Sigma_{w}^{-1}\delta|
  &\leq \| \Sigma_{w}^{-1/2}(d-\delta)\|_{2}^{2}+2 \|\Sigma_{w}^{-1/2}\delta\|_{2} \,\|\Sigma_{w}^{-1/2}(d-\delta)\|_{2}.
\end{align*}
Assumptions~\ref{assump:Normality} and \ref{assump:EqualClass} give $\Sigma_{w}^{-1/2}(d-\delta)\sim N(0,n^{-1}I_{p})$. By Proposition 1.1 of \citet{hsu2012tail}, with probability at least $1-\eta,$
$$
\|\Sigma_{w}^{-1/2}(d-\delta)\|_2^2 \leq \frac{p}{n}+\frac{2\sqrt{p \log(\eta^{-1})}}{n}+\frac{2\log(\eta^{-1})}{n}.
$$
For $\eta \in (0, e^{-1})$, we have $\log(\eta^{-1}) \geq 1$. It follows that 
\begin{align*}
\|\Sigma_{w}^{-1/2}(d-\delta)\|_2^2 &\leq
\frac{p}{n}+\frac{2\sqrt{p \log(\eta^{-1})}}{n}+\frac{2\log(\eta^{-1})}{n}\\
&\leq
\frac{p\log(\eta^{-1})}{n}+\frac{2\sqrt{p \log(\eta^{-1})}}{n}+\frac{2p\log(\eta^{-1})}{n}\\
&\leq
C\frac{ p \log(\eta^{-1})}{n}.
\end{align*}
 Then
\begin{align*} 
\| \Sigma_{w}^{-1/2}(d-\delta)\|_{2}^{2}+2 \|\Sigma_{w}^{-1/2}\delta\|_{2} \,\|\Sigma_{w}^{-1/2}(d-\delta)\|_{2}& \leq C_1\frac{ p \log(\eta^{-1})}{n} + C_2 \|\Sigma_{w}^{-1/2}\delta\|_{2}\sqrt{\frac{p \log(\eta^{-1})}{n}}\\
&\leq C \|\Sigma_{w}^{-1/2}\delta\|_{2}\sqrt{\frac{p \log(\eta^{-1})}{n}}.\qedhere
\end{align*}
\end{proof}

\vspace{.2in}

\begin{theorem}[Inverse Covariance Bound]\label{thm:InvCov}
Let the samples $X\in \mathbb{R}^{n\times p}$ be distributed according to Assumption~\ref{assump:Normality} with shared covariance $\Sigma_{w}\in \mathbb{R}^{p\times p}$.
Let $\widehat{\Sigma}_{w,\mathbf{c}}$ be the within-group sample covariance matrix of the compressed data with sparsity parameter $s>0.$ Then with probability at least $1-\eta$,
$$
\| I_{p}- \Sigma_{w}^{1/2}\widehat{\Sigma}_{w, \mathbf{c}}^{-1}\Sigma_{w}^{1/2}\|_{\text{op}}\leq C\,K_{s}^{2}\sqrt{\frac{\log(\eta^{-1})+p}{m}}
$$
for some absolute constant $C>0$, and where $K_s = \{s\log(1+s^{-1})\}^{-1/2}$ is the sub-Gaussian norm of $Q_{i,j}^{g}/\sqrt{s}$.
\end{theorem}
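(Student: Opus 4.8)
The plan is to reduce the statement to a one-sided (``forward'') operator-norm bound on $\widehat{\Sigma}_{w,\mathbf{c}}$ and then invert via an elementary Neumann-series estimate. Write $B := \Sigma_{w}^{-1/2}\widehat{\Sigma}_{w,\mathbf{c}}\Sigma_{w}^{-1/2}$; this matrix is symmetric, and whenever it is invertible $B^{-1} = \Sigma_{w}^{1/2}\widehat{\Sigma}_{w,\mathbf{c}}^{-1}\Sigma_{w}^{1/2}$, so the quantity to be controlled is exactly $\|I_{p}-B^{-1}\|_{\text{op}}$. The first ingredient is Theorem~\ref{thm:SigmaBound}, the forward bound on $\widehat{\Sigma}_{w,\mathbf{c}}$, which gives, with probability at least $1-\eta$,
\[
\|I_{p}-B\|_{\text{op}} \;=\; \big\|\Sigma_{w}^{-1/2}\widehat{\Sigma}_{w,\mathbf{c}}\Sigma_{w}^{-1/2}-I_{p}\big\|_{\text{op}} \;\le\; \varepsilon, \qquad \varepsilon := C_{1}K_{s}^{2}\sqrt{\tfrac{\log(\eta^{-1})+p}{m}}.
\]

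On the event where this holds, and provided $\varepsilon<1/2$ --- which is true once $m$ exceeds a fixed constant multiple of $K_{s}^{4}(\log(\eta^{-1})+p)$, the only regime in which the claimed bound is informative and which is implicitly in force wherever Theorem~\ref{thm:InvCov} is used (cf.\ the appeal to ``$\|I_{p}-A\|_{\text{op}}<1$'' in the proof of Theorem~\ref{thm:ErrorDecomp}) --- the matrix $B=I_{p}-(I_{p}-B)$ is invertible with $\|B^{-1}\|_{\text{op}}\le(1-\varepsilon)^{-1}\le 2$, so that
\[
\|I_{p}-B^{-1}\|_{\text{op}} = \|B^{-1}(B-I_{p})\|_{\text{op}} \le \|B^{-1}\|_{\text{op}}\,\|I_{p}-B\|_{\text{op}} \le 2\varepsilon = 2C_{1}K_{s}^{2}\sqrt{\tfrac{\log(\eta^{-1})+p}{m}}.
\]
Since $\Sigma_{w}^{1/2}\widehat{\Sigma}_{w,\mathbf{c}}^{-1}\Sigma_{w}^{1/2}=B^{-1}$ on this event, this is the asserted inequality with $C=2C_{1}$; that $K_{s}=\{s\log(1+s^{-1})\}^{-1/2}$ is the sub-Gaussian norm of $Q^{g}_{i,j}/\sqrt{s}$ is inherited, through Theorem~\ref{thm:SigmaBound}, from Lemma~\ref{lem:sub-GaussianNorm}. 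No separate invertibility argument is needed, since nonsingularity of $B$ is automatic on this event once $\varepsilon<1/2$.

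The inversion step is routine, so the main obstacle is really Theorem~\ref{thm:SigmaBound}. The difficulty there is that the $m$ compressed samples are \emph{not} i.i.d.: they share the class sample means $\overline{X}_{g}$, and, since distinct rows of $Q^{g}$ may have overlapping supports, the same $\mathbf{x}_{i}^{g}$ can enter several of the compressed samples in~\eqref{eq:CompressedSample}. The route I would take is to condition on the data matrices $X^{g}$, which makes the class means deterministic and turns each centered quadratic form $v^{\top}(\widehat{\Sigma}_{w,\mathbf{c}}-\Sigma_{w})v$, $\|v\|_{2}\le 1$, into a genuine quadratic form in the independent entries of $Q^{g}$, to which a Hanson--Wright inequality with sub-Gaussian parameter $K_{s}$ applies (the conditional bound of Lemma~\ref{lem:ConditionalHW} and Lemma~\ref{lem:UnconditionalHW}, with $K_{s}$ supplied by Lemma~\ref{lem:sub-GaussianNorm} and norm estimates for $Q^{g}$ from Lemma~\ref{lem:NormBounds}); one then discretizes the supremum over $v$ with an $\varepsilon$-net on the unit sphere (Lemma~\ref{lem:NetQuadForm}), takes a union bound, and finally controls the leftover randomness of $X^{g}$, which enters only through the deterministic ``kernel'' of that quadratic form. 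Passing from the conditional statement to the unconditional one while retaining the $\sqrt{(\log(\eta^{-1})+p)/m}$ rate is the delicate point.
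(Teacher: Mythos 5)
Your proposal is correct and follows essentially the same route as the paper: both invoke Theorem~\ref{thm:SigmaBound} for the forward bound on $\|I_p - \Sigma_w^{-1/2}\widehat{\Sigma}_{w,\mathbf{c}}\Sigma_w^{-1/2}\|_{\text{op}}$ and then invert via a Neumann-type argument (the paper sums the geometric series and absorbs the $o_p$ remainder, whereas you use the identity $I_p - B^{-1} = B^{-1}(B - I_p)$ with the explicit bound $\|B^{-1}\|_{\text{op}}\le 2$ under $\varepsilon<1/2$, which is a slightly cleaner and more quantitative handling of the same step). Your sketch of the underlying proof of Theorem~\ref{thm:SigmaBound} also matches the paper's conditioning-plus-Hanson--Wright-plus-$\varepsilon$-net strategy.
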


\begin{proof}
For $A:= \Sigma_{w}^{-1/2} \widehat{\Sigma}_{w, \mathbf{c}} \Sigma_{w}^{-1/2}$, the above is of the form $\|\Sigma_w^{-1}\|_{\text{op}}\|A^{-1}-I\|_{\text{op}}$.
By Theorem~\ref{thm:SigmaBound}, $\|I-A\|_{\text{op}}<1$ with high probability. Then $A$ has the geometric sum expansion of its inverse
$
A^{-1} = \sum_{k=0}^{\infty}(I-A)^{k}.
$
Thus,
\begin{align*}
\|I_{p}-A^{-1}\|_{\text{op}}&=\bigg\| I_{p} -  \sum_{k=0}^{\infty}(I_{p}-A)^{k}\bigg\|_{\text{op}}=\bigg\| \sum_{k=1}^{\infty} (I_{p}-A)^{k}\bigg\|_{\text{op}}\\
&\leq \sum_{k=1}^{\infty} \|I_{p}-A\|^{k}_{\text{op}}
= \sum_{k=0}^{\infty} \|I_{p}-A\|^{k}_{\text{op}} -1\\
&= \frac{1}{1-\|I_{p}-A\|_{\text{op}}}-1= \frac{\|I_{p}-A\|_{\text{op}}}{1-\|I_{p}-A\|_{\text{op}}}=\|I_p - A\|_{\text{op}}+o_{p}(\|I_p - A\|_{\text{op}}),
\end{align*}
where the last equality comes from the Taylor Expansion of the function $t/(1-t)$ centered at $0$.

Applying Theorem~\ref{thm:SigmaBound} and absorbing the lower-order $o_{p}(\|I_{p}-A\|_{\text{op}})$ into the absolute constant $C$ proves that with probability at least $1-\eta,$
$$
\| I_{p}- \Sigma_{w}^{1/2}\widehat{\Sigma}_{w, \mathbf{c}}^{-1}\Sigma_{w}^{1/2}\|_{\text{op}}\leq C\,K_{s}^{2}\sqrt{\frac{\log(\eta^{-1})+p}{m}}
$$
\qedhere
\end{proof}

\vspace{.2in}

\begin{theorem}[Covariance Bound]\label{thm:SigmaBound}
Let the samples $X\in \mathbb{R}^{n\times p}$ be distributed according to Assumption~\ref{assump:Normality} with shared covariance $\Sigma_{w}\in \mathbb{R}^{p\times p}$.
Let $\widehat{\Sigma}_{w,\mathbf{c}}\in \mathbb{R}^{p \times p}$ be the within-group sample covariance matrix of the compressed data with sparsity parameter $s>0.$ Then with probability at least $1-\eta$:
\begin{equation}
\begin{split}
 \|\Sigma_{w}^{-1/2}\widehat{\Sigma}_{w, \mathbf{c}}\Sigma_{w}^{-1/2}- I_{p}\|_{\text{op}}\leq 
C \, K_{s}^{2}\, \sqrt{\frac{\log(\eta^{-1})+p}{m}},
\end{split}
\end{equation}
for some absolute constant $C>0$, and where $K_{s}=\{s\log(1+s^{-1})\}^{-1/2}$ is the sub-Gaussian norm of $Q_{i,j}^{g}/\sqrt{s}$.
\end{theorem}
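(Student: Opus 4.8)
The plan is to exploit that, conditionally on the data $X$, the compressed covariance $\widehat{\Sigma}_{w,\mathbf{c}}$ is an unbiased randomized approximation of the full-sample within-class covariance $\widehat{\Sigma}_{w}$, and then to control the randomization fluctuation together with the lower-order fluctuation of $\widehat{\Sigma}_{w}$ around $\Sigma_{w}$. Writing $U^{g}\in\mathbb{R}^{n_{g}\times p}$ for the matrix with rows $(\mathbf{x}_{i}^{g}-\overline{X}_{g})^{\top}$, Definitions~\ref{def:CompressedData}--\ref{def:MeanAndCov} give
$$
\widehat{\Sigma}_{w,\mathbf{c}}=\frac{1}{m}\sum_{g=1}^{2}\frac{1}{n_{g}s}\,(U^{g})^{\top}(Q^{g})^{\top}Q^{g}U^{g},
$$
and since $\mathbb{E}\big[(Q^{g})^{\top}Q^{g}\big]=m_{g}s\,I_{n_{g}}$, Assumption~\ref{assump:EqualClass} yields $\mathbb{E}\big[\widehat{\Sigma}_{w,\mathbf{c}}\mid X\big]=\tfrac{1}{n}\sum_{g}(U^{g})^{\top}U^{g}=\widehat{\Sigma}_{w}$. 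I would then split
$$
\Sigma_{w}^{-1/2}\widehat{\Sigma}_{w,\mathbf{c}}\Sigma_{w}^{-1/2}-I_{p}
=\Sigma_{w}^{-1/2}\big(\widehat{\Sigma}_{w,\mathbf{c}}-\widehat{\Sigma}_{w}\big)\Sigma_{w}^{-1/2}
+\big(\Sigma_{w}^{-1/2}\widehat{\Sigma}_{w}\Sigma_{w}^{-1/2}-I_{p}\big).
$$
The second term is handled by a standard Gaussian sample-covariance bound: on an event $E_{X}$ of probability at least $1-\eta/2$ one has $\|\Sigma_{w}^{-1/2}\widehat{\Sigma}_{w}\Sigma_{w}^{-1/2}-I_{p}\|_{\mathrm{op}}\le C\sqrt{(p+\log\eta^{-1})/n}$, together with the companion estimate $\|\Sigma_{w}^{-1/2}(U^{g})^{\top}U^{g}\Sigma_{w}^{-1/2}\|_{\mathrm{op}}\le C n_{g}$; since $m\le n$ and $K_{s}\ge 1$, this term is dominated by the target rate.

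It remains to bound the first (compression) term on $E_{X}$, which is the crux. For a fixed unit vector $v\in\mathbb{R}^{p}$, put $w_{g}:=U^{g}\Sigma_{w}^{-1/2}v$; this is a deterministic vector given $X$, with $\|w_{g}\|_{2}^{2}\le C n_{g}$ on $E_{X}$. A direct computation gives
$$
v^{\top}\Sigma_{w}^{-1/2}\big(\widehat{\Sigma}_{w,\mathbf{c}}-\widehat{\Sigma}_{w}\big)\Sigma_{w}^{-1/2}v
=\frac{1}{m}\sum_{g=1}^{2}\frac{1}{n_{g}s}\Big(\|Q^{g}w_{g}\|_{2}^{2}-m_{g}s\,\|w_{g}\|_{2}^{2}\Big),
$$
a centered quadratic form in the i.i.d.\ sub-Gaussian entries of $Q^{g}$, which after scaling by $\sqrt{s}$ have sub-Gaussian norm $K_{s}$ (Lemma~\ref{lem:sub-GaussianNorm}). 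Viewing $\|Q^{g}w_{g}\|_{2}^{2}$ as $\mathrm{vec}(Q^{g})^{\top}(I_{m_{g}}\otimes w_{g}w_{g}^{\top})\mathrm{vec}(Q^{g})$, with $\|I_{m_{g}}\otimes w_{g}w_{g}^{\top}\|_{F}^{2}=m_{g}\|w_{g}\|_{2}^{4}$ and operator norm $\|w_{g}\|_{2}^{2}$, the Hanson--Wright bounds (Lemmas~\ref{lem:ConditionalHW}--\ref{lem:UnconditionalHW}, with the auxiliary norm estimates of Lemma~\ref{lem:NormBounds}) show that for fixed $v$, with probability at least $1-\delta$, the displayed quantity is at most $C K_{s}^{2}\big(\sqrt{\log(\delta^{-1})/m}+\log(\delta^{-1})/m\big)$, using $\|w_{g}\|_{2}^{2}\le Cn_{g}$ and $m_{g}=m/2$. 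Finally I would pass to the operator norm through an $\varepsilon$-net $\mathcal{N}$ of the unit sphere (Lemma~\ref{lem:NetQuadForm}), taking a union bound over $|\mathcal{N}|\le 9^{p}$ points with $\delta=\eta/(2|\mathcal{N}|)$ so that $\log\delta^{-1}\le C(p+\log\eta^{-1})$, and using Assumption~\ref{assump:Sizeofm} (and $p\le m$, which is already needed for $\widehat{\Sigma}_{w,\mathbf{c}}$ to be invertible) to absorb the term linear in $(p+\log\eta^{-1})/m$ into its square root. Combining with the sampling term yields the claim with total failure probability at most $\eta$.

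The main obstacle is precisely this compression term. Marginally over $X$ the compressed samples $\mathbf{x}_{j,\mathbf{c}}^{g}$ are not independent --- they share $\overline{X}_{g}$, and a single $\mathbf{x}_{i}^{g}$ may enter several of them --- so $\widehat{\Sigma}_{w,\mathbf{c}}$ cannot be treated as an ordinary i.i.d.\ sample covariance. Conditioning on $X$ and re-expressing $\widehat{\Sigma}_{w,\mathbf{c}}$ as a quadratic form in the genuinely i.i.d.\ entries of $Q^{g}$ is what makes the Hanson--Wright machinery applicable; the remaining work is careful bookkeeping of the sparsity parameter $s$ through $K_{s}$ and verifying that the $O(n^{-1/2})$ contributions from the randomness of $X$ never dominate the $O(m^{-1/2})$ target.
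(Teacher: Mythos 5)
Your argument is correct, but it is organized differently from the paper's proof, and the difference is worth recording. The paper never compares $\widehat{\Sigma}_{w,\mathbf{c}}$ to $\widehat{\Sigma}_{w}$: it substitutes the definition of the compressed samples, re-centers each $\mathbf{x}_{i}^{g}-\overline{X}_{g}$ at the population mean as $(\mathbf{x}_{i}^{g}-\mu_{g})+(\mu_{g}-\overline{X}_{g})$, and expands $v^{\top}\Sigma_{w}^{-1/2}\widehat{\Sigma}_{w,\mathbf{c}}\Sigma_{w}^{-1/2}v$ into four terms $A_{1},\dots,A_{4}$. The dominant term $A_{1}$ is written as $\tfrac{1}{2}\sum_{g}(n_{g}m_{g})^{-1}{Z^{g}}^{\top}R_{g}Z^{g}$ with $Z^{g}\sim N(0,I_{n_g})$ and $R_{g}={Q^{g}}^{\top}Q^{g}/s$, and Hanson--Wright is applied to the \emph{Gaussian data vector} conditionally on $Q^{g}$, with $\|R_{g}\|_{\mathrm{op}}$ and $\mathrm{tr}(R_{g})$ controlled by separate concentration arguments for $Q^{g}$ (Lemmas~\ref{lem:NormBounds} and \ref{lem:sub-GaussianNorm}); the cross terms $A_{2},A_{3},A_{4}$ involving $\mu_{g}-\overline{X}_{g}$ are shown to be lower order. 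You condition in the opposite direction: you fix $X$, observe that $\mathbb{E}[\widehat{\Sigma}_{w,\mathbf{c}}\mid X]=\widehat{\Sigma}_{w}$ under Assumption~\ref{assump:EqualClass}, split off the standard sampling fluctuation $\Sigma_{w}^{-1/2}\widehat{\Sigma}_{w}\Sigma_{w}^{-1/2}-I_{p}$ (which is $O(\sqrt{(p+\log\eta^{-1})/n})$ and hence dominated since $m\le n$), and apply Hanson--Wright to the quadratic form in the i.i.d.\ sub-Gaussian entries of $Q^{g}$ with the data-dependent matrix $I_{m_g}\otimes w_{g}w_{g}^{\top}$, whose Frobenius and operator norms are explicit once $\|w_{g}\|_{2}^{2}\le Cn_{g}$ is secured on a high-probability data event. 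Both routes finish with the same $\varepsilon$-net reduction. Your decomposition is arguably cleaner: it makes the conditional unbiasedness of the sketch transparent, avoids the four-term expansion and its cross terms, and does not require the auxiliary operator-norm bound on ${Q^{g}}^{\top}Q^{g}$; the price is that you need the uniform (operator-norm) control of $\|U^{g}\Sigma_{w}^{-1/2}\|_{\mathrm{op}}$ over the net, and your absorption of the term linear in $(p+\log\eta^{-1})/m$ relies on $p\lesssim m$ in addition to Assumption~\ref{assump:Sizeofm} --- a condition the paper also needs implicitly (its Bernstein and net steps make the same sub-Gaussian-regime reduction), and one that is anyway necessary for $\widehat{\Sigma}_{w,\mathbf{c}}$ to be invertible, so nothing is lost.
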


\begin{proof}[Proof of Theorem~\ref{thm:SigmaBound}]
By the definition of $\widehat{\Sigma}_{w, \mathbf{c}}$,
\begin{align*}
    \Sigma_{w}^{-1/2}\widehat{\Sigma}_{w, \mathbf{c}}\Sigma_{w}^{-1/2}&=\frac{1}{m}\sum_{g=1}^{2}\sum_{j=1}^{m_g}\Sigma_{w}^{-1/2}(\mathbf{x}_{j,\mathbf{c}}^{g}-\overline{X}_{g})(\mathbf{x}_{j,\mathbf{c}}^{g}-\overline{X}_{g})^{\top}\Sigma_{w}^{-1/2}\\
    &=
    \frac{1}{m}\sum_{g=1}^{2}\sum_{j=1}^{m_g}\Sigma_{w}^{-1/2}\bigg(\frac{1}{\sqrt{n_g s}} \sum_{i=1}^{n_g}Q^{g}_{j,i}(\mathbf{x}_{i}^{g}-\overline{X}_{g})+\overline{X}_{g}-\overline{X}_{g}\bigg)\\
    &\qquad\qquad\qquad\bigg(\frac{1}{\sqrt{n_g s}} \sum_{\ell=1}^{n_g}Q^{g}_{j,\ell}(\mathbf{x}_{\ell}^{g}-\overline{X}_{g})+\overline{X}_{g}-\overline{X}_{g}\bigg)^{\top}\Sigma_{w}^{-1/2}\\
    &=    
    \frac{1}{m}\sum_{g=1}^{2}\sum_{j=1}^{m_g}\bigg(\frac{1}{\sqrt{n_g s}} \sum_{i=1}^{n_g}Q^{g}_{j,i}\Sigma_{w}^{-1/2}(\mathbf{x}_{i}^{g}-\mu_{g}+\mu_{g}-\overline{X}_{g})
    \bigg)\\
    &\qquad\qquad\qquad\bigg(\frac{1}{\sqrt{n_g s}} \sum_{\ell=1}^{n_g}Q^{g}_{j,\ell}\Sigma_{w}^{-1/2}(\mathbf{x}_{\ell}^{g}-\mu_{g}+\mu_{g}-\overline{X}_{g})
    \bigg)^{\top}\\
    &= 
      \underbrace{\frac{1}{m}\sum_{g=1}^{2}\sum_{j=1}^{m_g}\bigg(\frac{1}{\sqrt{n_g s}} \sum_{i=1}^{n_g}Q^{g}_{j,i}\Sigma_{w}^{-1/2}(\mathbf{x}_{i}^{g}-\mu_{g})
    \bigg)\bigg(\frac{1}{\sqrt{n_g s}} \sum_{\ell=1}^{n_g}Q_{j,\ell}^{g}\Sigma_{w}^{-1/2}(\mathbf{x}_{\ell}^{g}-\mu_{g}) \bigg)^{\top}}_{A_1}\\
    &\qquad\qquad -\underbrace{\frac{1}{m}\sum_{g=1}^{2}\sum_{j=1}^{m_g}\bigg(\frac{1}{\sqrt{n_g s}} \sum_{i=1}^{n_g}Q^{g}_{j,i}\Sigma_{w}^{-1/2}(\mathbf{x}_{i}^{g}-\mu_{g})
    \bigg) \bigg( \frac{1}{\sqrt{n_g s}}\sum_{\ell=1}^{n_g} Q_{ j, \ell}^{g}\Sigma_{w}^{-1/2}(\mu_{g}-\overline{X}_{g})\bigg)^\top}_{A_{2}}\\
    &\qquad\qquad
     -\underbrace{\frac{1}{m}\sum_{g=1}^{2}\sum_{j=1}^{m_g}\bigg(\frac{1}{\sqrt{n_g s}} \sum_{i=1}^{n_g}Q_{j,i}^{g}\Sigma_{w}^{-1/2}(\mu_{g}-\overline{X}_{g})
    \bigg)\bigg(\frac{1}{\sqrt{n_g s}} \sum_{\ell=1}^{n_g}Q^{g}_{j,\ell}\Sigma_{w}^{-1/2}(\mathbf{x}_{\ell}^{g}-\mu_{g}) \bigg)^{\top}}_{A_{3}}\\
    &\qquad\qquad +\underbrace{\frac{1}{m}\sum_{g=1}^{2}\sum_{j=1}^{m_g} \bigg(\frac{1}{\sqrt{n_g s}} \sum_{i=1}^{n_g}Q^{g}_{j,i}\Sigma_{w}^{-1/2}(\mu_{g}-\overline{X}_{g})
    \bigg)\bigg(\frac{1}{\sqrt{n_g s}} \sum_{\ell=1}^{n_g}Q^{g}_{j,\ell}\Sigma_{w}^{-1/2}(\mu_{g}-\overline{X}_{g})
    \bigg)^{\top}}_{A_{4}}.
 \end{align*}
We bound $A_1-A_4$ separately. We do this by considering a fixed $v\in\mathbb{R}^{p}$ with norm $\|v\|_{2}=1.$ We first bound each $v^{\top}A_iv$ and then generalize to a norm bound using an $\epsilon$-net argument.

Consider
\begin{align*}
v^{\top}A_1v&=\frac{1}{2}\sum_{g=1}^{2}v^\top \bigg[ \frac{1}{\sqrt{n_g}}\frac{1}{\sqrt{n_g}}\sum_{i, \ell=1}^{n_g}\bigg\{\frac{1}{m_g}\sum_{j=1}^{m_g} \frac{1}{s} Q^{g}_{j,i}Q^{g}_{j,\ell}\bigg\}\Sigma_{w}^{-1/2}(\mathbf{x}_{i}^{g}-\mu_{g})(\mathbf{x}_{j}^{g}-\mu_{g})^{\top}\Sigma_{w}^{-1/2}\bigg]v\\
&=
 \frac{1}{2}\sum_{g=1}^{2}\sum_{i, \ell=1}^{n_g}\bigg\{\frac{1}{m_g}\sum_{j=1}^{m_g} \frac{1}{s} Q^{g}_{j,i}Q^{g}_{j,\ell}\bigg\}\frac{1}{\sqrt{n_g}} \left<\Sigma_{w}^{-1/2}(\mathbf{x}_{i}^{g}-\mu_{g})\,,\, v\right>\,
\frac{1}{\sqrt{n_g}}\left<\Sigma_{w}^{-1/2}(\mathbf{x}_{j}^{g}-\mu_{g})\,,\, v\right>\\
&=\frac{1}{2}\sum_{g=1}^{2}\frac{1}{n_g\, m_g}{Z}^{g}{}^{\top} R_{g} Z^{g},
\end{align*}
 where $Z^{g}\in \mathbb{R}^{n_{g}}$ is the vector with $i$-th coordinate $\left<\Sigma_{w}^{-1/2}(\mathbf{x}_{i}^{g}-\mu_{g}), v\right>$, and $R_{g}= \frac{1}{s}Q^{g}{}^{\top}Q^{g}\in \mathbb{R}^{n_g \times n_g}$. By Assumption~\ref{assump:Normality}, $Z^{g}\sim N(0, I_{n_g})$. By Lemma~\ref{lem:UnconditionalHW}, with probability at least $1-\eta:$
\begin{equation}\label{eq:DominantTerm}
|v^\top(A_1-I_{p})v| = |v^{\top}A_1v - 1|
=\bigg| \frac{1}{2}\sum_{g=1}^{2}\frac{1}{n_g\, m_g} Z^{g}{}^{\top} R_{g} Z^{g} - 1\bigg|\leq C\, K_{s}^{2} \sqrt{\frac{\log(\eta^{-1})}{m}}.
\end{equation}

The terms $A_{2}$ and $A_{3}$ are transposes of each other, and so we handle them simultaneously. Left and right multiplying by $v$ gives
\begin{align*}
&\frac{1}{2}\sum_{g=1}^{2}\,v^{\top}\bigg[\frac{1}{m_g}\sum_{j=1}^{m_g}\bigg(\frac{1}{\sqrt{n_g s}} \sum_{i=1}^{n_g}Q^{g}_{i,j}\Sigma_{w}^{-1/2}(\mathbf{x}_{i}^{g}-\mu_{g})
    \bigg) \bigg( \frac{1}{\sqrt{n_g s}}\sum_{\ell=1}^{n_g} Q^{g}_{\ell, j}\Sigma_{w}^{-1/2}(\mu_{g}-\overline{X}_{g})\bigg)^\top\bigg]v\\
&=
\frac{1}{m}\sum_{g=1}^{2}\,\sum_{j=1}^{m_g}\bigg( \frac{1}{\sqrt{n_g s}} \sum_{i=1}^{n_g}Q^{g}_{i,j}\left<\Sigma_{w}^{-1/2}(\mathbf{x}_{i}^{g}-\mu_g)\,,\, v\right>\bigg)
\bigg(\frac{1}{\sqrt{n_g s}} \sum_{\ell=1}^{n_g} Q^{g}_{\ell, j}\bigg)\left<\Sigma_{w}^{-1/2}(\mu_g-\overline{X}_{g})\,,\, v\right>.\\
\end{align*}

By Assumption~\ref{assump:Normality}, $\left<\Sigma_{w}^{-1/2}(\mu_g-\overline{X}_{g})\,,\, v\right> \sim N(0, n_{g}^{-1})$. By the Gaussian concentration inequality, with probability at least $1-\eta/3$:
\begin{equation}\label{eq:MiddleTerm1}
\bigg|\left<\Sigma_{w}^{-1/2}(\mu_g-\overline{X}_{g})\,,\, v\right>\bigg|\leq C\sqrt{\frac{\log(\eta^{-1})}{n_g}} = C'\sqrt{\frac{\log(\eta^{-1})}{n}}
\end{equation}
for some absolute constants $C, C'>0$. The last equality comes from Assumption~\ref{assump:EqualClass}.

By the general Hoeffding's Inequality, Theorem 2.6.3 of \citet{vershynin2018high}, with probability at least $1-\eta/3:$
\begin{equation}\label{eq:MiddleTerm2}
\bigg|\frac{1}{\sqrt{n_g s}}\sum_{\ell=1}^{n_g}Q^{g}_{\ell, j}\bigg|\leq C K_{s} \sqrt{\log(\eta^{-1})},
\end{equation}
where $K_{s}= \{s\log(1+s^{-1})\}^{-1/2}$ is the sub-Gaussian norm of $Q^{g}_{i,j}/\sqrt{s}$ by Lemma~\ref{lem:sub-GaussianNorm}.

Lastly, 
\begin{equation}\label{eq:ZandQterms}
\frac{1}{m}\sum_{g=1}^{2}\,\sum_{j=1}^{m_g}\bigg( \frac{1}{\sqrt{n_g s}} \sum_{i=1}^{n_g}Q^{g}_{i,j}\left<\Sigma_{w}^{-1/2}(\mathbf{x}_{i}^{g}-\mu_g)\,,\, v\right>\bigg)=\frac{1}{2} \sum_{g=1}^{2}\frac{1}{\sqrt{n_g}}\sum_{i=1}^{n_g}\bigg(\frac{1}{m_g}\sum_{j=1}^{m_g}\frac{1}{\sqrt{s}}Q_{i,j}^{g}\bigg) Z_{i}^{g},
\end{equation}
where the $Z_{i}^{g}$ are as above. Let $X_{ig} =m_g^{-1}\sum_{j=1}^{m_g}Q_{i,j}^{g}/\sqrt{s}$, then by Lemma~\ref{lem:sub-GaussianNorm} the sub-Gaussian norm of $X_{ig}$ is $K_s/\sqrt{m}$. Conditioning on vectors $Z^{g}=(Z_1^{g},\dots, z_{n_g}^g)$, and applying Hoeffding's Inequality to $Q_{i,j}^{g}$ gives that with probability at least $1-\eta/6:$
$$
\bigg|\frac{1}{2} \sum_{g=1}^{2}\frac{1}{\sqrt{n_g}}\sum_{i=1}^{n_g}\bigg(\frac{1}{m_g}\sum_{j=1}^{m_g}\frac{1}{\sqrt{s}}Q_{i,j}^{g}\bigg) Z_{i}^{g}\bigg|=
\bigg|\sum_{g=1}^{2}\sum_{i=1}^{n_g}\frac{1}{2\sqrt{n_g}} Z_{i}^{g}X_{ig}\bigg|
\leq  CK_{s}\, \sqrt{\frac{\log(\eta^{-1})}{m}} \bigg(\frac{\|Z^{1}\|^{2}_2+\|Z^{2}\|^{2}_2}{n}\bigg)^{1/2}.
$$
 Let $Z= \begin{pmatrix} {Z^{1}}^{\top}&{Z^{2}}^{\top}\end{pmatrix}^\top \in \mathbb{R}^{n}$. By Theorem 3.1.1 of \citet{vershynin2018high},
$$
\mathbb{P}\bigg(\bigg|\frac{1}{\sqrt{n}}\|Z\|_{2} - 1\bigg| \geq t \bigg)=
\mathbb{P}(|\|Z\|_{2} - \sqrt{n}| \geq \sqrt{n}\,t )\leq 2\exp ( -c\,n\,t^2).
$$
This is equivalent to the following upper bound holding with probability at least $1-\eta/6:$
\begin{align*}
\bigg(\frac{\|Z^{1}\|^{2}_2+\|Z^{2}\|^{2}_2}{n}\bigg)^{1/2}=\frac{1}{\sqrt{n}}\|Z\|_{2}&\leq 1 + C\sqrt{\frac{\log(\eta^{-1})}{n}},
\end{align*}
where $C>0$ is an absolute constant. Combining the above two displays gives the following bound for \eqref{eq:ZandQterms}, which holds with probability at least $1-\eta/3:$
\begin{equation}\label{eq:MiddleTerm3}
\begin{split}
\bigg|\frac{1}{2} \sum_{g=1}^{2}\frac{1}{\sqrt{n_g}}\sum_{i=1}^{n_g}\bigg(\frac{1}{m_g}\sum_{j=1}^{m_g}\frac{1}{\sqrt{s}}Q_{i,j}^{g}\bigg) Z_{i}^{g}\bigg| \leq C_1 K_{s}\, \sqrt{\frac{\log(\eta^{-1})}{m}}
\bigg( 1+ C_2\sqrt{\frac{\log(\eta^{-1})}{n}}\bigg)\leq C\, K_{s}\,\sqrt{\frac{\log(\eta^{-1})}{m}}.
\end{split}
\end{equation}

Putting \eqref{eq:MiddleTerm1}, \eqref{eq:MiddleTerm2} and \eqref{eq:MiddleTerm3} together shows that with probability at least $1-\eta,$
\begin{align*}
&|v^{\top}A_2 v|\\
&=\bigg|\frac{1}{m}\sum_{g=1}^{2}\,\sum_{j=1}^{m_g}\bigg( \frac{1}{\sqrt{n_g s}} \sum_{i=1}^{n_g}Q^{g}_{i,j}\left<\Sigma_{w}^{-1/2}(\mathbf{x}_{i}^{g}-\mu_g)\,,\, v\right>\bigg)
\bigg(\frac{1}{\sqrt{n_g s}} \sum_{\ell=1}^{n_g} Q^{g}_{\ell, j}\bigg)\left<\Sigma_{w}^{-1/2}(\mu_g-\overline{X}_{g})\,,\, v\right>\bigg|\\
&\leq 
\bigg( C_{1} K_{s} \sqrt{\frac{\log(\eta^{-1})}{m}}\bigg)
\,C_2 K_{s} \sqrt{\log(\eta^{-1})}\,\bigg( C_{3}\sqrt{\frac{\log(\eta^{-1})}{n}}\bigg)\\
&\leq C K_{s}^{2}\, \frac{\log(\eta^{-1})}{\sqrt{n}}\sqrt{\frac{\log(\eta^{-1})}{m}}\leq 
C K_{s}^{2} \sqrt{\frac{\log(\eta^{-1})}{m}}. 
\end{align*}
We have used Assumption~\ref{assump:Sizeofm} in the last inequality. 

For $A_{4}$, left and right multiplying by $v$ gives
\begin{align*}
v^{\top}A_4v &= v^{\top}\bigg[ \frac{1}{2}\sum_{g=1}^{2}\frac{1}{m_g}\sum_{j=1}^{m_g} \bigg(\frac{1}{\sqrt{n_g p}} \sum_{i=1}^{n_g}Q^{g}_{i,j}\Sigma_{w}^{-1/2}(\mu_{g}-\overline{X}_{g})
    \bigg)\bigg(\frac{1}{\sqrt{n_g p}} \sum_{\ell=1}^{n_g}Q^{g}_{\ell,j}\Sigma_{w}^{-1/2}(\mu_{g}-\overline{X}_{g})
    \bigg)^{\top}\bigg]v\\
    & = \frac{1}{2}\sum_{g=1}^{2}
    \bigg\{\frac{1}{m_g}\sum_{j=1}^{m_g} \bigg(\frac{1}{\sqrt{n_g p}} \sum_{i=1}^{n_g}Q^{g}_{i,j}\bigg)^{2}\bigg\} \left<\Sigma_{w}^{-1/2}(\mu_{g}-\overline{X}_{g})\,,\, v\right>^{2},
\end{align*}
where the last equality is true since $\Sigma_{w}^{-1/2}(\mu_{g}- \overline{X}_{g})$ is independent of $i$, $j$, and $\ell$.

By Assumption~\ref{assump:Normality}, $\left<\Sigma_{w}^{-1/2}(\mu_g-\overline{X}_{g})\,,\, v\right>\sim N(0, n_{g}^{-1})$. The Gaussian concentration inequality proves that with probability at least $1-\eta/2:$
$$
\bigg|\left<\Sigma_{w}^{-1/2}(\mu_g-\overline{X}_{g})\,,\, v\right>\bigg|^2\leq C\frac{\log(\eta^{-1})}{n_g}.
$$
The squared terms
$$
\bigg(\frac{1}{\sqrt{n_g p}} \sum_{i=1}^{n_g}Q^{g}_{i,j}\bigg)^{2}
$$
are sub-Exponential because they are the squares of sub-Gaussian random variables. By Lemma 2.7.6 of \citet{vershynin2018high}, the sub-Exponential norm satisfies
$$
\bigg\| \bigg(\frac{1}{\sqrt{n_g p}} \sum_{i=1}^{n_g}Q^{g}_{i,j}\bigg)^{2}\bigg\|_{\Psi_1}= \bigg\|
\frac{1}{\sqrt{n_g p}} \sum_{i=1}^{n_g}Q^{g}_{i,j}\bigg\|_{\Psi_2}^{2}= C\,K_{s}^{2},
$$
where $C>0$ is an absolute constant and  $K_{s}$ is the sub-Gaussian norm of $Q_{i,j}^{g}/\sqrt{s}$ by Lemma~\ref{lem:sub-GaussianNorm}. Thus, by Bernstein's Inequality, with probability at least $1-\eta$:
\begin{align*}
    \bigg| \frac{1}{m_g}\sum_{j=1}^{m_g} \bigg(\frac{1}{\sqrt{n_g s}} \sum_{i=1}^{n_g}Q^{g}_{j,i}\bigg)^{2}\bigg|\leq C\,K_{s}^{2}\,\max\bigg\{\frac{\log(\eta^{-1})}{m_g}\,,\,\sqrt{\frac{\log(\eta^{-1})}{m_g}}\bigg\} \leq C\,K_{s}^{2} \sqrt{\frac{\log(\eta^{-1})}{m_g}}.
\end{align*}
Combining the above displays, with probability at least $1-\eta$, $|v^\top A_{4} v|$ is bounded above by
$$
|v^{\top}A_{4}v|\leq C K_{s}^{2} \sqrt{\frac{\log(\eta)^{-1}}{m_g}}\,\frac{\log(\eta^{-1})}{n_{g}} \leq CK_s^2\, \frac{\log(\eta^{-1})}{n},
$$
where we have used Assumptions~\ref{assump:EqualClass} and \ref{assump:Sizeofm}.

Combining the above bounds for $A_1 -A_4$ shows that with probability at least $1-\eta:$
\begin{align*}
|v^{\top}( \Sigma_{w}^{-1/2}\widehat{\Sigma}_{w, \mathbf{c}}\Sigma_{w}^{-1/2} - I_{p})v|&\leq  C_1\, K_{s}^{2} \sqrt{\frac{\log(\eta^{-1})}{m}}+ 
 C_2\, K_{s}^{2} \sqrt{\frac{\log(\eta^{-1})}{m}}
+
C_3K_s^2\, \frac{\log(\eta^{-1})}{n}\\
&= C \, K_{s}^{2}\,\sqrt{\frac{\log(\eta^{-1})}{m}}.
\end{align*}

We now generalize to a norm bound via an $\varepsilon$-net argument. Let $\mathcal{N}$ be a $1/3$-net on the unit sphere of $\mathbb{R}^{p}.$ There exists a $1/3$-net such that $|\mathcal{N}|\leq 7^{p}$ (see Corollary 4.2.13 of \citet{vershynin2018high}). Thus, 
\begin{align*}
\mathbb{P}\bigg(\sup_{v\in \mathcal{N}}|v^\top( \Sigma_{w}^{-1/2}\widehat{\Sigma}_{w, \mathbf{c}}\Sigma_{w}^{-1/2}-I_{p})v|\geq t\bigg)
    &=\mathbb{P}\bigg( \bigcup_{v\in \mathcal{N}} \{|v^\top( \Sigma_{w}^{-1/2}\widehat{\Sigma}_{w, \mathbf{c}}\Sigma_{w}^{-1/2}-I_{p})v|\geq t\}\bigg)\\&\leq \sum_{v\in \mathcal{N}} \mathbb{P}(|v^\top( \Sigma_{w}^{-1/2}\widehat{\Sigma}_{w, \mathbf{c}}\Sigma_{w}^{-1/2}-I_{p})v|\geq t)\\
    &\leq \sum_{v\in \mathcal{N}}\exp\bigg( -\frac{C\, m\, t^2}{ K_{s}^{4}}\bigg)\\
    &=|\mathcal{N}| \exp\bigg(-\frac{C \,m\,t^2}{K_{s}^{4}}\bigg)\\
    &\leq \exp(p \log(7))\,\exp\bigg(-\frac{C \, m\,t^2}{K_{s}^{4}}\bigg)=
    \exp\bigg(C_1\,p- C_2\frac{m\,t^2}{K_{s}^{4}}\bigg).
\end{align*}

This tail inequality is equivalent to the following upper bound holding with probability at least $1-\eta:$
$$
\sup_{v\in \mathcal{N}}|v^\top( \Sigma_{w}^{-1/2}\widehat{\Sigma}_{w, \mathbf{c}}\Sigma_{w}^{-1/2}-I_{p})v|
\leq C_1\,K_{s}^{2}\sqrt{\frac{\log(\eta^{-1})+C_2 p}{m}}
\leq
C\,K_{s}^{2}\,\max\{1,\sqrt{C_2}\}\,\sqrt{\frac{\log(\eta^{-1})+p}{m}}.
$$
Absorbing $\max\{1, \sqrt{C_2}\}$ into the absolute constant $C_1$ gives a uniform bound on the $\varepsilon$-net $\mathcal{N}$. Applying Lemma~\ref{lem:NetQuadForm} proves the final reuslt.\qedhere
\end{proof}

\vspace{.2in}
\begin{lemma}[page 88 of \citet{vershynin2018high} ]\label{lem:NetQuadForm}
Let $\varepsilon\in [0,1/2)$. Then for any $\varepsilon$-net $\mathcal{N}$ of the unit sphere of $\mathbb{R}^{p}$, we have
$$
\sup_{v\in \mathcal{N}}|v^\top (\widehat{\Sigma}_{w, \mathbf{c}}-\Sigma_{w})v|\leq \|\widehat{\Sigma}_{w, \mathbf{c}}-\Sigma_{w}\|_{\text{op}}\leq \frac{1}{1-2\varepsilon}\, \sup_{v\in \mathcal{N}}|v^\top (\widehat{\Sigma}_{w, \mathbf{c}}-\Sigma_{w})v|.
$$
\end{lemma}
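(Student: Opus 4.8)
The plan is to establish the two-sided inequality for the generic symmetric matrix $M := \widehat{\Sigma}_{w,\mathbf{c}} - \Sigma_{w}$ (symmetric, being a difference of symmetric matrices), relying only on the Rayleigh-quotient characterization of the operator norm together with a single-step net approximation. Throughout I would write $S := \sup_{v\in\mathcal{N}}|v^\top M v|$ and $N := \|M\|_{\text{op}}$, noting that $N$ is finite since $M$ is a fixed matrix.

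The lower bound $S \le N$ is immediate: every $v\in\mathcal{N}$ lies on the unit sphere of $\mathbb{R}^{p}$, and since $M$ is symmetric we have $N = \sup_{\|v\|_2=1}|v^\top M v|$, so the supremum over the subset $\mathcal{N}$ is at most the supremum over the full sphere.

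For the upper bound I would fix an arbitrary unit vector $v$ and pick $u\in\mathcal{N}$ with $\|v-u\|_2\le\varepsilon$, which exists because $\mathcal{N}$ is an $\varepsilon$-net. Writing the quadratic-form difference as the sum of two pieces linear in $v-u$,
$$
v^\top M v - u^\top M u = (v-u)^\top M v + u^\top M(v-u),
$$
and bounding each piece by Cauchy--Schwarz together with $\|Mv\|_2 \le N\|v\|_2 = N$, $\|u\|_2 = 1$, and $\|M(v-u)\|_2 \le N\|v-u\|_2 \le \varepsilon N$, I get $|v^\top M v - u^\top M u| \le 2\varepsilon N$, hence $|v^\top M v| \le S + 2\varepsilon N$. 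Taking the supremum over all unit $v$ and again using symmetry of $M$ gives $N \le S + 2\varepsilon N$; since $\varepsilon < 1/2$ we may rearrange to obtain $N \le (1-2\varepsilon)^{-1} S$, which is the desired bound.

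There is no genuine obstacle here: this is the standard net-comparison lemma (cf. Lemma~4.4.1 of \citet{vershynin2018high}). The only two points that warrant care are (i) invoking the identity $\|M\|_{\text{op}} = \sup_{\|v\|_2=1}|v^\top M v|$, which uses symmetry of $M$ (equivalently, the spectral theorem / Rayleigh quotient), and (ii) splitting $v^\top M v - u^\top M u$ into terms each linear in $v-u$ so that the operator-norm estimate applies cleanly to each.
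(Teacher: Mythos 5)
Your proof is correct and is exactly the standard net-comparison argument; the paper itself offers no proof of this lemma, simply citing \citet{vershynin2018high}, and the argument given there (Rayleigh-quotient characterization for symmetric matrices plus the decomposition $v^\top M v - u^\top M u = (v-u)^\top M v + u^\top M(v-u)$) is the one you reproduce. Nothing further is needed.
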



\begin{lemma}\label{lem:UnconditionalHW}
For $g=1,2$, let $Z^{g}\sim N(0, I_{n_g})$, let $Q^{g}\in \mathbb{R}^{m_g\times n_g}$ consist of i.i.d. sparse Rademacher random variables with sparsity parameter $s$, and let $R_{g}= {Q^{g}}^\top Q^{g}/s$. Then with probability at least $1-\eta$:
$$
   \bigg|\frac{1}{2}\sum_{g=1}^{2}\frac{1}{n_g\, m_g} {Z^{g}}^{\top} R_{g} Z^{g} - 1\bigg|
   \leq    C\,
 K_s^{2} \,\sqrt{\frac{\log(\eta^{-1})}{m}},
$$
where $C>0$ is an absolute constant, and $K_{s} = \{s \log(1+s^{-1})\}^{-1/2}$ is the sub-gaussian norm of $Q_{i,j}^{g}/\sqrt{s}$.
\end{lemma}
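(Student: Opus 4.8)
The plan is to exploit the independence between the compression matrix $Q^g$ and the Gaussian vector $Z^g$ by first conditioning on $Q^g$, treating ${Z^g}^\top R_g Z^g$ as a Gaussian chaos, and then separately controlling the randomness in $\operatorname{tr}(R_g)$. First I would check the centering: conditionally on $Q^g$ one has $\mathbb{E}[{Z^g}^\top R_g Z^g\mid Q^g]=\operatorname{tr}(R_g)=s^{-1}\sum_{j=1}^{m_g}\sum_{i=1}^{n_g}(Q^g_{j,i})^2$, and since each $(Q^g_{j,i})^2$ is $\mathrm{Bernoulli}(s)$, $\mathbb{E}\operatorname{tr}(R_g)=n_g m_g$, so $\mathbb{E}\big[\tfrac12\sum_g \tfrac{1}{n_g m_g}{Z^g}^\top R_g Z^g\big]=1$. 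Accordingly, for each $g$ I split
\[
\frac{1}{n_g m_g}{Z^g}^\top R_g Z^g - 1
= \frac{{Z^g}^\top R_g Z^g - \operatorname{tr}(R_g)}{n_g m_g}
+ \frac{\operatorname{tr}(R_g) - n_g m_g}{n_g m_g},
\]
sum over $g=1,2$, and bound the two pieces separately by the triangle inequality.

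For the first piece I condition on $Q^g$ and apply the conditional Hanson--Wright bound (Lemma~\ref{lem:ConditionalHW}) to the Gaussian quadratic form ${Z^g}^\top R_g Z^g$, obtaining with conditional probability at least $1-\eta$ a deviation of order $\|R_g\|_F\sqrt{\log(\eta^{-1})}+\|R_g\|_{\mathrm{op}}\log(\eta^{-1})$. The essential point is to control these two norms. Since $R_g=s^{-1}{Q^g}^\top Q^g$ has rank at most $m_g$, we have $\|R_g\|_F\le\sqrt{m_g}\,\|R_g\|_{\mathrm{op}}=\sqrt{m_g}\,s^{-1}\|Q^g\|_{\mathrm{op}}^2$; and the compression-matrix norm bound (Lemma~\ref{lem:NormBounds}), which follows from the standard sub-Gaussian random-matrix estimate applied to the $m_g\times n_g$ matrix $Q^g$ with entries of sub-Gaussian norm $\sqrt{s}\,K_s$ (Lemma~\ref{lem:sub-GaussianNorm}) and $m_g\le n_g$, gives $\|Q^g\|_{\mathrm{op}}^2\le C\,s\,K_s^2\,n_g$ on a high-probability event. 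Hence $\|R_g\|_{\mathrm{op}}\le C\,K_s^2 n_g$ and $\|R_g\|_F\le C\,K_s^2 n_g\sqrt{m_g}$; dividing the Hanson--Wright deviation by $n_g m_g$ yields a bound of order $K_s^2\big(\sqrt{\log(\eta^{-1})/m_g}+\log(\eta^{-1})/m_g\big)$, whose first term dominates by Assumption~\ref{assump:Sizeofm}, and with $m_g=m/2$ (Assumption~\ref{assump:EqualClass}) this is $C K_s^2\sqrt{\log(\eta^{-1})/m}$. Intersecting the $Q^g$-norm event with the conditional Hanson--Wright event (a union bound over $g=1,2$, absorbed into constants) makes the bound unconditional.

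For the second piece, $\operatorname{tr}(R_g)=s^{-1}\sum_{j,i}(Q^g_{j,i})^2$ is a sum of $n_g m_g$ i.i.d.\ terms $s^{-1}(Q^g_{j,i})^2$ of mean $1$, each sub-exponential with $\|s^{-1}(Q^g_{j,i})^2\|_{\Psi_1}=\|s^{-1/2}Q^g_{j,i}\|_{\Psi_2}^2=K_s^2$ by Lemma~2.7.6 of \citet{vershynin2018high} together with Lemma~\ref{lem:sub-GaussianNorm}. Bernstein's inequality then gives $|\operatorname{tr}(R_g)-n_g m_g|\le C K_s^2\big(\sqrt{n_g m_g\log(\eta^{-1})}+\log(\eta^{-1})\big)$ with probability at least $1-\eta$, so after dividing by $n_g m_g$ this piece is of order $K_s^2\sqrt{\log(\eta^{-1})/(n_g m_g)}$ (the additive term negligible under Assumption~\ref{assump:Sizeofm}), which is smaller than the first-piece bound by a factor $n_g^{-1/2}$ and is absorbed into the absolute constant. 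Combining the two pieces and re-adjusting $\eta$ by constants gives the claim. The main obstacle is the first piece: a naive bound on $\|R_g\|_F$ treating $R_g$ as a generic $n_g\times n_g$ matrix would be of order $n_g$ and destroy the rate, so the key is the rank-$m_g$ observation $\|R_g\|_F\le\sqrt{m_g}\,\|R_g\|_{\mathrm{op}}$ combined with the sharp operator-norm bound $\|Q^g\|_{\mathrm{op}}\lesssim\sqrt{s K_s^2 n_g}$ from random-matrix theory.
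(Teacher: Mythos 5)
Your proposal is correct and follows essentially the same route as the paper: the same split into the conditional Hanson--Wright deviation ${Z^g}^\top R_g Z^g - \operatorname{tr}(R_g)$ and the trace deviation $\operatorname{tr}(R_g)-n_g m_g$, with the rank-$m_g$ bound $\|R_g\|_F\le\sqrt{m_g}\,\|R_g\|_{\mathrm{op}}$ and the sub-Gaussian operator-norm estimate for $Q^g$ doing the same work as the paper's Lemmas~\ref{lem:ConditionalHW} and \ref{lem:NormBounds}. The only (immaterial) divergence is that you invoke Bernstein for the sub-exponential trace term where the paper cites the general Hoeffding inequality; your choice is if anything the more careful one, and both yield the same lower-order rate.
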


\begin{proof}[Proof of Lemma~\ref{lem:UnconditionalHW}]
By Lemma~\ref{lem:ConditionalHW}, with probability at least $1-\eta/2$:
\begin{equation}
 \begin{split}\label{eq:lemma2start}
   \bigg|\frac{1}{2}\sum_{g=1}^{2}\frac{1}{n_g\, m_g}{Z^{g}}^{\top} R_{g} Z^{g} - 1\bigg|&\leq 
    \frac{1}{2}\sum_{g=1}^{2}\bigg|\frac{1}{n_g\, m_g}{Z^{g}}^{\top} R_{g} Z^{g} - 1\bigg| \\
    &\leq \frac{1}{2}\sum_{g=1}^{2}\bigg(
    \frac{C}{n_g}\|R_g\|_{\text{op}} \sqrt{\frac{\log(\eta^{-1})}{m_g}}+\bigg| \frac{1}{n_g\,m_g} \text{tr}(R_{g})-1\bigg|\bigg)
\end{split}
 \end{equation}
 for some absolute constant $C>0$. We bound each term individually.

By Lemma~\ref{lem:NormBounds}, with probability at least $1-\eta/2:$
\begin{align*}
     C\sum_{g=1}^{2}  \frac{1}{n_g}\|R_g\|_{\text{op}} \sqrt{\frac{\log(\eta^{-1})}{m_g}}
     &\leq
      C\sum_{g=1}^{2}K_{s}^{2}\bigg[ 1+\sqrt{\frac{\log(\eta^{-1})}{n_g}}\bigg]\sqrt{\frac{\log(\eta^{-1})}{m_g}}
     \leq 
 C\, K_s^{2} \sqrt{\frac{\log(\eta^{-1})}{m}},
\end{align*}
where we have absorbed the lower-order term into the absolute constant $C$ and used Assumption~\ref{assump:EqualClass}.

Since $\text{tr}(R_{g})=\|Q^{g}/\sqrt{s}\|_{F}^{2}$, by Hoeffding's Inequality, Theorem 2.6.3 of \citet{vershynin2018high}, the following inequalities hold with probability at least $1-\eta/2$:
\begin{align*}
   \bigg| \frac{1}{n_g\,m_g} \text{tr}(R_{g})-1\bigg|&=\frac{1}{2}\sum_{g=1}^{2} \bigg|\frac{1}{n_g\, m_g}\bigg\|\frac{1}{\sqrt{s}}Q^{g}\bigg\|_{F}^{2}-1\bigg|
   = \frac{1}{2}\sum_{g=1}^{2}\bigg|  \frac{1}{n_g\, m_g}\sum_{i=1}^{n_g}\sum_{j=1}^{m_g} \bigg\{\bigg(\frac{1}{\sqrt{s}}Q_{i,j}^{g}\bigg)^{2}-1\bigg\}\bigg|&\\
   &\leq 
   \frac{1}{2}\sum_{g=1}^{2}K_{s}^{2}\sqrt{\frac{\log(\eta^{-1})}{n_g\,m_g}}= 
 2\,K_{s}^{2}\, \sqrt{\frac{\log(\eta^{-1}) }{n\,m} },
\end{align*}
 where Assumption~\ref{assump:EqualClass} was used in the last equality.

Combining the above two displays with~\eqref{eq:lemma2start}, and absorbing the lower order terms gives
$$
   \bigg|\frac{1}{2}\sum_{g=1}^{2}\frac{1}{n_g\, m_g} {Z^{g}}^{\top} R_{g} Z^{g} - 1\bigg|
   \leq    C\,
 K_s^{2} \,\sqrt{\frac{\log(\eta^{-1})}{m}}
$$
with probability at least $1-\eta$ for some absolute constant $C>0$.\qedhere
\end{proof}

\vspace{.2in}

\begin{lemma}[Norm Bound]\label{lem:NormBounds}
Let $Q\in \mathbb{R}^{m\times n}$ be a matrix consisting of i.i.d. sparse Rademacher random variables with sparsity parameter $s$, and let $R= {Q}^\top Q/s$. Then with probability at least $1-\eta$:
\begin{align*}
    &\frac{\|R\|_{\text{op}}}{n\,m}\leq C \frac{K_s^2}{m}\bigg[ 1+\sqrt{\frac{\log(\eta^{-1})}{n}}\bigg],
\end{align*}
where $C>0$ is an absolute constant, and $K_{s} = \{s \log(1+s^{-1})\}^{-1/2}$ is the sub-gaussian norm of $Q_{i,j}/\sqrt{s}$.
\end{lemma}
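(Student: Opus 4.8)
The plan is to reduce the bound to the standard non-asymptotic estimate for the operator norm of a rectangular random matrix with independent, mean-zero, sub-Gaussian entries. First I would observe that, since $R = Q^\top Q/s$, we have $\|R\|_{\text{op}} = \|Q^\top Q\|_{\text{op}}/s = \|Q\|_{\text{op}}^2/s = \|Q/\sqrt{s}\|_{\text{op}}^2$. The entries of $Q/\sqrt{s}$ are i.i.d., mean zero by the symmetry in Definition~\ref{def:CompressionMatrix}, and sub-Gaussian with $\|Q_{i,j}/\sqrt{s}\|_{\Psi_2}= K_s$ by Lemma~\ref{lem:sub-GaussianNorm}, where $K_s=\{s\log(1+s^{-1})\}^{-1/2}$.

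Next I would apply Theorem~4.4.5 of \citet{vershynin2018high}: for every $t\ge 0$, with probability at least $1-2\exp(-t^2)$,
$$
\|Q/\sqrt{s}\|_{\text{op}}\le C\,K_s\,(\sqrt{m}+\sqrt{n}+t).
$$
In the compression regime $m\le n$, so $\sqrt{m}\le\sqrt{n}$ and the right-hand side is at most $C\,K_s\,(\sqrt{n}+t)$. Taking $t=\sqrt{\log(\eta^{-1})}$ (absorbing into the absolute constant the additive $\log 2$ produced by solving $2\exp(-t^2)=\eta$), squaring, and using $(a+b)^2\le 2a^2+2b^2$ gives, with probability at least $1-\eta$,
$$
\|R\|_{\text{op}}=\|Q/\sqrt{s}\|_{\text{op}}^2\le C\,K_s^2\,\big(n+\log(\eta^{-1})\big).
$$

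Finally, dividing by $n\,m$ yields $\|R\|_{\text{op}}/(n\,m)\le (C K_s^2/m)\big(1+\log(\eta^{-1})/n\big)$, and by Assumption~\ref{assump:Sizeofm} we have $\log(\eta^{-1})\le\sqrt{n}\le n$, hence $\log(\eta^{-1})/n\le 1$ and therefore $\log(\eta^{-1})/n\le\sqrt{\log(\eta^{-1})/n}$; substituting this bound produces the claimed inequality. The argument is essentially bookkeeping: the only points requiring attention are verifying the hypotheses of the matrix-norm theorem (the rescaled entries are mean zero and sub-Gaussian, the latter supplied by Lemma~\ref{lem:sub-GaussianNorm}), using $m\le n$ to discard the $\sqrt{m}$ term, and invoking Assumption~\ref{assump:Sizeofm} in the last step. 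I do not expect a genuine obstacle here beyond care with the absolute constants.
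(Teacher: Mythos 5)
Your proposal is correct and follows essentially the same route as the paper: both reduce $\|R\|_{\text{op}}$ to $\|Q/\sqrt{s}\|_{\text{op}}^{2}$, invoke Theorem~4.4.5 of \citet{vershynin2018high} with the sub-Gaussian norm $K_s$ from Lemma~\ref{lem:sub-GaussianNorm}, and then rescale by $(nm)^{-1}$. The only difference is cosmetic bookkeeping — you make explicit (via $m\le n$, $(a+b)^2\le 2a^2+2b^2$, and Assumption~\ref{assump:Sizeofm}) the step the paper summarizes as ``expanding the square and absorbing lower-order terms.''
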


\begin{proof}[Proof of Lemma~\ref{lem:NormBounds}]
By Lemma~\ref{lem:sub-GaussianNorm}, $K_{s} = \{s \log(1+s^{-1})\}^{-1/2}$ is the sub-Gaussian norm of $Q_{i,j}/\sqrt{s}$. By Theorem 4.4.5 of \citet{vershynin2018high}, with probability at least $1-\eta:$

\begin{align*}
\|R\|_{\text{op}}&= \bigg\|\frac{1}{\sqrt{s}} Q \bigg\|_{\text{op}}^{2}\leq C K_s^2 (\sqrt{m}+\sqrt{n}+\sqrt{\log(\eta^{-1})})^{2}
\end{align*}
Including the scaling $(n\,m)^{-1}$ gives
\begin{align*}
 \frac{\|R\|_{\text{op}}}{n\,m}&=\frac{C K_s^2}{n\,m} (\sqrt{m}+\sqrt{n}+\sqrt{\log(\eta^{-1})})^{2}= 
 \frac{C\,K_{s}^{2}}{m} \bigg[\sqrt{\frac{m}{n}}+1 + \sqrt{\frac{\log(\eta^{-1})}{n}}\bigg]^{2}\\
 &\leq \frac{C K_{s}^{2}}{m}\bigg[ 2+ \sqrt{\frac{\log(\eta^{-1})}{n}}\bigg]^{2}
 \leq \frac{C K_{s}^{2}}{m}\bigg[ 1 +\sqrt{\frac{\log(\eta^{-1})}{n}}\bigg],
\end{align*}
where we have expanded the square and absorbed the lower-order terms into the absolute constant $C>0$. \qedhere
\end{proof}

\vspace{.2in}

\begin{lemma}[Conditional Hanson-Wright]\label{lem:ConditionalHW}
Let $Z\sim N(0, I_{n})$, and let $R\in\mathbb{R}^{n\times n}$ be a matrix of rank $m$. Conditioning on $R$, and for $\eta \in (0, e^{-1})$, the following upper bound holds with probability at least $1-\eta:$
\begin{align}
\begin{split}
&\bigg|\frac{1}{n\, m} Z^{\top} R Z - 1\bigg|\leq  \frac{C}{n}\|R\|_{\text{op}} \sqrt{\frac{\log(\eta^{-1})}{m}}+\bigg| \frac{1}{n\,m} \text{tr}(R)-1\bigg|,
\end{split}
\end{align}
where $C>0$ is an absolute constant. 
\end{lemma}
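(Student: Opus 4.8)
The plan is to decompose the quantity via the triangle inequality and then apply the Gaussian Hanson--Wright inequality to the centered quadratic form. Since we condition on $R$ we may treat it as deterministic, and since $Z\sim N(0,I_n)$ we have $\mathbb{E}[Z^{\top}RZ]=\text{tr}(R)$. Hence
\[
\bigg|\frac{1}{n\,m}Z^{\top}RZ-1\bigg|
\le \frac{1}{n\,m}\big|Z^{\top}RZ-\text{tr}(R)\big|
+\bigg|\frac{1}{n\,m}\text{tr}(R)-1\bigg|,
\]
and the second summand is already the second term in the claimed bound. It remains to control the centered term $\frac{1}{nm}|Z^{\top}RZ-\text{tr}(R)|$.

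For this I would invoke the Hanson--Wright inequality (e.g. Theorem 6.2.1 of \citet{vershynin2018high}): for standard Gaussian $Z$ and any matrix $R$,
\[
\mathbb{P}\big(|Z^{\top}RZ-\text{tr}(R)|\ge t\big)
\le 2\exp\!\Big(-c\min\!\big(t^{2}/\|R\|_{F}^{2},\; t/\|R\|_{\text{op}}\big)\Big),
\]
where the quadratic form sees only the symmetric part of $R$, whose operator norm is at most $\|R\|_{\text{op}}$ and whose rank is at most $2m$ (this costs only absolute constants). Setting the right-hand side equal to $\eta$ and solving for $t$ yields, with probability at least $1-\eta$,
\[
|Z^{\top}RZ-\text{tr}(R)|\le C\big(\|R\|_{F}\sqrt{\log(\eta^{-1})}+\|R\|_{\text{op}}\log(\eta^{-1})\big).
\]

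The key structural input is the rank hypothesis: since $R$ has rank $m$ (rank at most $2m$ after symmetrization), it has at most $2m$ nonzero singular values, each bounded by $\|R\|_{\text{op}}$, so $\|R\|_{F}\le\sqrt{2m}\,\|R\|_{\text{op}}$. Substituting this and dividing by $n\,m$ gives
\[
\frac{1}{n\,m}|Z^{\top}RZ-\text{tr}(R)|
\le \frac{C\,\|R\|_{\text{op}}}{n}\bigg(\sqrt{\frac{\log(\eta^{-1})}{m}}+\frac{\log(\eta^{-1})}{m}\bigg).
\]
Finally, since $\eta\in(0,e^{-1})$ and $m$ satisfies Assumption~\ref{assump:Sizeofm}, we have $\log(\eta^{-1})/m\le 1$, whence $\log(\eta^{-1})/m\le\sqrt{\log(\eta^{-1})/m}$; the linear-tail term is absorbed into the sub-Gaussian-type term at the cost of doubling $C$. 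Combining with the triangle-inequality decomposition above completes the proof. The one nontrivial idea here — and the main thing to get right — is recognizing that the rank bound on $R$ is precisely what converts the $\|R\|_{F}$ appearing in Hanson--Wright into $\sqrt{m}\,\|R\|_{\text{op}}$, producing the $\sqrt{\log(\eta^{-1})/m}$ scaling; everything else is routine bookkeeping of the two Hanson--Wright regimes together with an appeal to Assumption~\ref{assump:Sizeofm} to discard the lower-order linear contribution.
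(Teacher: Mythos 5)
Your proposal is correct and follows essentially the same route as the paper's proof: triangle inequality about $\operatorname{tr}(R)$, the conditional Hanson--Wright bound, the rank-based estimate $\|R\|_{F}\leq \sqrt{m}\,\|R\|_{\text{op}}$, and absorption of the linear-tail term using $\log(\eta^{-1})\leq m$. The only (harmless) addition is your remark about symmetrizing $R$, which is moot in the paper's application since $R={Q}^{\top}Q/s$ is already symmetric.
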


\begin{proof}[Proof of Lemma~\ref{lem:ConditionalHW}]
Since $Z\sim N(0, I_{n})$, the conditional expectation equals
$$
\mathbb{E}[Z^{\top}R Z\,|\,R] = \text{tr}(R\,I_{n})+0^\top R \,0= \text{tr}(R).
$$
The Hanson-Wright Inequality, Theorem 6.2.1 of \citet{vershynin2018high}, gives the conditional tail bound
\begin{align*}
\mathbb{P}(|Z^{\top} R Z -\text{tr}(R)\,|\,\geq t\, n\,m\,|\, R)
&=\mathbb{P}(|Z^{\top} R Z- \mathbb{E}[Z^\top R \,Z\,|\, R]|\geq t\,n\,m\,|\, R)\\
&\leq2 \exp\bigg(-C\,\text{min}\bigg( \frac{t^2\, m^{2}\,n^2}{\|R\|^{2}_{F}}\,,\, \frac{t\, m\,n}{\|R\|_{\text{op}}}\bigg) \bigg)
\end{align*}
for some absolute $C>0$. This is equivalent to the following upper bound holding with probability at least $1-\eta$:
$$
\frac{1}{n\, m}|Z^{\top} R Z-\text{tr}(R)|\leq 
 \frac{C}{m \, n} \max\bigg\{\|R\|_{F} \sqrt{\log(\eta^{-1})}\,,\, \|R\|_{\text{op}}\log(\eta^{-1})\bigg\}.
$$
 Using the fact that $\|R\|_{F}\leq \sqrt{m}\|R\|_{\text{op}}$ and $m\geq \log(\eta^{-1})$ for $\eta \leq e^{-1}$, this is further bounded by
\begin{equation}\label{eq:HWconditional}
\begin{split}
 \frac{1}{n\, m}|Z^{\top} R Z-\text{tr}(R)|&\leq \frac{C}{m\, n}
 \max\bigg\{ \sqrt{m}\|R\|_{\text{op}} \sqrt{\log(\eta^{-1})}\,,\, \|R\|_{\text{op}}\log(\eta^{-1})\bigg\}\\
 &\leq
 \frac{C}{n} \|R\|_{\text{op}}\max\bigg\{ \sqrt{\frac{\log(\eta^{-1})}{m}}\,,\,
 \frac{\log(\eta^{-1})}{m}\bigg\} = \frac{C}{n}\|R\|_{\text{op}} \sqrt{\frac{\log(\eta^{-1})}{m}}.
 \end{split}
 \end{equation}

Applying the triangle inequality and substituting \eqref{eq:HWconditional} gives the final result:
 \begin{align*}
   \bigg|\frac{1}{n\, m}Z^{\top}RZ-1\bigg|&
   \leq \bigg|\frac{1}{n\,m}Z^{\top}RZ-\frac{1}{n\,m}\,\text{tr}(R)\bigg|+ \bigg|\frac{\text{tr}(R)}{n\,m}- 1\bigg|\\
    &
     \leq  \frac{C}{n}\|R\|_{\text{op}} \sqrt{\frac{\log(\eta^{-1})}{m}}+\bigg| \frac{\text{tr}(R)}{n\,m}-1\bigg|.\qedhere
\end{align*}
\end{proof}

\begin{lemma}[Sub-Gaussian Norm]\label{lem:sub-GaussianNorm}
Let $X$ be sparse Rademacher random variable satisfying for some $s\in (0,1)$
$$
P(X = 0)=1-s,\quad P(X = 1) =P(X = -1)= s/2.
$$
Then the sub-Gaussian norm of $X$ is $K = \{\log(1 + s^{-1})\}^{-1/2}$, and the sub-Gaussian norm of $X/\sqrt{s}$ is $K_s = \{s\log(1 + s^{-1})\}^{-1/2}$. Additionally, the sub-Gaussian norm of $X^2/s$ is $s^{-1} \{\log(1+s^{-1})\}^{-1/2}.$
\end{lemma}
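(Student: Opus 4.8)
The plan is to argue directly from the definition $\|Z\|_{\Psi_2}= \inf\{t>0: \mathbb{E}\exp(Z^{2}/t^{2}) \leq 2\}$, since for a random variable supported on $\{-1,0,1\}$ the relevant expectation is an explicit elementary function of $t$ whose level set can be solved exactly.

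First I would record the elementary identity that since $X$ takes values in $\{-1,0,1\}$, we have $X^{2k}=X^{2}$ for every integer $k\geq 1$; in particular $X^{2}$ is a Bernoulli$(s)$ variable. Hence for any $t>0$,
$$\mathbb{E}\exp(X^{2}/t^{2}) = (1-s) + s\,e^{1/t^{2}},$$
and the constraint $\mathbb{E}\exp(X^{2}/t^{2})\leq 2$ is equivalent to $s\,e^{1/t^{2}}\leq 1+s$, i.e.\ $e^{1/t^{2}}\leq 1+s^{-1}$, i.e.\ $t^{2}\geq \{\log(1+s^{-1})\}^{-1}$. The map $t\mapsto \mathbb{E}\exp(X^{2}/t^{2})$ is continuous and strictly decreasing on $(0,\infty)$, tending to $\infty$ as $t\to 0^{+}$ and to $1$ as $t\to\infty$, so the admissible set is exactly $[\,\{\log(1+s^{-1})\}^{-1/2},\infty)$ and the infimum is attained. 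This gives $\|X\|_{\Psi_2}=K:=\{\log(1+s^{-1})\}^{-1/2}$, with equality $\mathbb{E}\exp(X^{2}/t^{2})=2$ at the optimal $t$.

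For the two scaled quantities I would use the homogeneity $\|cZ\|_{\Psi_2}=|c|\,\|Z\|_{\Psi_2}$, which follows from the definition by the substitution $u=t/|c|$ inside the infimum (so that $c^{2}Z^{2}/t^{2}=Z^{2}/u^{2}$). Taking $c=s^{-1/2}$ yields $\|X/\sqrt{s}\|_{\Psi_2}=s^{-1/2}K=\{s\log(1+s^{-1})\}^{-1/2}=K_{s}$. For $X^{2}/s$, I would first note $\|X^{2}\|_{\Psi_2}=\|X\|_{\Psi_2}=K$, because $X^{4}=X^{2}$ makes the defining expectation $\mathbb{E}\exp((X^{2})^{2}/t^{2})=\mathbb{E}\exp(X^{2}/t^{2})$ literally the same function of $t$ as for $X$; then homogeneity with $c=s^{-1}$ gives $\|X^{2}/s\|_{\Psi_2}=s^{-1}K=s^{-1}\{\log(1+s^{-1})\}^{-1/2}$.

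There is no substantive obstacle here: every step is an exact computation. The only two points that merit a line of care are (i) verifying that the infimum in the definition of $\|\cdot\|_{\Psi_2}$ is genuinely achieved (handled by the monotonicity/continuity remark above, so that the claimed values are exact equalities, not merely upper bounds), and (ii) the repeated use of the identity $X^{2k}=X^{2}$ to collapse the moment computations for $X^{2}$ and its powers back to the Bernoulli$(s)$ case.
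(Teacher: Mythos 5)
Your proposal is correct and follows essentially the same route as the paper's proof: an explicit computation of $\mathbb{E}\exp(X^{2}/t^{2})$ for the three-point distribution, solving the resulting inequality for $t$, and then invoking homogeneity of $\|\cdot\|_{\Psi_2}$ together with the identity $X^{4}=X^{2}$ to handle the scaled and squared variables. Your added remarks on monotonicity and attainment of the infimum are a minor tightening of the same argument, not a different approach.
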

\begin{proof}
By definition of sub-Gaussian norm,
$$
K = \inf\{t>0: \mathbb{E}\exp(X^2/t^2)\leq 2\}.
$$
Consider for some $t>0$,
\begin{align*}
    \mathbb{E}\exp(X^2/t^2) = \exp(0/t^2)(1-s) + \exp(1/t^2)s= 1-s + \exp(1/t^2)s.
\end{align*}
Then $\mathbb{E}\exp(X^2/t^2) \leq 2$ is equivalent to
\begin{align*}
    1-s + \exp(1/t^2)s&\leq 2\\
    \exp(1/t^2)s&\leq 1 + s\\
    \exp(1/t^2) &\leq 1 + s^{-1}\\
    1/t^2 &\leq \log(1+ s^{-1})\\
    t^2&\geq \{\log(1+ s^{-1})\}^{-1}.
\end{align*}
The term $K_s = \{s \log(1+s^{-1})\}^{-1/2}$ follows from scaling $X$ by $s^{-1/2}.$

Additionally, the sub-gaussian norm of the squared $X^2/s$ is
$
\big\| X^2 / s \|_{\Psi_2} = \|X^2\|_{\Psi_2}/s.
$
Because $X$ has values $0$ and $\pm 1$, it follows that $X^{4}=X^{2}$. Thus,
\begin{align*}
     \mathbb{E}\exp(X^{4}/t^2)&\leq 2\\
     \mathbb{E}\exp(X^{2}/t^2)& \leq 2\\
     \exp(1/t^2)s &\leq 1+s\\
     \exp(1/t^2)&\leq 1+ s^{-1}\\
     t/t^{2}&\leq \log(1+s^{-1})\\
     t^{2}&\geq \{\log(1+s^{-1})\}^{-1}.
\end{align*}
Hence, the sub-gaussian norm of $X^2/s$ is $s^{-1}\{\log(1+s^{-1})\}^{-1/2}.$ \qedhere

\end{proof}

\section{Extension of Theorem 1 to unequal class sizes}\label{appen:Extension}
Assumption~\ref{assump:EqualClass} simplifies the statement of Theorem~\ref{thm:MisclassificationError} and reduces the technical complexity of the proofs. An analogous result with the same rate of convergence holds in the setting of unequal class sizes $n_1 \neq n_2$ and unequal class prior probabilities $\pi_1 \neq \pi_2$, but with more complicated expressions for the constants. This section outlines the adjustments necessary for extending Theorem~\ref{thm:MisclassificationError} to the general case. 

The Bayes error rate $R_{\text{opt}}$ under Assumption~\ref{assump:Normality}, but with $\pi_1\neq \pi_2$, is
\begin{align*}
R_{\text{opt}}&:= \bigg\{ 1- \Phi\bigg(\frac{(\mu_2-\mu_1)^\top\Sigma_{w}^{-1}(\mu_2 - \mu_1)/2 + \log(\pi_1/\pi_2) }{\sqrt{(\mu_2-\mu_1)^\top\Sigma_{w}^{-1}(\mu_2-\mu_1)}}\bigg)\bigg\}\pi_1\\
&\qquad\qquad+\Phi\bigg(\frac{-(\mu_2-\mu_1)^\top\Sigma_{w}^{-1}(\mu_2-\mu_1)/2 + \log(\pi_1/\pi_2)}{
\sqrt{(\mu_2-\mu_1)^\top\Sigma_{w}^{-1}(\mu_2-\mu_1)}
}\bigg)\pi_2.
\end{align*}
The misclassification error rate of decision rule \eqref{eq:ClassificationRule} using the compressed discriminant vector $\beta_{\mathbf{c}}$ is
\begin{align*}
R_{\mathbf{c}}&= \bigg\{1-\Phi\bigg(\frac{\beta_{\mathbf{c}}^\top (\overline{X}_{1}+\overline{X}_{2})/2 + \log(n_1/n_2) - \beta_{\mathbf{c}}^\top\mu_1}{\sqrt{\beta_\mathbf{c}^\top \Sigma_w \beta_{\mathbf{c}} }}\bigg)\bigg\} \pi_1\\
&\qquad\qquad+ 
\Phi\bigg(\frac{\beta^\top (\overline{X}_{1}+\overline{X}_{2})/2 + \log(n_1/n_2) - \beta_{\mathbf{c}}^\top\mu_2}{\sqrt{\beta_{\mathbf{c}}^\top \Sigma_w \beta_{\mathbf{c}} }}\bigg) \pi_2.
\end{align*}

We now outline the main changes in the proof of Theorem~\ref{thm:MisclassificationError} which guarantees that $|R_{\mathbf{c}} - R_{\text{opt}}|$ converges to $0$ with the same rate under the setting $\pi_1 \neq \pi_2$. As with Theorem~\ref{thm:MisclassificationError}, taking the Taylor expansion leads to $R_{\mathbf{c}} = R_{\text{opt}}+\text{ error terms}$, where the error terms depend on the difference of the sample estimates $\widehat{\Sigma}_{w, \mathbf{c}}$, $\overline{X}_{g}$, and $n_g/n$ from the corresponding population values $\Sigma_{w}$, $\mu_g$, and $\pi_g$. Since the sample terms are the same, their rates of convergence to the populations parameters are established in Theorems~\ref{thm:MeanInvCovariance} through \ref{thm:SigmaBound}. The new term $\log(n_1/n_2)$ converges to $\log(\pi_1/\pi_2)$ with $n^{-1/2}$ rate as a consequence of Hoeffding's inequality, see e.g. Lemma~11 in \citet{gaynanova2020prediction}.



\section{Additional simulation studies}

\subsection{Compression Matrix Comparison}
In this section, we investigate the use of different sparse compression matrices. We consider sparse Rademacher, sparse Gaussian, and count sketch matrices for compressed LDA with the same sparsity level $s=0.01$ and repeat the MNIST simulation in Section~\ref{sec:Simulations}.
Figure \ref{fig:MatrixComparison} displays the error rates across $100$ independent replications compressed LDA using three different compression matrices and different reduced sample amounts $m$ on the MNIST Data. The error rates across all compression levels $m$ are nearly indistinguishable. For instance, at $m= 2,000$, sparse Rademacher matrices have a mean error rate of $ 14.01\%$ (se $0.04\%$), sparse Gaussian matrices have a mean error rate of $14.00\%$ (se $0.04\%$), and count sketch matrices have a mean error rate of $13.94\%$ (se $0.04\%$). This shows that the sparsity parameter determines predictive accuracy more than the distribution of non-zero elements.

\begin{figure}
\centering
\includegraphics[width=4in]{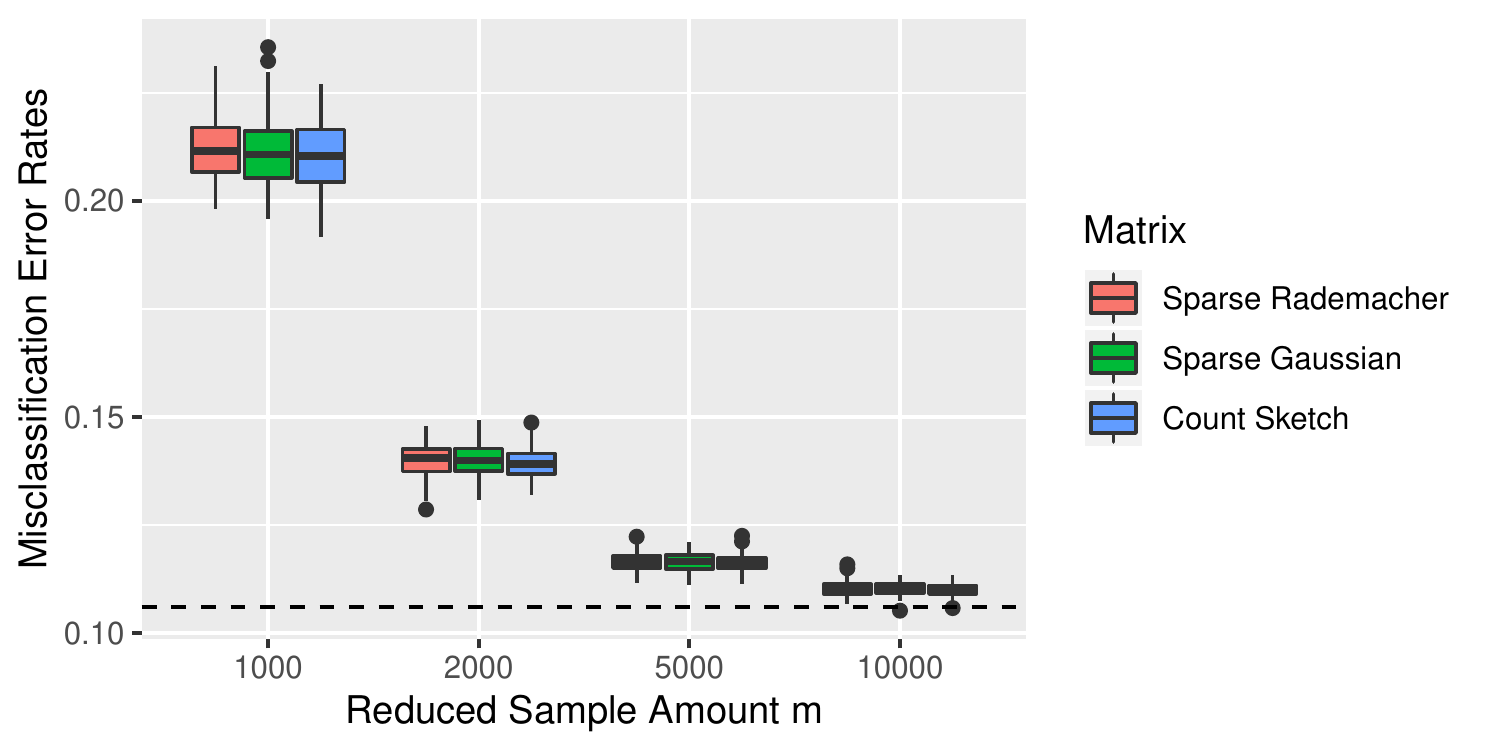} 
\caption{MNIST Data.  Misclassification error rates across 100 replications of compressed LDA using different compression matrices for each value of $m$ with $s=0.01$ and $\gamma=10^{-3}$.  The dashed line represents the $10.60\%$ error rate of Full LDA. }
\label{fig:MatrixComparison}
\end{figure} 

\subsection{Skewed MNIST Data}
In this section we investigate the performance of the proposed methods when the equal class size Assumption~\ref{assump:EqualClass} is violated. That is, we sub-sample MNIST data to have $1/3$-rd of class 1 and the remaining $2/3$-rds of class 2. Figure~\ref{fig:SkewedMNIST} displays the error rates across $100$ independent iterations of the linear classifiers. Compressed LDA consistently has the lowest error rates across all tested compression levels $m$. For example, at $m=2,000$, compressed LDA has an error rate of $14.36\%$ (se $0.03\%$), while projected LDA has $ 15.40\%$ (se $0.06\%$), and sub-sampled LDA has $16.56\%$ (se $0.05\%$). FRF has the worst classification performance across all values of $m$. For instance, at $m=2,000$ its mean error rate is $17.61\%$ (se $0.05\%$). Thus, when the class proportions are skewed, the proposed methods have lower misclassification error rates compared to FRF and sub-sampling.   

\begin{figure}
\centering
\includegraphics[width=4in]{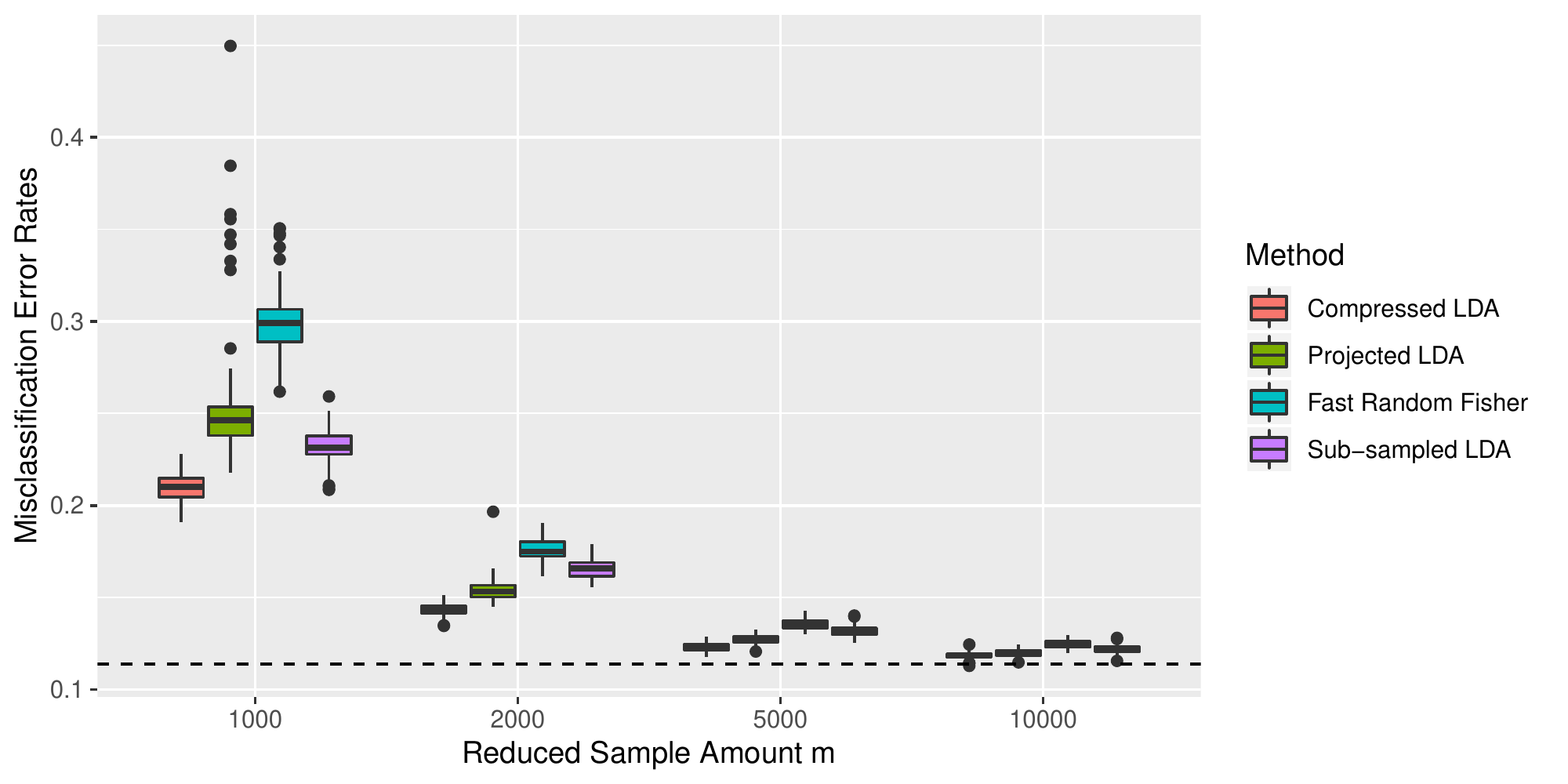} 
\caption{Skewed MNIST Data.  Misclassification error rates across 100 replications of linear methods $(i)-(v)$ across different values of $m$ with $s=0.01$ and $\gamma=10^{-3}$. The training and testing data was skewed so that class $1$ comprises $33.3\%$ of the data and class $2$ the remaining $66.6\%$. The dashed line represents the $11.4\%$ error rate of Full LDA. }
\label{fig:SkewedMNIST}
\end{figure}

\subsection{Sparsity Comparison Simulation} 
In this Section, we investigate the influence of the sparsity parameter $s$ on the performance of the linear classifiers. We rerun the MNIST simulation of Section~\ref{sec:Simulations} with $m=5,000$ fixed reduced samples, but varying sparsity level $s$.
Figure~\ref{fig:MNISTsparsity} displays the error rates for compressed LDA, projected LDA, and FRF across 100 independent iterations. The error rates appear to be stable for sparsity levels between $10^{-1}$ and $10^{-3}$. For such $s$, both compressed and projected LDA have uniformly lower error rates compared to FRF. For example, at $s=10^{-3}$, compressed LDA has mean error rate $11.67\%$ (se $0.02\%$) and projected LDA has mean error rate $11.73\%$ (se $0.03\%$). This is compared to FRF's mean error rate $12.33\%$ (se $0.03\%$). 

However, for the very sparse $s=10^{-4}$, projected LDA has significantly larger error rates compared to either compressed LDA or FRF. We hypothesize this is because the sparsity level is so small that for a significant number of compressed samples, no training samples are used in construction leading to effectively smaller $m$. FRF partially avoids this because it draws from both classes simultaneously when forming each compressed sample. Thus, it draws from a larger pool of samples and is more likely to sample at least one training sample. However, FRF doesn't avoid this problem entirely because there are large outliers in the error rates for $s=10^{-4}$.

\begin{figure}
\centering
\includegraphics[width=4in]{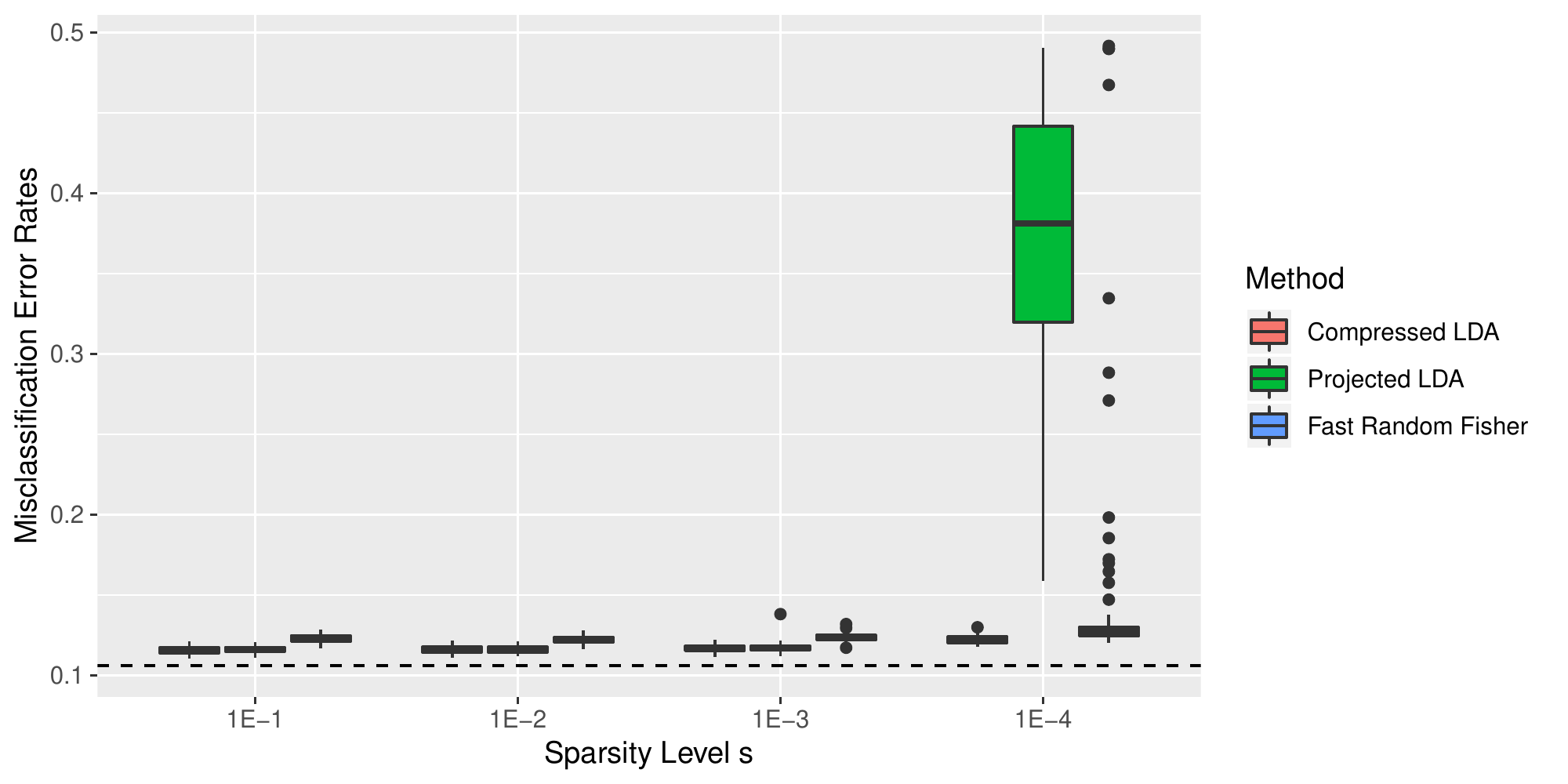} 
\caption{MNIST Data.  Misclassification error rates across 100 replications of compressed LDA, projected LDA, and FRF across different sparsity levels $s$ with $m=5,000$ reduced samples and $\gamma=10^{-3}$. The dashed line represents the $10.60\%$ error rate of Full LDA. }
\label{fig:MNISTsparsity}
\end{figure}

\subsection{Heavy-tailed Data}
To investigate the behavior of the linear classification methods when the normality Assumption~\ref{assump:Normality} is violated, we generate $n=10,000$ samples from a multivariate $t$-distribution with $p=100$ features and $5$ degrees of freedom. The class covariance matrices have coordinates $(\Sigma_{w})_{i,j}=(0.9)^{|i-j|}$, and the class means are $\mu_g = (-1)^{g}\mathbf{1}$. 

\begin{figure}[!t]
\centering
\includegraphics[width=4in]{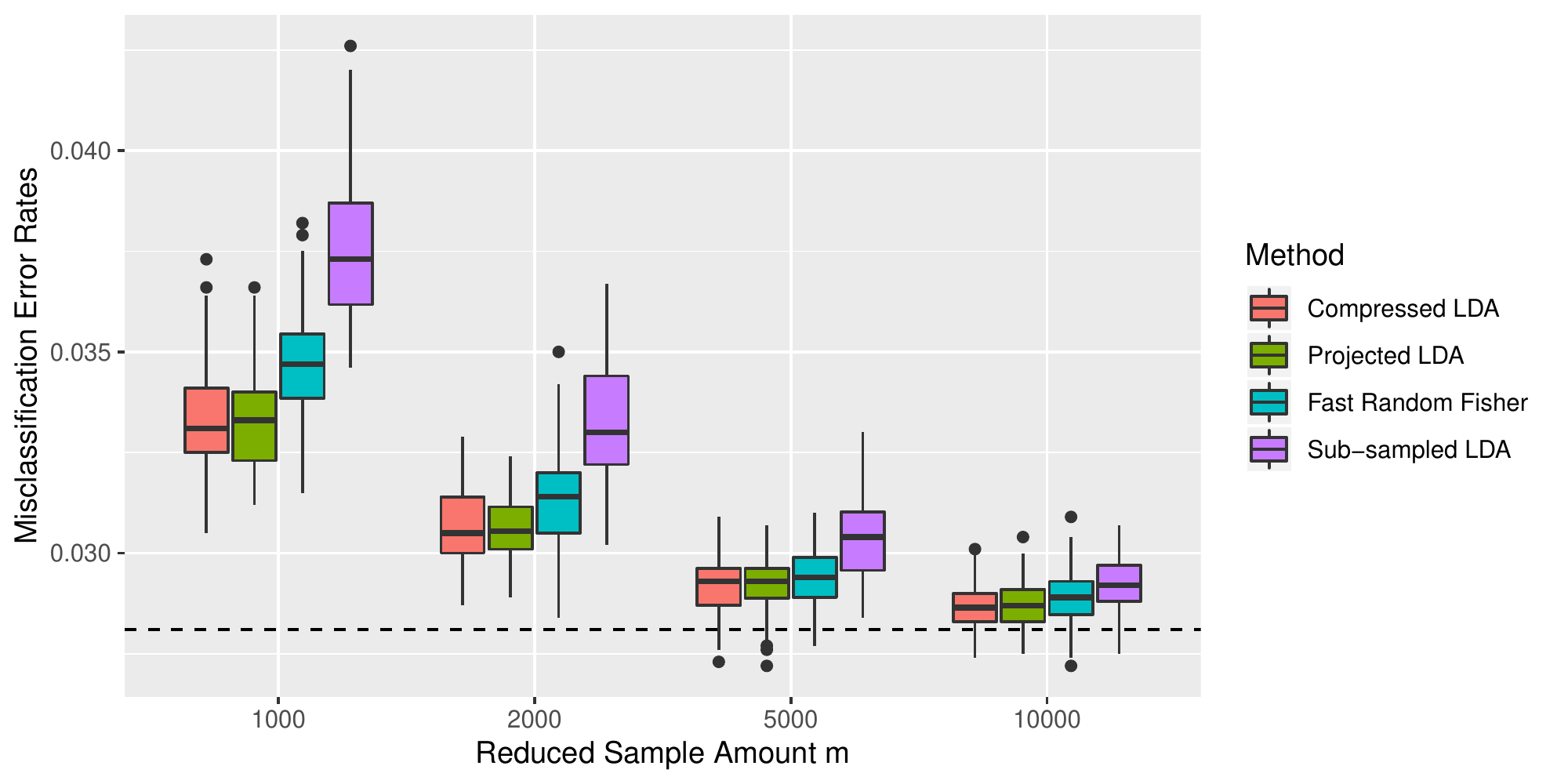} 
\caption{Heavy-tailed Data.  Misclassification error rates across 100 replications of compressed LDA, projected LDA, FRF, and sub-sampled LDA across different smaple amounts $m=5,000$ for $s=0.01$ and $\gamma=10^{-3}$. The dashed line represents the $2.81\%$ error rate of Full LDA. }
\label{fig:HeavyTail}
\end{figure}

Figure~\ref{fig:HeavyTail} displays the error rates across $100$ independent iterations of the linear classifiers. Compressed and projected LDA consistently have the lowest error rates while sub-sampling has the highest error rates. For example, at $m=2,000$ samples, compressed LDA has a mean error rate of $3.06\%$ (se $0.01\%$), and projected LDA has a mean error rate of $3.06\%$ (se $0.01\%$). This is compared with FRF which has a mean error rate of $3.1\%$ (se $0.01\%$), and sub-sampled LDA which has a mean error rate of $3.32\%$ (se $0.01\%$). This suggests that sub-sampling outliers (a result of the heavy tails) influences FRF and sub-sampled LDA's performance more compared to compressed and projected LDA. We suspect the reason is that our proposed compression scheme appears to induce normality, and so the proposed compressed and projected LDA are robust to heavy tails.

\end{appendix}


\bibliography{main}

\end{document}